\title{Knowledge-Guided Wasserstein Distributionally Robust Optimization}
\author[1]{Zitao Wang}
\author[2]{Ziyuan Wang}
\author[3]{Molei Liu\thanks{These authors contributed equally to this work.}}
\author[4]{Nian Si$^*$}
\affil[1]{Department of Statistics, Columbia University.}
\affil[2]{Department of Industrial Engineering and Management Sciences, Northwestern University.}
\affil[3]{Department of Biostatistics, Columbia Mailman School of Public Health.}
\affil[4]{Department of Industrial Engineering and Decision Analytics, Hong Kong University of Science and Technology.}
\date{}
\begin{document}

\setstretch{1.1} 
\maketitle

\begin{abstract}
Transfer learning is a popular strategy to leverage external knowledge and improve statistical efficiency, particularly with a limited target sample. We propose a novel knowledge-guided Wasserstein Distributionally Robust Optimization (KG-WDRO) framework that adaptively incorporates multiple sources of external knowledge to overcome the conservativeness of vanilla WDRO, which often results in overly pessimistic shrinkage toward zero. Our method constructs smaller Wasserstein ambiguity sets by controlling the transportation along directions informed by the source knowledge. This strategy can alleviate perturbations on the predictive projection of the covariates and protect against information loss. Theoretically, we establish the equivalence between our WDRO formulation and the knowledge-guided shrinkage estimation based on collinear similarity, ensuring tractability and geometrizing the feasible set. This also reveals a novel and general interpretation for recent shrinkage-based transfer learning approaches from the perspective of distributional robustness. In addition, our framework can adjust for scaling differences in the regression models between the source and target and accommodates general types of regularization such as lasso and ridge. Extensive simulations demonstrate the superior performance and adaptivity of KG-WDRO in enhancing small-sample transfer learning.

\vspace{0.3cm}
\noindent \textbf{Keywords:} Wasserstein distributionally robust optimization; Knowledge-guided learning; Difference-of-convex optimization; Shrinkage-based transfer learning.

\end{abstract}

\section{Introduction}
Traditional machine learning methods or empirical risk minimization often suffer from overfitting and a lack of generalization power, particularly in high-dimensional and small-sample-size settings. In recent years, distributionally robust optimization (DRO) has emerged as a powerful framework for mitigating the effects of model misspecification and enhancing robustness in machine learning generalizations. Among various DRO formulations, Wasserstein-DRO (WDRO) gained more attention due to its  tractability and generalizability. Specifically, in WDRO, one  optimizes over worst-case distributions within an ambiguity set defined by a Wasserstein ball centered at an empirical measure.  

However, one persistent challenge with WDRO is its tendency to be overly conservative, which can lead to suboptimal performance in practice as found in \cite{liu2024rethinkingdistributionshifts}. In many real-world scenarios, prior knowledge can be leveraged to improve model performance and robustness. For example, in electronic healthcare record data, prior knowledge might come from predictive models trained on existing large, population-wide datasets. In such a context, transfer learning has proven to be a versatile approach for  improving performance on a target task. Despite its successes, the integration of prior knowledge into WDRO frameworks has remained an open question.

In this work, we introduce Knowledge-Guided Wasserstein Distributionally Robust Optimization (KG-WDRO), a novel framework that adapts the Wasserstein ambiguity set using external knowledge (parameters). We assume access to prior predictors of pre-trained models, which can guide the predictive model in the target dataset. By constraining the transport cost along directions informed by prior knowledge, our approach addresses the conservativeness of vanilla WDRO while preserving robustness. Intuitively, this strategy allows the model to focus its uncertainty on regions where prior knowledge is less reliable, effectively robustify knowledge-guided generalization.

\subsection{Related Works}
\subsubsection{Wasserstein DRO}
Wasserstein DRO  has recently garnered significant attention due to its tractability \citep{blanchet2019quantifying, mohajerin2018data, gao2023distributionally} and generalizability \citep{blanchet2019rwpi, gao2022}. Notably, \citet{blanchet2019rwpi} and \citet{gao2022} demonstrate that Wasserstein DRO with mean square loss is equivalent to the square root lasso \cite{belloni2011sqrtlasso}. Similarly, \citet{abadeh2015drologistic, abadeh2019regularizationviamass, blanchet2019rwpi, gao2022} establish that Wasserstein DRO with logistic loss and hinge loss corresponds to their regularized counterparts. Moreover, the statistical properties of the WDRO estimator have also been investigated in \cite{blanchet2021statistical, blanchet2022confidence, gao2023finite}. However, leveraging external knowledge in Wasserstein DRO has been an open problem.

\subsubsection{Transfer Learning}\label{sec:review:tl}
Improving prediction accuracy for target populations by integrating diverse source datasets has driven methodological advances in transfer learning. Contemporary approaches aim to address challenges including distributional heterogeneity and limited labeled target data. A common assumption is that the target outcome model aligns partially with source models, enabling knowledge transfer. For example, recent frameworks employ selective parameter reduction to identify transferable sources and sparse or ridge shrinkage to leverage their knowledge \citep{bastani2020predicting,li2021translasso,tian2023transglm}. Subsequent works tackle covariate distribution mismatches and semi-supervised scenarios, enhancing robustness when labeled target data is scarce \citep{cai2022semi,he2024transfusion,zhou2024Model}. Further innovations include geometric or profile-based adaptations, where the target model is represented as a weighted combination of source coefficients \citep{gu2024angle-based,lin2024profiled}. 

\begin{table*}[!ht]
\caption{Overview of recent transfer learning techniques. Each column represents a key capability:  
\textbf{Ridge-type / Lasso-type} - Regularization type used;  
\textbf{Scale Adjustment} - Robustness against feature-wise scaling;  
\textbf{Continuous outcome / Binary outcome} - Supports regression or classification;  
\textbf{Partial Transfer} - Selections of prior knowledge;  
\textbf{Multi-Source ensemble} - Profiles on multiple prior knowledges.}
\label{tab:comparison}
\vskip 0.1in
\begin{center}
\begin{small}
\setlength{\tabcolsep}{3pt} 
\renewcommand{\arraystretch}{1.1} 
\begin{tabular}{lccccccc}
\toprule
Methods & \makecell{Ridge \\ -type}  & \makecell{Lasso \\ -type} & \makecell{Scale \\ Adjustment} & \makecell{Continuous \\ outcome} & \makecell{Binary \\ outcome} & \makecell{Partial \\ Transfer} & \makecell{Multi-Source \\ ensemble} \\
\midrule
KG-WDRO   & $\checkmark$ & $\checkmark$ & $\checkmark$ & $\checkmark$ &  $\checkmark$ & $\checkmark$& $\checkmark$\\
\citet{bastani2020predicting} & $\checkmark$ &  & & $\checkmark$ & & & \\
\citet{li2021translasso} & & $\checkmark$ &  & $\checkmark$ &   &  &  \\
\citet{tian2023transglm} &  & $\checkmark$ & & $\checkmark$ &  $\checkmark$ &  &  \\
\citet{gu2024angle-based} & $\checkmark$ &  & $\checkmark$ & $\checkmark$ &  &  & $\checkmark$ \\
\citet{lin2024profiled} &  & $\checkmark$ & $\checkmark$ & $\checkmark$ &   &  & $\checkmark$ \\
\bottomrule
\end{tabular}
\end{small}
\end{center}
\end{table*}

\subsection{Our Contribution}
Our contributions are fourfold. \textbf{Framework:} We introduce KG-WDRO, a principled and flexible framework that integrates prior knowledge into WDRO for linear regression and binary classification. This framework mitigates the conservativeness of standard WDRO, enables automated covariate scaling adjustments, and prevents negative transfer. \textbf{Theory:} We establish the equivalence between KG-WDRO and shrinkage-based estimation methods, offering a novel perspective that unifies and interprets a broad range of knowledge transfer learning approaches through the lens of distributional robustness. Table \ref{tab:comparison} provides an overview of them, highlighting their key capabilities and advantages and comparing them with our framework. \textbf{Technicalities:} Leveraging Toland's Duality (Theorem \ref{thm:toland}), we reformulate the innermost maximization in WDRO's strong duality (Proposition \ref{prop:duality}) into a univariate optimization problem \eqref{eqn:duality}. This reformulation enhances tractability while accommodating more general cost functions. \textbf{Empirical Validation:} Through extensive experiments, we demonstrate the effectiveness of KG-WDRO in improving small-sample transfer learning.

Below is an overview of our main results for the linear regression case.
\begin{example}
Suppose \(\theta\) is an accessible prior predictor for a linear model parameterized with $\beta$. We show that the shrinkage-based transfer-learning regression problem, which estimates a target predictor \(\beta\) by solving  
\[
\inf_{\beta,\kappa} \Vert \mathbf{y} - \mathbf{X}\beta\Vert_2 + \sqrt{\delta} \Vert \beta - \kappa \theta \Vert_p,
\]
can be interpreted as a Wasserstein distributionally robust optimization (WDRO) problem of the form \eqref{obj:wdro}, where the loss function is least squares,
$\ell(X, Y; \beta) = (Y - \beta\trans X)^2,$ and the ambiguity set \(\mathcal{B}_{\delta}(\mathbb{P}_N; c_{2,\infty})\) is defined as a ball around the empirical measure. The cost function \(c_{2,\infty}\) augments the standard transport cost by the constraint \(x\trans \theta = u\trans \theta\) so that
\begin{align*}
    c_{2,\infty}\big((x, y), (u, v)\big) 
   =  \Vert x - u \Vert_q^2 + \infty \cdot |y - v| + \infty \cdot |(x-u)\trans\theta|.
\end{align*}
This establishes a distributionally robust optimization (DRO) perspective on a broad class of transfer-learning methods as will be discussed in Section \ref{sec:kgdro}.  
\end{example}

\subsection{Notations \& Organizations}
We summarize the mathematical notations used in this work. The positive integers \(N\), \(M\), and \(d\) denote, respectively, the target sample size, the number of sources, and the dimension of the support of the covariate \(X\). The integers \(p\) and \(q \in [1, \infty]\) are reserved for pairs of H\"{o}lder conjugates, satisfying \(p^{-1} + q^{-1} = 1\) for \(p, q \in (1, \infty)\), as well as the pair $1$ and $\infty$. For a distribution \(\mathbb{P}\) supported on the Euclidean space \(\mathbb{R}^d\), we use \(\mathbb{P}_N\) to denote the empirical measure of \(\mathbb{P}\) with sample size \(N\). In modeling the target-covariate relationship, the distribution is often factorized as \(\mathbb{P} = \mathbb{P}^{Y|X} \times \mathbb{P}^X\). For a vector \(v \in \mathbb{R}^d\), \(\|v\|_p\) denotes the \(p\)-norm, where \(p \in [1, \infty]\), and $v\trans$ denote the transpose of $v$. For any two vectors $u,v \in \mathbb{R}^d$, the notation $\cos{(u,v)}$ denote the cosine of the angle between $u$ and $v$, calculated by $\cos{(u,v)}\Vert u\Vert_2\Vert v\Vert_2 = u\trans v$. All vectors are assumed to be column vectors. Other specialized notations are defined in context as needed.

The remainder of the paper is organized as follows. Section \ref{sec:prelim} provides a review of the WDRO framework, including the strong duality result. In Section \ref{sec:kgdro}, we introduce our KG-WDRO framework and demonstrate its equivalence to shrinkage-based estimations in both linear regression and binary classification. Section \ref{sec:numerical} presents comprehensive results from our numerical simulations. All proofs and detailed descriptions of the numerical simulation setups are provided in the appendix.

\section{Preliminaries}
\label{sec:prelim}
We first begin with a short overview of the distributionally robust framework on statistical learning.
\subsection{Optimal Transport Cost}
Let $\mathbb{P}$ and $\mathbb{Q}$ denote two probability distributions supported on $\mathbb{R}^d$, and we use $\mathcal{P}(\mathbb{R}^d\times \mathbb{R}^d)$ to label the set of all probability measures on the product space $\mathbb{R}^d\times \mathbb{R}^d$. We say that an element $\pi\in\mathcal{P}(\mathbb{R}^d\times \mathbb{R}^d)$ has first marginal $\mathbb{P}$ and second marginal $\mathbb{Q}$ if \[
\pi(A\times \mathbb{R}^d) = \mathbb{P}(A), \,\,\,\pi(\mathbb{R}^d\times B) =\mathbb{Q}(B),
\]for all Borel measurable sets $A,B\in \mathbb{R}^d$. The class of all such measures $\pi$ is collected as $\Pi(\mathbb{P},\mathbb{Q})$, and is called the set of \textit{transport plans}, which is always non-empty. Choose a non-negative, lower semi-continuous function $c:\mathbb{R}^d\times \mathbb{R}^d \to [0,\infty]$ such that $c(u,v) = 0$ whenever $u=v$, then the \textit{Kantorovich's formulation} of optimal transport is defined as \[
\mathcal{D}_c(\mathbb{P},\mathbb{Q}) \coloneqq \inf_{\pi\in \Pi(\mathbb{P},\mathbb{Q})}  \mathbb{E}_\pi \left[ c(U,V) \right].
\] It is well-known that \citep[Theorem 4.1]{villani2008ot} there exists an optimal coupling $\pi^\dagger$ that solves the Kantorovich's problem $\inf_{\pi\in \Pi(\mathbb{P},\mathbb{Q})}  \mathbb{E}_\pi \left[ c(U,V) \right]$. Intuitively, we may think of the value $c(u,v)$ as the cost of transferring one unit of mass from $u \in \mathbb{R}^d$ to $v\in \mathbb{R}^d$, then $\mathbb{E}_\pi[c(U,V)]$ gives the average cost of transferring under the plan $\pi$. The \textit{optimal transport cost} $\mathcal{D}_c(\mathbb{P},\mathbb{Q})$ gives a measure of discrepancy between probability distributions on $\mathbb{R}^d$.

If $c(u,v)$ defines a metric on $\mathbb{R}^d$, then for any $p\in [1,\infty)$ the optimal transport cost, \[
\mathcal{D}_c^{1/p}(\mathbb{P},\mathbb{Q}) \coloneqq \left(\inf_{\pi\in \Pi(\mathbb{P},\mathbb{Q})}  \mathbb{E}_\pi \left[ c(U,V)^p \right] \right)^{1/p},
\]defines a metric between probability distributions and metrizes weak convergence under moment assumptions. It is called the $p$-Wasserstein distance. We direct the interested readers to \citep[Chapter 6]{villani2008ot} for more details. It is worth mentioning that none of our judiciously chosen cost functions qualify as metrics on the support of the data. 

\subsection{Distributionally Robust Optimization}
In standard statistical learning framework, one generally assumes that the target-covariate pair $(X,Y)\in \mathbb{R}^d\times \mathbb{R} \cong\mathbb{R}^{d+1}$ follows a data-generating distribution $\mathbb{P} \coloneqq \mathbb{P}_{X,Y}$ on the support $\mathbb{R}^{d+1}$. One then seeks to find a `best' parameter $\beta$ that relates $Y$ to $X$ through a parameterized model by solving the stochastic optimization, \[
\inf_{\beta} \mathbb{E}_{\mathbb{P}}\left[\ell{(X,Y;\beta}) \right]. \tag{SO}\label{obj:so}
\]The \textit{loss function} $\ell{(x,y;\beta)}$ provides a quantification of the goodness-of-fit in the parameter $\beta$ given the realized observation $(x,y)$. Since only samples $\{(x_i,y_i)\}_{i=1,\ldots,N}$ are observed, we can typically only solve the empirical objective, \[
\inf_\beta \mathbb{E}_{\mathbb{P}_N}[\ell(X,Y;\beta)] = \inf_\beta \dfrac{1}{N}\sum_{i=1}^N \ell(x_i,y_i;\beta).\tag{ERM}\label{obj:erm}
\]Therefore the distribution $\mathbb{P}$ that underlies the data-generating mechanism is uncertain to the decision-maker. This motivated the \textit{distributionally robust optimization} (DRO) framework, which entails solving the following minimax stochastic program:\[
\inf_{\beta}\sup_{\mathbb{P}\in \mathcal{P}_{\rm amb}} \mathbb{E}_\mathbb{P} [\ell(X,Y;\beta)], \tag{\rm DRO}\label{obj:dro}
\]where the \textit{ambiguity set} $\mathcal{P}_{\rm amb}$ represents a class of probability measures supported on $\mathbb{R}^{d+1}$ that are candidates to the true data-generating distributions. In \textit{Wasserstein}-DRO, the ambiguity set is constructed by forming a `$\delta$-ball' around the canonical empirical measure $\mathbb{P}_N$ associated to the decision-maker-defined transport cost $c$, i.e. we let the ambiguity set $\mathcal{P}_{\rm amb}$ be chosen as: \begin{align*}
    \mathcal{B}_\delta (\mathbb{P}_N;c)\coloneqq  \{\mathbb{P}\in \mathcal{P}(\mathbb{R}^{d+1})| \mathcal{D}_c(\mathbb{P},\mathbb{P}_N)\leq \delta\}. \tag{WDRO} \label{obj:wdro}
\end{align*}
This ambiguity set captures probability measures that are close to the observed empirical measure in the transport cost $\mathcal{D}_c$, which may be taken as a class of candidates of measures perturbed from $\mathbb{P}_N$. The solution $\beta_{\rm DRO}$ to \eqref{obj:dro} that solves the worst case expected loss should perform well over the entire set of perturbations in the ambiguity set. This is in contrast to $\beta_{\rm ERM}$ that solves \eqref{obj:erm} only performs well on the training samples. This adds a robustness layer to the WDRO problem \eqref{obj:wdro}. For a comprehensive overview of different constructions of ambiguity sets, we direct the interested reader to \citep[Section 2]{kuhn2024dro}.

\subsection{Strong Duality of Wasserstein DRO}
The Wasserstein DRO problem involves an inner maximization over an infinite-dimensional set, which appears computationally intractable. However, the distribution $\mathbb{P}_n$ is discrete, strong duality of the Wasserstein DRO reformulates it as a simple univariate optimization.

\begin{prop}[Strong Duality, \texorpdfstring{\citep[Proposition 1]{blanchet2019rwpi}}{}]
\label{prop:duality}
    Let $c:\mathbb{R}^{d+1}\times \mathbb{R}^{d+1}\to [0,\infty]$ be a lower semi-continuous cost function satisfying $c\big((x,y),(u,v)\big) = 0$ whenever $(x,y) = (u,v)$. Then the distributionally robust regression problem \[
    \inf_{\beta\in\mathbb{R}^d} \sup_{\mathbb{P}:\mathcal{B}_\delta(\mathbb{P}_N)} \mathbb{E}_{\mathbb{P}}\left[\ell(X,Y;\beta) \right],
    \]is equivalent to, \[
    \inf_{\beta\in\mathbb{R}^d}\inf_{\gamma\geq 0}\left\{\gamma \delta + \dfrac{1}{n}\sum_{i=1}^N \phi_\gamma(x_i,y_i;\beta)\right\},
    \]where $\phi_\gamma(x_i,y_i;\beta)$ is given by, \[
        \sup_{(u,v)\in \mathbb{R}^{d+1}} \big\{ \ell(u,v;\beta) - \gamma c\big((u,v),(x_i,y_i) \big)\big\}.\]
\end{prop}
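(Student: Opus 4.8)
The plan is to fix the decision variable $\beta$ and analyze the inner worst-case expectation, since the outer infimum over $\beta$ passes through unchanged once the inner duality is established. First I would lift the supremum over measures $\mathbb{P}$ in the ambiguity set to a supremum over transport plans. Because $\mathcal{D}_c(\mathbb{P},\mathbb{P}_N)$ is itself an infimum over couplings, maximizing $\mathbb{E}_{\mathbb{P}}[\ell(X,Y;\beta)]$ subject to $\mathcal{D}_c(\mathbb{P},\mathbb{P}_N)\le\delta$ is equivalent to maximizing $\mathbb{E}_\pi[\ell(U;\beta)]$ over all couplings $\pi$ whose second marginal is $\mathbb{P}_N$, subject to $\mathbb{E}_\pi[c(U,V)]\le\delta$, with $U$ the first coordinate, $V$ the second, and $\mathbb{P}$ recovered as the first marginal of $\pi$. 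The crucial simplification is that $\mathbb{P}_N=\frac1N\sum_{i=1}^N\delta_{(x_i,y_i)}$ is discrete, so any such $\pi$ disintegrates against its second marginal as $\pi=\frac1N\sum_{i=1}^N \pi_i\otimes\delta_{(x_i,y_i)}$ for conditional probability measures $\pi_i$ on $\mathbb{R}^{d+1}$. The inner problem then reads: maximize $\frac1N\sum_i\mathbb{E}_{\pi_i}[\ell(\,\cdot\,;\beta)]$ over $\pi_1,\dots,\pi_N$ subject to $\frac1N\sum_i\mathbb{E}_{\pi_i}[c(\,\cdot\,,(x_i,y_i))]\le\delta$.

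Next I would dualize the single transport-budget constraint with a multiplier $\gamma\ge0$, giving the Lagrangian
\[
L(\{\pi_i\},\gamma)=\gamma\delta+\frac1N\sum_{i=1}^N\mathbb{E}_{\pi_i}\bigl[\ell(\,\cdot\,;\beta)-\gamma\,c(\,\cdot\,,(x_i,y_i))\bigr].
\]
Weak duality is immediate and supplies the easy inequality: for any feasible $\{\pi_i\}$ and any $\gamma\ge0$, the constraint forces $\frac1N\sum_i\mathbb{E}_{\pi_i}[\ell]\le L(\{\pi_i\},\gamma)$, and since the supremum of the linear functional $\mathbb{E}_{\pi_i}[f]$ over probability measures equals the pointwise supremum $\sup_u f(u)$, each per-sample term is bounded by $\phi_\gamma(x_i,y_i;\beta)$. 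Taking the supremum over feasible plans on the left and the infimum over $\gamma\ge0$ on the right shows the primal is at most $\inf_{\gamma\ge0}\{\gamma\delta+\frac1N\sum_i\phi_\gamma(x_i,y_i;\beta)\}$.

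The main obstacle is the reverse inequality, i.e. the absence of a duality gap. I would argue it constructively rather than via a compactness-based minimax theorem, since $c$ may take the value $+\infty$ and the inner supremum need not be attained. Let $\gamma^\star$ attain the convex dual objective $g(\gamma):=\gamma\delta+\frac1N\sum_i\phi_\gamma(x_i,y_i;\beta)$. If $g(\gamma^\star)=+\infty$ the claim is vacuous, since weak duality already forces the primal value to be infinite along any sequence driving some $\phi_\gamma$ to $+\infty$; so assume $g(\gamma^\star)<\infty$. For each atom $i$ pick near-maximizers $u_i^\varepsilon$ of $u\mapsto\ell(u;\beta)-\gamma^\star c(u,(x_i,y_i))$, which exist by upper semicontinuity of $\ell$ and lower semicontinuity of $c$. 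The delicate point is feasibility of the resulting plan: one must choose the conditional measures $\pi_i$, supported on these near-maximizers and possibly on the anchor $(x_i,y_i)$ itself (which has zero transport cost), so that the aggregate budget $\frac1N\sum_i\mathbb{E}_{\pi_i}[c(\,\cdot\,,(x_i,y_i))]$ equals $\delta$. This mass-splitting is possible precisely because optimality of $\gamma^\star$ means $0$ lies in the subdifferential of the convex function $g$ at $\gamma^\star$, which is exactly the complementary-slackness condition matching the realized transport cost to the budget. Sending $\varepsilon\downarrow0$ produces feasible plans whose objective converges to $g(\gamma^\star)$, so the primal supremum is at least the dual value, closing the gap.

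Finally I would reassemble the pieces. Once the infimum over $\gamma$ and the supremum over plans are exchanged, the penalized inner problem decouples across the $N$ atoms and each term collapses to $\phi_\gamma(x_i,y_i;\beta)$ by the elementary identity $\sup_{\pi_i}\mathbb{E}_{\pi_i}[f]=\sup_u f(u)$, yielding, for fixed $\beta$,
\[
\sup_{\{\pi_i\}\ \mathrm{feasible}}\frac1N\sum_{i=1}^N\mathbb{E}_{\pi_i}[\ell(\,\cdot\,;\beta)]
=\inf_{\gamma\ge0}\Bigl\{\gamma\delta+\frac1N\sum_{i=1}^N\phi_\gamma(x_i,y_i;\beta)\Bigr\}.
\]
Reinstating the outer $\inf_\beta$ gives the stated univariate reformulation. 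The only genuinely nontrivial ingredient is the no-gap step of the previous paragraph; everything else is bookkeeping on the discrete reference measure, which is exactly why discreteness of $\mathbb{P}_N$ is essential. This reproduces the result cited from \citet{blanchet2019rwpi}.
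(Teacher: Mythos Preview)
The paper does not supply its own proof of this proposition: it is quoted as a known result from \citet{blanchet2019rwpi}, with pointers to \citet{blanchet2019quantifying} and \citet{gao2022} for more general versions and to \citep[Lemma~1]{blanchet2019rwpi} for the $\sup$/$\inf$ exchange. There is therefore nothing in the paper itself to compare your argument against beyond those citations.

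That said, your sketch is the standard Lagrangian route underlying those references---lift the ambiguity set to couplings, disintegrate against the discrete $\mathbb{P}_N$, dualize the single transport-budget constraint with a multiplier $\gamma\ge0$, and close the gap via a near-maximizer/mass-splitting construction---and is essentially correct in outline. One small slip: in the case $g(\gamma^\star)=+\infty$ your appeal to weak duality is backwards, since weak duality gives primal $\le$ dual and so an infinite dual value does not by itself force the primal to be infinite; that case, together with the possibility that the infimum over $\gamma$ is not attained, needs separate (routine) handling. You also implicitly use upper semicontinuity of $\ell$, which the proposition as stated does not list among its hypotheses but which the cited sources do assume.
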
For more general results, see  \citep[Theorem 1]{blanchet2019quantifying} and \citep[Section 2]{gao2022}. The exchangeability of $\sup$ and $\inf$ in Wasserstein-DRO is also established by \citep[Lemma 1]{blanchet2019rwpi}.

\section{Knowledge-Guided Wasserstein DRO}
\label{sec:kgdro}
In this section, we propose new cost functions for the Wasserstein DRO framework that leverage prior knowledge for transfer learning. For linear regression and binary classification, these cost functions act as regularizers, encouraging collinearity with prior knowledge.

\subsection{Knowledge-Guided Transport Cost}
It is shown in \citep[Theorem 1]{blanchet2019rwpi} that using the squared $q$-norm on the covariates as the cost function
\begin{equation}
c_2\big((x,y),(u,v)\big) = \|x - u\|_q^2 + \infty \cdot |y - v|,
\label{eqn:c_2}
\end{equation}
equates Wasserstein distributionally robust linear regression with $p$-norm regularization on the root mean squared error (RMSE). The cost function $c_2$ perturbs only the observed covariates $\{x_i\}_{i=1}^N$, while keeping the observed targets $\{y_i\}_{i=1}^N$ fixed. Keeping the observed target $Y$ as fixed often leads to more mathematically tractable reformulation, another intuition is that we trust the mechanism by which the target $Y$ is generated  once $X$ is known.

In the presence of prior knowledge $\theta$ that may aid in inferring $\beta$, we aim to control the extent of perturbation along the direction of $\theta$. 

Specifically, we constrain the size of the prediction discrepancy $\theta\trans x - \theta\trans u = \theta\trans \Delta$, where $\Delta \coloneqq x - u$.  
To achieve this goal, we henceforth augment the cost function $c_2$ with an additional penalty term that accounts for the size of the perturbation in the direction of $\theta$:
\begin{align*}
    c_{2,\lambda}\big((x,y),(u,v)\big) 
    = \|\Delta\|_q^2 + \infty \cdot |y - v| + \lambda h(|\theta\trans \Delta|),
    \label{eqn:c_2lambda} \tag{2}
\end{align*}
where $\lambda > 0$ and $h(x): \mathbb{R} \to \mathbb{R}^+ \cup \{0\}$ is a non-negative, monotone increasing function of $|x|$ such that $h(0) = 0$. Recall that in the cost function $c_2(\cdot)$, the targets $y$ remain fixed. Intuitively, the new cost function (\ref{eqn:c_2lambda})  encourages the Wasserstein ambiguity set to include distributions whose marginals in 
$X$ generate predictions that align with the data based on the prior predictor $\theta$. The parameter $\lambda$ controls the level of confidence in the prior knowledge. We call this kind of cost functions \textit{knowledge-guided}. Since $c_{2,\lambda}$ upper bounds the cost function $c_2$, we have $\mathcal{B}_\delta(\mathbb{P}^X_N;c_{2,\lambda_2}) \subseteq\mathcal{B}_\delta(\mathbb{P}^X_N;c_{2,\lambda_1})  \subseteq\mathcal{B}_\delta(\mathbb{P}^X_N;c_{2})$ whenever $\lambda_2>\lambda_1$.

The corresponding optimal transport problem given by:
\[
\inf_{\pi \in \Pi(\mathbb{Q}^X, \mathbb{P}_N^X)} \mathbb{E}_\pi [c_{2,\lambda}(X, U)],
\]
can also be expressed as:
\[
\inf_{\pi \in \Pi(\mathbb{Q}^X, \mathbb{P}_N^X)} \mathbb{E}_\pi [c_2(X, U)] + \lambda \mathbb{E}_\pi [h(|\theta\trans \Delta|)].
\]
This formulation regularizes the original optimal transport problem by penalizing large values of the expectation $\mathbb{E}_\pi [h(|\theta\trans \Delta|)]$.

For any user-defined function \( h \) that measures the discrepancy in generalization with respect to the prior knowledge \( \theta \), we refer to it as \textit{weak-transferring} of knowledge if \(\lambda < +\infty\), and \textit{strong-transferring} of knowledge if \(\lambda = +\infty\). In the case of strong-transferring, to ensure the finiteness of the optimal transport problem, the minimizing transport plan \(\pi^\dagger\) must satisfy the orthogonality condition \(\theta\trans \Delta = 0\), \(\pi^\dagger\)-almost surely. Consequently, the value of \(\theta\trans X\) remains unchanged after perturbing \(\mathbb{P}_N^X\) within \(\mathcal{B}_\delta(\mathbb{P}_N^X; c_{2,\infty})\). As a result, this should promote $\beta_{\mathrm{DRO}} \to \theta$ as $\delta \to \infty$.

\begin{remark}
The above framework extends to incorporate multi-sites prior knowledge, meaning that instead of a single prior knowledge coefficient \(\theta_1\), we consider a set of coefficients \(\{\theta_1, \theta_2, \ldots, \theta_M\}\). Let \(\Theta \coloneqq \operatorname{span}\{\theta_1, \theta_2, \ldots, \theta_M\}\) represent the linear span of these prior knowledge coefficients. In the case of strong-transferring, we must ensure that \(\operatorname{rank}(\Theta) < d\); otherwise, the set of orthogonality conditions \(\{\theta_m\trans \Delta = 0; m \in [M]\}\) would imply that the perturbation \(\Delta\) is identically zero (\(\Delta = \mathbf{0}\)). This would render the ambiguity set redundant and reduce the WDRO problem \eqref{obj:wdro} to the ERM problem \eqref{obj:erm}. This result is confirmed by the statements of Theorems \ref{thm:linear_l2} and \ref{thm:logistic}.
\end{remark}

\subsection{Linear Regression}
We begin by examining the WDRO problem \eqref{obj:wdro} for linear regression within the strong-transferring domain. Following this, we present a specific case within the weak-transferring domain. Let $\Theta \coloneqq \spn\{\theta_1, \ldots, \theta_M\}$ represent the linear span of the prior knowledge.

\subsubsection{Strong-Transferring}
Define the cost function $c_{2,\infty} \big((x,y),(u,v)\big) \coloneqq \Vert x-u\Vert_q^2 + \infty\cdot|y-v| + \infty\cdot|\theta_1\trans x - \theta_1\trans u| + \ldots + \infty \cdot |\theta_M\trans x - \theta_M\trans u|$, and for a set of observed samples $\{(x_i,y_i)\}_{i\in[N]}$, we use ${\rm MSE}_N(\beta) \coloneqq N^{-1}\sum_{i=1}^N(y_i-\beta\trans x_i)^2$. Without making any additional distributional assumptions on $(X, Y)$, we obtain the following finite-dimensional representation.
\begin{theorem}[Linear Regression with Strong-Transferring]
\label{thm:linear_l2}
    Consider the least-squared loss $\ell(X,Y;\beta) = (Y-\beta\trans X)^2$, then for any $q\in[1,\infty]$ we have \begin{align*}
    &\inf_{\beta\in\mathbb{R}^d} \sup_{\mathbb{P}:\mathcal{B}_\delta(\mathbb{P}_N;c_{2,\infty})} \mathbb{E}_{\mathbb{P}}\left[(Y-\beta\trans X)^2\right]\\
    =&
    \inf_{\beta\in\mathbb{R}^d,\vartheta\in\Theta} \left(\sqrt{{\rm MSE}_N(\beta)} + \sqrt{\delta} \Vert \beta - \vartheta \Vert_p \right)^2,
\end{align*}
where $p$ is such that $p^{-1}+q^{-1}=1$.
\end{theorem}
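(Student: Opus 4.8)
The plan is to invoke the strong duality of Proposition \ref{prop:duality} and then reduce the resulting inner maximization to the familiar square-root-lasso computation, with the crucial twist that the admissible perturbations are confined to the orthogonal complement of $\Theta$. First I would apply Proposition \ref{prop:duality} to rewrite the left-hand side as $\inf_{\beta}\inf_{\gamma\geq 0}\{\gamma\delta + N^{-1}\sum_i \phi_\gamma(x_i,y_i;\beta)\}$. The infinite penalties $\infty\cdot|y-v|$ and $\infty\cdot|\theta_m\trans(x-u)|$ appearing in $c_{2,\infty}$ force any feasible perturbation in the inner supremum to satisfy $v=y_i$ and $\theta_m\trans(x_i-u)=0$ for every $m\in[M]$. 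Writing $\Delta \coloneqq x_i - u$ and $r_i \coloneqq y_i - \beta\trans x_i$, this confines $\Delta$ to the subspace $\Theta^{\perp}$ and yields the reduced inner problem $\phi_\gamma(x_i,y_i;\beta) = \sup_{\Delta\in\Theta^{\perp}}\{(r_i + \beta\trans\Delta)^2 - \gamma\|\Delta\|_q^2\}$.

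The heart of the argument is a duality lemma that replaces the ordinary dual norm $\|\beta\|_p$ by the $p$-distance from $\beta$ to the prior span $\Theta$. Concretely, I would establish
\[
\sup_{\Delta\in\Theta^{\perp},\,\|\Delta\|_q\leq 1}\beta\trans\Delta \;=\; \inf_{\vartheta\in\Theta}\|\beta-\vartheta\|_p \;=:\; \rho(\beta).
\]
The left-hand side is exactly the support function, evaluated at $\beta$, of the intersection of the unit $q$-ball $B_q$ with the subspace $\Theta^{\perp}$; since the support function of $\Theta^{\perp}$ is the indicator of $\Theta$, the infimal-convolution formula for the support function of an intersection returns $\rho(\beta)$, the required constraint qualification being automatic because $0\in\operatorname{int}(B_q)$ meets $\Theta^{\perp}$. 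The key consequence is the sharpened Hölder bound $|\beta\trans\Delta|\leq\rho(\beta)\|\Delta\|_q$ for all $\Delta\in\Theta^{\perp}$, attained for some $\Delta$, so that minimizing $\|\Delta\|_q$ over $\Theta^{\perp}$ subject to $\beta\trans\Delta=t$ returns $|t|/\rho(\beta)$.

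With this reduction the remaining steps mirror the knowledge-free derivation. Substituting $t=\beta\trans\Delta$ collapses the inner supremum to the one-dimensional problem $\sup_{t}\{(r_i+t)^2 - \gamma t^2/\rho(\beta)^2\}$, a concave quadratic whenever $\gamma>\rho(\beta)^2$, whose maximum evaluates to $\gamma r_i^2/(\gamma-\rho(\beta)^2)$. Summing over $i$ and minimizing $\gamma\delta + \gamma\,{\rm MSE}_N(\beta)/(\gamma-\rho(\beta)^2)$ over $\gamma>\rho(\beta)^2$ via the AM--GM inequality yields $\big(\sqrt{{\rm MSE}_N(\beta)} + \sqrt{\delta}\,\rho(\beta)\big)^2$. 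Finally, since $x\mapsto(\sqrt{{\rm MSE}_N(\beta)}+\sqrt{\delta}\,x)^2$ is increasing, I can pull the infimum defining $\rho(\beta)=\inf_{\vartheta\in\Theta}\|\beta-\vartheta\|_p$ outside and merge it with $\inf_{\beta}$, producing the claimed joint minimization over $(\beta,\vartheta)$. The degenerate case $\rho(\beta)=0$, i.e.\ $\beta\in\Theta$, is consistent: the constraint forces $t=0$, giving $\phi_\gamma=r_i^2$ and objective ${\rm MSE}_N(\beta)$, matched on the right by choosing $\vartheta=\beta$.

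I expect the main obstacle to be the duality lemma of the second paragraph, since that is precisely where the knowledge-guided transport constraint is converted into shrinkage of $\beta$ toward $\Theta$; everything else is the standard robust-regression computation with $\|\beta\|_p$ replaced throughout by $\rho(\beta)$. A secondary point requiring care is the bookkeeping of finiteness, namely restricting to the regime $\gamma>\rho(\beta)^2$ and verifying that the boundary and degenerate cases do not alter the value of the infimum.
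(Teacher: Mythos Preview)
Your proposal is correct and arrives at the same identity as the paper, but the route is genuinely different. The paper does \emph{not} impose the hard constraint $\Delta\in\Theta^{\perp}$ directly; instead it relaxes the $\infty$-penalties to $\lambda|\theta_m\trans\Delta|$ with finite $\lambda$, writes the inner supremum as a difference of convex functions $f_\beta(\Delta)-g(\Delta)$, and applies Toland's duality (Lemma~\ref{lem:1}) to trade $\sup_\Delta$ for $\sup_\alpha$ over the biconjugate parameter. The conjugate $g^*(\alpha\beta)$ is then computed via the infimal convolution property and Lemma~\ref{lem:2}, and only at that stage is $\lambda\to\infty$ taken, yielding $g^*(\alpha\beta)=\tfrac{\alpha^2}{4\gamma}\inf_{\vartheta\in\Theta}\|\beta-\vartheta\|_p^2$. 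Your argument instead keeps the constraint, proves the support-function identity $\sigma_{B_q\cap\Theta^\perp}(\beta)=\inf_{\vartheta\in\Theta}\|\beta-\vartheta\|_p$ directly by infimal convolution of support functions, and parameterizes by the scalar $t=\beta\trans\Delta$ rather than by the Toland multiplier~$\alpha$. Your path is more elementary and geometrically transparent for this theorem; the paper's Toland machinery is heavier here but is reused verbatim for the weak-transferring result (Theorem~\ref{thm:weak_l2}) and the classification theorems, which is presumably why the authors chose it.
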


From the above result, we observe that the knowledge-guided WDRO problem for linear regression is equivalent to regularizing the RMSE with a $p$-norm distance to the linear span $\Theta$. The regularization parameter is entirely determined by the size (or radius) of the Wasserstein ambiguity set. Importantly, the penalty term focuses on the collinearity with the prior knowledge rather than their algebraic difference or angular proximity.

Consider the case when there is only a single prior knowledge $\theta_1$, the penalty term does not constrain the solution $\beta_{\rm DRO}$ to be close to $\theta_1$, but rather to $\kappa \cdot \theta_1$ for some $\kappa \in \mathbb{R}$ to be optimized. Consequently, this knowledge transfer automatically robustify solution against scaling of covariates. Furthermore, it can prevent negative transfer by adapting its sign to be positively correlated with $\beta^*$, which is the solution to population objective \eqref{obj:so}. When $\delta \to \infty$, the penalty term becomes dominant, forcing $\beta$ to lie in $\Theta$ for any $p \geq 1$. This reduces the WDRO problem to a simple constrained regression problem,
\[
\inf_{\beta \in \Theta} {\rm MSE}_N(\beta), 
\]
reflecting the complete reliance on the prior knowledge and prevents excessive shrinkage towards the null estimator.
\begin{remark}
    We now discuss two special cases of the penalty term, $p=2$ (ridge-type regularization) and $p=1$ (lasso-type regularization). For simplicity, we consider the case of a single prior knowledge vector $\theta$.
    
    \textbf{Ridge-type.} The penalty term can be explicitly calculated as \[
    \min_{\kappa \in \mathbb{R}} \Vert \beta - \kappa \theta \Vert_2 = \left\Vert \beta - \dfrac{\beta\trans \theta}{\Vert \theta \Vert_2^2} \theta \right\Vert_2 = \Vert \beta^{\perp \theta} \Vert_2,
    \]where $\beta^{\perp \theta}$ is the component of $\beta$ orthogonal to $\theta$. This penalty term shrinks distance to the line in the direction of $\theta$. Furthermore, note that 
    \[
    \Vert \beta^{\perp \theta} \Vert_2 = \Vert \beta \Vert_2 \sin(\beta, \theta) = \Vert \beta \Vert_2 \sqrt{1 - \cos^2(\beta, \theta)},
    \]
    which represents a trade-off between the magnitude of $\beta$ and its angular proximity to the prior knowledge $\theta$. This trade-off is illustrated in the leftmost figure of Fig.\ref{fig:feasibleset_illustration}, drawing the feasibility set of the regularization as a constraint. This regularization is closely related but different to the one proposed in \cite{gu2024angle-based}, where they penalize large values of a computational relaxation of $\sin{(\beta,\theta)}$.
    
    \textbf{Lasso-type.} When the prior knowledge $\theta$ is sparse, the penalty term $\min_\kappa \Vert \beta - \kappa \theta \Vert_1$ promotes sparse representation learning. Consider a simple example where the dimension is $d = 3$ and $\theta = (1, 0, 0)\trans$. In this case, we have:
    \begin{align*}
        \min_\kappa \Vert \beta - \kappa \theta \Vert_1 &= \min_\kappa \big(|\beta_1 - \kappa| + |\beta_2| + |\beta_3|\big) \\
        &= |\beta_2| + |\beta_3| \eqqcolon \Vert \beta_{\shortminus 1} \Vert_1,
    \end{align*}
    where $\beta_{\shortminus 1} = (\beta_2, \beta_3)\trans$. This formulation enforces sparsity only on the last two components of $\beta$, reflecting the sparsity pattern of $\theta$.
\end{remark}

\begin{figure*}[ht]
    \centering
    \includegraphics[width=0.32\textwidth]{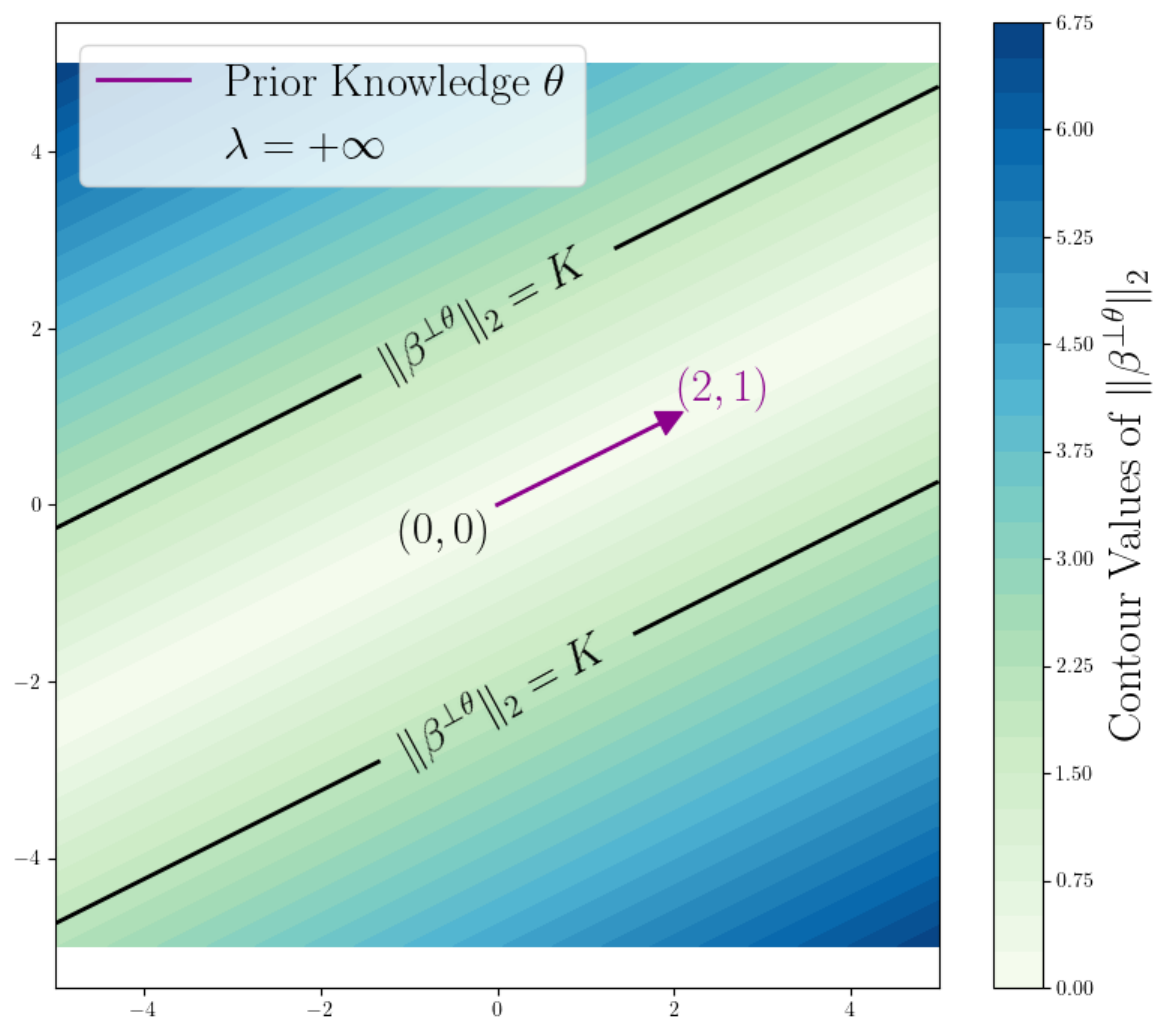}
    \hfill
    \includegraphics[width=0.32\textwidth]{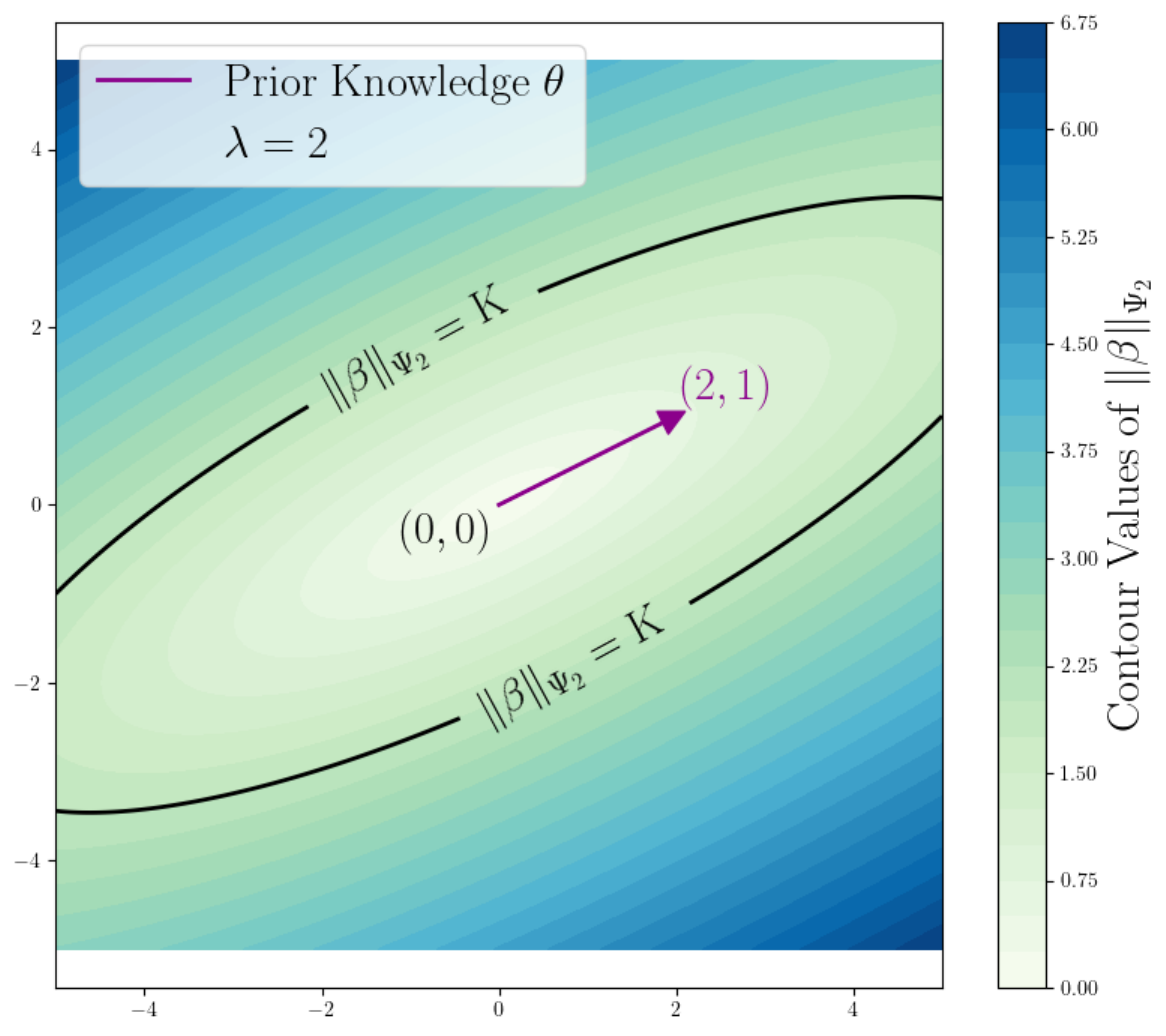}
    \hfill
    \includegraphics[width=0.32\textwidth]{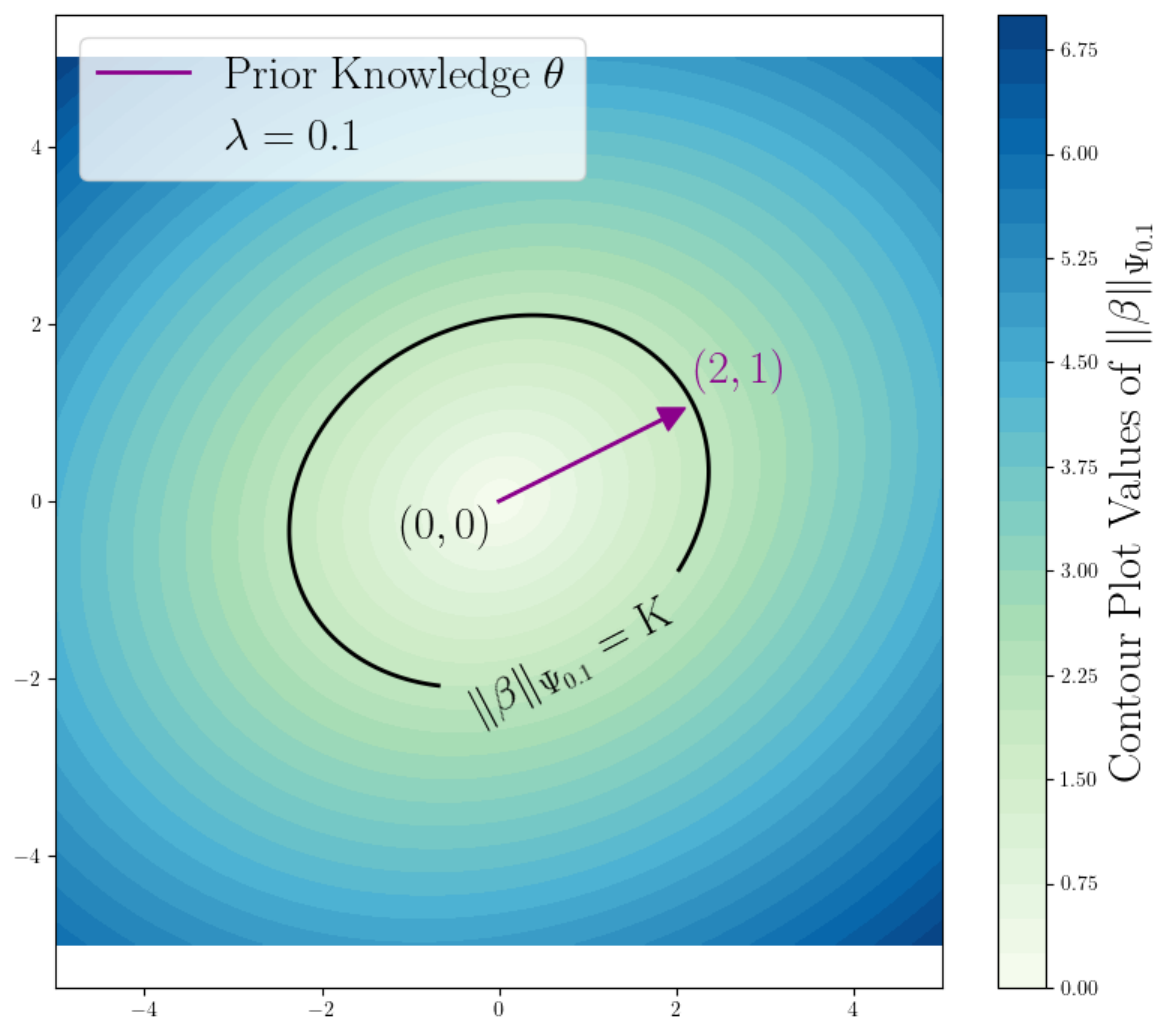}
    \caption{The two-dimensional contour plots of the regularization term in Theorem \ref{thm:linear_l2} and Theorem \ref{thm:weak_l2} with $\lambda$ ranging from $+\infty$ to $2$ to $0.1$. The prior knowledge parameter is taken as $\theta = (2,1)\trans$. The area between the black contours constitute a feasibility set of the regularization term when written in its equivalent constraint form. The feasibility set shrinks in the direction of $\theta$, to a circle of radius $K$ when $\lambda \to 0$ from above.}
    \label{fig:feasibleset_illustration}
\end{figure*}
\subsubsection{Weak Transferring}
For the special case of $q=p=2$, we define the weak-transferring cost function $c_{2,\lambda}\big((x,y),(u,v) \big) = \Vert x-u\Vert_2^2 + \lambda (\theta\trans x - \theta\trans u)^2 + \infty \cdot |y-v|$ with $0<\lambda<+\infty$. Here, we select $h(x) = x^2$ as the user-defined function on controlling the size of perturbation in $\theta$.  For simplicity, we consider a single prior knowledge vector $\theta$ in this setup. This result can be straightforwardly extended to a multi-source setup with different values of $\lambda$'s.

\begin{theorem}[Linear Regression with Weak Transferring]
\label{thm:weak_l2}
    Consider the least-squared loss $\ell(X,Y;\beta) = (Y-\beta\trans X)^2$, then for $p=q=2$ we have \begin{align*}&\inf_{\beta\in\mathbb{R}^d} \sup_{\mathbb{P}:\mathcal{B}_\delta(\mathbb{P}_N;c_{2,\lambda})} \mathbb{E}_{\mathbb{P}}\left[(Y-\beta\trans X)^2\right]\\
        =&
    \inf_{\beta\in\mathbb{R}^d} \left(\sqrt{{\rm MSE}_N(\beta)} + \sqrt{\delta} \left\Vert \beta \right\Vert_{\Psi_{\lambda^{\shortminus 1}}} \right)^2.
    \end{align*}With $\Psi_\lambda = I_d - \dfrac{1}{ \Vert \theta\Vert_2^2 + \lambda}\theta \theta\trans$ and $\Vert \beta \Vert_{\Psi_\lambda}^2 = \beta\trans \Psi_\lambda \beta$.
\end{theorem}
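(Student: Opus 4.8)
\textbf{Proof proposal for Theorem \ref{thm:weak_l2}.}

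The plan is to start from the strong-duality reformulation in Proposition \ref{prop:duality}, which rewrites the robust objective as $\inf_{\beta}\inf_{\gamma\ge 0}\{\gamma\delta + N^{-1}\sum_i \phi_\gamma(x_i,y_i;\beta)\}$ with $\phi_\gamma(x_i,y_i;\beta) = \sup_{(u,v)}\{(v-\beta\trans u)^2 - \gamma\, c_{2,\lambda}((u,v),(x_i,y_i))\}$. Because the cost carries an $\infty\cdot|y-v|$ term, any $v\neq y_i$ makes the cost infinite, so the inner supremum forces $v=y_i$ and collapses to a maximization over $u$ alone. Reparametrizing by the perturbation $\Delta = u - x_i$ and writing the residual $r_i = y_i - \beta\trans x_i$, the inner problem becomes $\sup_{\Delta}\{(r_i - \beta\trans\Delta)^2 - \gamma\|\Delta\|_2^2 - \gamma\lambda(\theta\trans\Delta)^2\}$. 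The crucial simplification, and the reason the statement is restricted to $q=2$, is that with the Euclidean cost this is a genuine quadratic program in $\Delta$.

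Next I would solve this quadratic maximization. Collecting terms, the objective is $r_i^2 - 2r_i\beta\trans\Delta - \Delta\trans Q\Delta$ with $Q = \gamma(I_d + \lambda\theta\theta\trans) - \beta\beta\trans$. The supremum is finite precisely when $Q\succ 0$; otherwise it equals $+\infty$, a case automatically discarded by the outer infimum over $\gamma$. When $Q\succ 0$ the maximizer is $\Delta^\star = -r_i Q^{-1}\beta$ and the optimal value is $r_i^2(1 + \beta\trans Q^{-1}\beta)$. I would then evaluate $\beta\trans Q^{-1}\beta$ by applying the Sherman--Morrison identity twice: first to $Q = A - \beta\beta\trans$ with $A = \gamma(I_d + \lambda\theta\theta\trans)$, which collapses $1 + \beta\trans Q^{-1}\beta$ to $(1 - \beta\trans A^{-1}\beta)^{-1}$; and then to $A^{-1} = \gamma^{-1}(I_d + \lambda\theta\theta\trans)^{-1}$, which yields $\beta\trans A^{-1}\beta = \gamma^{-1}\big(\|\beta\|_2^2 - \lambda(\beta\trans\theta)^2/(1+\lambda\|\theta\|_2^2)\big)$. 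Recognizing that $\lambda/(1+\lambda\|\theta\|_2^2) = 1/(\|\theta\|_2^2 + \lambda^{-1})$ identifies this quantity exactly as $\gamma^{-1}\|\beta\|_{\Psi_{\lambda^{\shortminus 1}}}^2$, so that $\phi_\gamma(x_i,y_i;\beta) = \gamma r_i^2/(\gamma - \|\beta\|_{\Psi_{\lambda^{\shortminus 1}}}^2)$ and the constraint $Q\succ 0$ becomes the single scalar condition $\gamma > \|\beta\|_{\Psi_{\lambda^{\shortminus 1}}}^2$.

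Finally I would carry out the outer minimization. Writing $S = \|\beta\|_{\Psi_{\lambda^{\shortminus 1}}}^2$ and using $N^{-1}\sum_i r_i^2 = {\rm MSE}_N(\beta)$, the dual objective becomes $\gamma\delta + \gamma\,{\rm MSE}_N(\beta)/(\gamma - S)$ over $\gamma > S$. Substituting $t = \gamma - S > 0$ separates the $t$-dependent part into $t\delta + S\,{\rm MSE}_N(\beta)/t$, whose minimum over $t>0$ is $2\sqrt{\delta\, S\,{\rm MSE}_N(\beta)}$ by AM--GM; adding the constant terms $S\delta + {\rm MSE}_N(\beta)$ completes the square to $\big(\sqrt{{\rm MSE}_N(\beta)} + \sqrt{\delta}\,\|\beta\|_{\Psi_{\lambda^{\shortminus 1}}}\big)^2$, and taking the infimum over $\beta$ gives the claimed identity.

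The main obstacle is the quadratic-maximization step: I must verify that the closed-form value is valid exactly on the region $\gamma > S$ and that the degenerate regime (where $Q$ is singular or indefinite and the supremum is $+\infty$ or attained on a subspace) never competes with the finite branch in the outer infimum. This holds because the objective diverges as $\gamma \downarrow S$ whenever ${\rm MSE}_N(\beta)>0$, pushing the optimal $\gamma$ strictly into the interior. The two Sherman--Morrison reductions are routine but must be chained carefully, and it is precisely here that $p=q=2$ is essential: for general $q$ the penalty $\|\Delta\|_q^2$ is not a quadratic form, and the inner problem no longer admits this exact matrix-inversion solution.
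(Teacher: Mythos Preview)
Your argument is correct and complete: the reduction to the quadratic program in $\Delta$, the two Sherman--Morrison reductions giving $1+\beta\trans Q^{-1}\beta=(1-\gamma^{-1}\|\beta\|_{\Psi_{\lambda^{-1}}}^2)^{-1}$, the identification of the finiteness region $\gamma>\|\beta\|_{\Psi_{\lambda^{-1}}}^2$ via $Q\succ 0\iff \beta\trans A^{-1}\beta<1$, and the AM--GM minimization in $t=\gamma-S$ all check out.

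The paper takes a different, more geometric route. Instead of inverting $Q$ directly, it performs a change of variables: first an orthogonal rotation $\mathbb{O}$ that aligns $\theta$ with the first coordinate axis, then a diagonal rescaling $D=\mathrm{diag}(\sqrt{\lambda\|\theta\|_2^2+1},1,\ldots,1)$ of that first coordinate. Under $\bar\Delta=D\mathbb{O}^{-1}\Delta$, the combined penalty $\gamma\|\Delta\|_2^2+\gamma\lambda(\theta\trans\Delta)^2$ collapses to $\gamma\|\bar\Delta\|_2^2$, while $\beta\trans\Delta$ becomes $(D^{-1}\mathbb{O}\trans\beta)\trans\bar\Delta$. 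At that point the inner supremum is exactly the one already handled in the vanilla WDRO case (Blanchet et al.), and a single Cauchy--Schwarz step delivers the answer with $\|\beta\trans\mathbb{O}D^{-1}\|_2=\|\beta\|_{\Psi_{\lambda^{-1}}}$. Your approach is purely algebraic and self-contained; it bypasses the reduction to the known result and makes the Sherman--Morrison mechanism explicit. The paper's approach is shorter and explains \emph{why} a Mahalanobis norm appears (it is the pullback of the Euclidean norm under the whitening map $D\mathbb{O}^{-1}$), and it generalizes to multiple sources more transparently after an initial orthogonalization of the $\theta_m$'s. Both arrive at the same $\phi_\gamma$ and finish identically via the convex minimization in $\gamma$.
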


Write $P_\lambda = {\theta\theta\trans}/({\lVert\theta\Vert_2^2+\lambda})$, we note that as $\lambda\to\infty$, we have $P_{\lambda^{\shortminus 1}} \to P_0 = {\theta\theta\trans}/{\Vert\theta\Vert_2^2}$ recovering the projection matrix onto the prior knowledge $\theta$. Consequently, $\Vert\beta\Vert_{\Psi_{\lambda^{\shortminus 1}}} \to \Vert \beta^{\perp\theta}\Vert_2$.

We observe that the action \[P_{\lambda^{\shortminus 1}} \beta = \dfrac{\beta\trans\theta}{\Vert\theta\Vert_2^2+\lambda^{-1}}\theta\] is exactly the ridge regression of $\beta$ onto $\theta$ with a regularization parameter $\lambda^{-1}$. Thus, the finiteness of $\lambda$, which can reflect a caution in the prior knowledge $\theta$, induces a shrinkage effect on the component of $\beta$ explainable by $\theta$ in the dot product geometry. Since $\Psi_\lambda \succ I_d-P$, we have $\Vert\beta\Vert_{\Psi_{\lambda^{\shortminus 1}}} > \Vert\beta^{\perp\theta}\Vert_2$ for any finite $\lambda > 0$, this implies the inclusion of feasibility set \[\{\beta: \Vert\beta\Vert_{\Psi_{\lambda^{\shortminus 1}}}\leq K\} \subset \{\beta:\Vert\beta^{\perp\theta}\Vert_2\leq K\},\] as plotted in Fig.\ref{fig:feasibleset_illustration} for an illustration on $\mathbb{R}^2$. The contour $\{\beta \in \mathbb{R}^2:\Vert\beta\Vert_{\Psi_{\lambda^{\shortminus 1}}} = K\}$ forms an ellipse centered around the origin $\boldsymbol{0}$. The ellipse has a major axis of half length $K\sqrt{\dfrac{\Vert\theta\Vert_2^2+\lambda^{-1}}{\lambda^{-1}}}$ aligned with the direction of $\theta$, and a minor axis with half-length $K$ aligned with the direction of $\theta^\perp$. As $\lambda \to 0$, representing no-confidence in $\theta$, the half-length of the major axis converges to $K$, resulting in a perfect circle as in ridge regression.

The two-dimensional hyper-parameters $(\delta, \lambda^{- 1})$ enable the use of data-driven methods, such as grid-search cross-validation, for hyper-parameter tuning. Unlike the strong-transferring domain, the inclusion of $\lambda^{- 1}$ allows the data to self-determine the informativeness of the source samples.

\subsection{Binary Classifications}
In this section, we focus on the context of binary classification, where the goal is to predict the discrete label $Y \in \{-1, 1\}$ based on the covariates $X \in \mathbb{R}^d$. Unlike the previous section, we use the $q$-norm, rather than its square, to account for distributional ambiguity in the covariate distribution. Define the strong-transferring cost function $c_{1,\infty}\big((x,y),(u,v)\big) \coloneqq \Vert x-u\Vert_q + \infty\cdot|y-v| + \infty\cdot|\theta_1\trans x - \theta_1\trans u| + \ldots +\infty \cdot |\theta_M\trans x - \theta_M\trans u|.$ We consider two loss functions here. The \textbf{logistic loss} function is given by \[
\ell(X,Y;\beta) = \log{\left(1+e^{-Y\beta\trans X} \right)},
\]which is the negative log-likelihood of the model that postulates\[
\log{\dfrac{\mathbb{P}(Y=1|X=x)}{\mathbb{P}(Y=-1|X=x)}} = \beta^{*}{\trans}x.
\]The \textbf{hinge loss} is given by \[
\ell{(X,Y;\beta)} = (1-Y\beta\trans X)^+,
\]which is typically used for training classifiers that look for `maximum-margins' in class boundaries, most notably \textit{support vector machines}.

Suppose $Y\in\{-1,1\}$ is binary and without any distributional assumptions on $X$, we have the following result which recovers regularized logistic regressions and support vector machines.
\begin{theorem}[Binary Classification with Strong Transferring]
\label{thm:logistic}
Suppose the loss function $\ell(X,Y;\beta)$ is either the logistic loss $\log{\left(1+e^{-Y\beta\trans X}\right)}$ or the hinge loss $(1-Y\beta\trans X)^+$, then for any $q\in[1,\infty]$  we have
\begin{align*}
    &\inf_{\beta\in\mathbb{R}^d} \sup_{\mathbb{P}:\mathcal{B}_\delta(\mathbb{P}_N;c_{1,\infty})} \mathbb{E}_{\mathbb{P}}\left[\ell(X,Y;\beta)\right]\\
   =& \inf_{\beta\in\mathbb{R}^d,\vartheta\in\Theta}\dfrac{1}{N}\sum_{i=1}^N \ell(x_i,y_i;\beta) + \delta \Vert \beta - \vartheta \Vert_p,
\end{align*}
where $p$ is such that $p^{-1}+q^{-1}=1$.
\end{theorem}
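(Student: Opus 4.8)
The plan is to apply the strong duality of Proposition \ref{prop:duality} and then evaluate the resulting inner supremum $\phi_\gamma$ in closed form by exploiting the margin structure shared by the logistic and hinge losses. First I would invoke Proposition \ref{prop:duality} to rewrite the left-hand side as $\inf_{\beta}\inf_{\gamma\geq 0}\{\gamma\delta + N^{-1}\sum_i \phi_\gamma(x_i,y_i;\beta)\}$, where $\phi_\gamma(x_i,y_i;\beta) = \sup_{(u,v)}\{\ell(u,v;\beta) - \gamma\, c_{1,\infty}((u,v),(x_i,y_i))\}$. The two infinite penalties in $c_{1,\infty}$ act as hard constraints: $\infty\cdot|y-v|$ forces $v = y_i$, and the terms $\infty\cdot|\theta_m\trans(x-u)|$ force the perturbation $\Delta \coloneqq u - x_i$ to lie in the $\ell_2$-orthogonal complement $\Theta^\perp = \{\Delta : \theta_m\trans\Delta = 0,\ m\in[M]\}$. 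Hence $\phi_\gamma$ collapses to $\sup_{\Delta\in\Theta^\perp}\{\ell(x_i+\Delta,y_i;\beta) - \gamma\Vert\Delta\Vert_q\}$.

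Next I would use that both losses depend on $(u,v,\beta)$ only through the scalar margin $s = v\beta\trans u$: writing $\ell = L(s)$ with $L(s)=\log(1+e^{-s})$ or $L(s)=(1-s)^+$, both are convex, non-increasing, and $1$-Lipschitz, with slope tending to $-1$ as $s\to-\infty$. Setting $s_0 = y_i\beta\trans x_i$, the inner problem becomes $\sup_{\Delta\in\Theta^\perp}\{L(s_0 + y_i\beta\trans\Delta) - \gamma\Vert\Delta\Vert_q\}$. Since for a fixed value $r = y_i\beta\trans\Delta$ the cheapest admissible perturbation has cost $|r|/C(\beta)$, where $C(\beta) \coloneqq \sup\{\beta\trans\Delta : \Delta\in\Theta^\perp,\ \Vert\Delta\Vert_q\leq 1\}$ (the factor $|y_i|=1$ and the symmetry of $\Theta^\perp$ make this supremum independent of $i$), the problem reduces to the one-dimensional $\sup_{r}\{L(s_0+r) - (\gamma/C(\beta))|r|\}$. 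The $1$-Lipschitz bound $L(s_0+r)\leq L(s_0)+|r|$ then yields a clean dichotomy: if $\gamma \geq C(\beta)$ the supremum is attained at $r=0$ and equals $\ell(x_i,y_i;\beta)$, whereas if $\gamma < C(\beta)$ letting $r\to-\infty$ drives it to $+\infty$ via the slope-$(-1)$ asymptotics of $L$.

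The step I expect to be the crux is identifying $C(\beta)$ with the $p$-norm distance from $\beta$ to the span $\Theta$. This is a norm-duality computation: $C(\beta)$ is the support function at $\beta$ of the unit $q$-ball intersected with the subspace $\Theta^\perp$, equivalently the dual norm of the restriction of the linear functional $\beta$ to $\Theta^\perp$. By Hahn--Banach / annihilator duality this restriction norm equals $\inf\{\Vert\beta'\Vert_p : \beta'-\beta\in(\Theta^\perp)^\perp\}$, and since $(\Theta^\perp)^\perp = \Theta$ in finite dimensions with $p,q$ H\"older conjugate, I obtain $C(\beta) = \inf_{\vartheta\in\Theta}\Vert\beta-\vartheta\Vert_p$. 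The degenerate case $\beta\in\Theta$, where $C(\beta)=0$ and the distance vanishes, is automatically consistent and needs no separate treatment.

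Finally I would assemble the pieces. Because $C(\beta)$ is the same for every sample $i$, the constraint $\gamma\geq C(\beta)$ makes $N^{-1}\sum_i\phi_\gamma$ finite and equal to $N^{-1}\sum_i\ell(x_i,y_i;\beta)$ precisely on that range; as $\gamma\mapsto \gamma\delta + N^{-1}\sum_i\ell$ is increasing, the inner infimum over $\gamma$ is attained at $\gamma = C(\beta)$, giving $N^{-1}\sum_i\ell(x_i,y_i;\beta) + \delta\,\inf_{\vartheta\in\Theta}\Vert\beta-\vartheta\Vert_p$. Taking the outer infimum over $\beta$ and merging the two infima over $\beta$ and $\vartheta\in\Theta$ produces the claimed right-hand side. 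The only remaining points needing care are verifying that $c_{1,\infty}$ is lower semicontinuous and vanishes on the diagonal so that Proposition \ref{prop:duality} applies, and confirming that the attainment at $r=0$ and the blow-up as $r\to-\infty$ hold uniformly for both loss choices.
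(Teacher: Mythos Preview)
Your proposal is correct and, in fact, more streamlined than the paper's argument. The paper treats the two losses separately: for the logistic loss it proves a dedicated Lemma~\ref{lem:3} that writes $h_\beta(x)=h_\beta^{**}(x)=\sup_{0\le\alpha\le 1}\{(\alpha\beta)\trans x-\bar h^*(\alpha)\}$, then solves the resulting constrained inner problem via a Lagrangian argument to extract the condition $\inf_{\vartheta\in\Theta}\Vert\beta-\vartheta\Vert_p\le\gamma$; for the hinge loss it uses the linearization $(1-s)^+=\sup_{0\le\alpha\le1}\alpha(1-s)$ and computes the convex conjugate of $\gamma\Vert\Delta\Vert_q+\gamma\lambda\sum_m|\theta_m\trans\Delta|$ by the infimal convolution property (Theorem~\ref{thm:inf_conv} and Lemma~\ref{lem:2}), letting $\lambda\to\infty$ at the end. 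Your route instead unifies both losses through the single observation that $L$ is non-increasing, $1$-Lipschitz, with asymptotic slope $-1$, and handles the subspace constraint in one stroke via the annihilator duality $C(\beta)=\sup\{\beta\trans\Delta:\Delta\in\Theta^\perp,\ \Vert\Delta\Vert_q\le1\}=\inf_{\vartheta\in\Theta}\Vert\beta-\vartheta\Vert_p$. What you gain is brevity and a proof that transparently extends to any margin loss with those Lipschitz/slope properties; what the paper's approach buys is that the quotient-norm identity emerges from explicit conjugate calculations rather than being invoked as a Hahn--Banach fact, which keeps the machinery self-contained and parallels the infimal-convolution technique already used for Theorem~\ref{thm:linear_l2}.
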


\subsection{Sub-Coefficient-Vector Transferring}
In this subsection, we generalize the statements of Theorems \ref{thm:linear_l2} and \ref{thm:logistic} for $p=2$ to arbitrary norms induced by positive-definite quadratic forms. Let $\Lambda \in \mathbb{R}^{d \times d}$ be a positive-definite symmetric matrix. The norm $\Vert x \Vert_\Lambda = \sqrt{x\trans \Lambda x}$ induces a metric on $\mathbb{R}^d$, defined as $d_{\Lambda}(x, u) = \Vert x - u \Vert_\Lambda$, known as the \textit{Mahalanobis distance}. Since $\Lambda$ is positive definite, it admits a decomposition $\Lambda = \Gamma\trans \Gamma$ with $\Gamma$ invertible, and the norm $\Vert x \Vert_\Lambda = \Vert \Gamma x \Vert_2$ measures length in the geometry distorted by $\Gamma$. By \citep[Lemma 1]{blanchet2019data-drivencost}, the dual norm of $\Vert \cdot \Vert_\Lambda$ is $\Vert \cdot \Vert_{\Lambda^{-1}}$. Using Proposition \ref{prop:generalized_adaptive}, the statements of Theorems \ref{thm:linear_l2} and \ref{thm:logistic} can be easily generalized. Define the space of positive-definite symmetric matrices as $\mathbb{S}_+^{d \times d}$ and the cost function: $c_{2,\infty}^\Lambda \big((x, y), (u, v)\big) \coloneqq \Vert x - u \Vert_\Lambda^2 + \infty \cdot |y - v| + \infty \cdot \sum_{m=1}^M |\theta_m\trans x - \theta_m\trans u|$.

\begin{corollary}[Theorem \ref{thm:linear_l2}]
\label{cor:linear_l2}
For the least-squares loss $\ell(X, Y; \beta) = (Y - \beta\trans X)^2$ and any $\Lambda \in \mathbb{S}_+^{d \times d}$:
\begin{align*}
    &\inf_{\beta \in \mathbb{R}^d} \sup_{\mathbb{P} : \mathcal{B}_\delta(\mathbb{P}_N; c_{2,\infty}^\Lambda)} \mathbb{E}_{\mathbb{P}}\left[(Y - \beta\trans X)^2\right] \\
=&\inf_{\beta \in \mathbb{R}^d, \vartheta \in \Theta} \left(\sqrt{{\rm MSE}_N(\beta)} + \sqrt{\delta} \Vert \beta - \vartheta \Vert_{\Lambda^{-1}}\right)^2.
\end{align*}
\end{corollary}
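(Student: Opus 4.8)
The plan is to reduce Corollary \ref{cor:linear_l2} to Theorem \ref{thm:linear_l2} through a linear change of variables that converts the Mahalanobis geometry into the Euclidean one. Since $\Lambda \in \mathbb{S}_+^{d\times d}$, it factors as $\Lambda = \Gamma\trans\Gamma$ with $\Gamma$ invertible, so $\Vert x-u\Vert_\Lambda = \Vert \Gamma x - \Gamma u\Vert_2$. This suggests pushing every object forward through the bijection $T(x,y) = (\Gamma x, y)$ on $\mathbb{R}^{d+1}$ and verifying that the whole WDRO problem becomes an instance of the already-proven $q=p=2$ case of Theorem \ref{thm:linear_l2}, but with transformed prior-knowledge vectors $\tilde{\theta}_m \coloneqq (\Gamma\trans)^{-1}\theta_m$.

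First I would record how the cost function transforms. A direct computation gives
\[
c_{2,\infty}^\Lambda\big((x,y),(u,v)\big) = c_{2,\infty}\big((\Gamma x, y),(\Gamma u, v)\big),
\]
where on the right $c_{2,\infty}$ is the Euclidean ($q=2$) strong-transferring cost built from the priors $\{\tilde{\theta}_m\}$: indeed $\Vert x-u\Vert_\Lambda^2 = \Vert \Gamma x-\Gamma u\Vert_2^2$, the term $\infty\cdot|y-v|$ is untouched, and $\theta_m\trans(x-u) = \big((\Gamma\trans)^{-1}\theta_m\big)\trans \Gamma(x-u) = \tilde{\theta}_m\trans(\Gamma x - \Gamma u)$. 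Because $T$ is a bijection, transport plans $\pi\in\Pi(\mathbb{P},\mathbb{P}_N)$ correspond one-to-one with plans $(T\times T)_\#\pi\in\Pi(T_\#\mathbb{P}, T_\#\mathbb{P}_N)$ of equal cost, so $\mathcal{D}_{c_{2,\infty}^\Lambda}(\mathbb{P},\mathbb{P}_N) = \mathcal{D}_{c_{2,\infty}}(T_\#\mathbb{P}, T_\#\mathbb{P}_N)$, and $\mathbb{P}\mapsto T_\#\mathbb{P}$ carries $\mathcal{B}_\delta(\mathbb{P}_N; c_{2,\infty}^\Lambda)$ bijectively onto $\mathcal{B}_\delta(T_\#\mathbb{P}_N; c_{2,\infty})$. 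Writing $\tilde{\beta}\coloneqq(\Gamma\trans)^{-1}\beta$ and $\tilde{X}=\Gamma X$, the loss is invariant, $(Y-\beta\trans X)^2 = (Y-\tilde{\beta}\trans \tilde{X})^2$, so the inner supremum is unchanged, and as $\beta$ ranges over $\mathbb{R}^d$ so does $\tilde{\beta}$.

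With this correspondence established, I would apply Theorem \ref{thm:linear_l2} with $q=2$ (hence $p=2$) to the transformed problem over the empirical measure $T_\#\mathbb{P}_N$ of the samples $\{(\Gamma x_i, y_i)\}_{i\in[N]}$ and the priors $\{\tilde{\theta}_m\}$. This yields $\inf_{\tilde{\beta},\tilde{\vartheta}\in\tilde{\Theta}}\big(\sqrt{\mathrm{MSE}_N(\tilde{\beta})}+\sqrt{\delta}\Vert\tilde{\beta}-\tilde{\vartheta}\Vert_2\big)^2$, where $\tilde{\Theta}=\spn\{\tilde{\theta}_1,\dots,\tilde{\theta}_M\}=(\Gamma\trans)^{-1}\Theta$. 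Translating back is the final step: the residuals $y_i-\tilde{\beta}\trans\tilde{x}_i=y_i-\beta\trans x_i$ leave $\mathrm{MSE}_N$ unchanged, and writing $\tilde{\vartheta}=(\Gamma\trans)^{-1}\vartheta$ with $\vartheta\in\Theta$ gives
\[
\Vert\tilde{\beta}-\tilde{\vartheta}\Vert_2^2 = (\beta-\vartheta)\trans\Gamma^{-1}(\Gamma\trans)^{-1}(\beta-\vartheta) = \Vert\beta-\vartheta\Vert_{\Lambda^{-1}}^2,
\]
since $\Gamma^{-1}(\Gamma\trans)^{-1}=(\Gamma\trans\Gamma)^{-1}=\Lambda^{-1}$. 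This reproduces exactly the claimed right-hand side. I expect the only genuine obstacle to be the careful verification that the pushforward induces an exact, cost-preserving bijection between the two ambiguity balls — in particular keeping straight that the coefficient and the priors transform by $(\Gamma\trans)^{-1}$ while the covariates transform by $\Gamma$. Once that bookkeeping is pinned down, the result follows directly from Theorem \ref{thm:linear_l2}.
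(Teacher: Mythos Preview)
Your argument is correct, but it differs from the paper's. The paper dispatches the corollary in one line by noting that the only place the specific choice of norm enters the proof of Theorem \ref{thm:linear_l2} is through the convex conjugate of the squared cost; since Proposition \ref{prop:generalized_adaptive} supplies $(\Vert\cdot\Vert_\Lambda^2)^*(x^*)=\tfrac14\Vert x^*\Vert_{\Lambda^{-1}}^2$, one simply re-runs the same duality calculation with this conjugate in place of $\tfrac14\Vert\cdot\Vert_p^2$. Your route instead avoids re-opening the proof entirely: you linearly change variables via $T(x,y)=(\Gamma x,y)$ so that the Mahalanobis WDRO becomes an instance of the already-established $q=p=2$ case of Theorem \ref{thm:linear_l2}, and then translate the penalty back via $\Vert(\Gamma\trans)^{-1}(\beta-\vartheta)\Vert_2=\Vert\beta-\vartheta\Vert_{\Lambda^{-1}}$. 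The paper's approach makes transparent that the result holds for any cost whose square has a squared dual norm as conjugate, while your reduction is cleaner in that it treats Theorem \ref{thm:linear_l2} as a black box---but it relies on the special feature of the Mahalanobis norm that a linear isometry to $\ell_2$ exists.
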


This formulation enables the use of metric learning methods to determine $\Lambda$ directly from the data, as detailed in \cite{blanchet2019data-drivencost}. For example, if the two-dimensional prior $\theta = [\theta_1, \theta_2]$ is known to primarily influence the first component of the truth $\beta = [\theta_1 + \epsilon, \beta_2]$, we can select $\Lambda = \mathrm{diag}(d_1, d_2)$ with $d_1 \ll d_2$. This imposes a weaker penalty on perturbations in the first direction, resulting in a weighted penalty term: $\min_{\kappa} \big({(\beta_1 - \kappa \theta_1)}/{d_1} + {(\beta_2 - \kappa \theta_2)}/{d_2}\big)$, which prioritizes aligning $\beta_1$ with $\theta_1$, while $\beta_2$ is determined more flexibly based on the data. We call this sub-coefficient-vector transferring, or the ability to partially transfer prior knowledge. A similar corollary applies to Theorem \ref{thm:logistic}, as stated in Corollary \ref{cor:logistic}. 

Finally, we again draw the reader's attention to Table \ref{tab:comparison}, which compares several transfer learning methods discussed in Section \ref{sec:review:tl}. Notably, our proposed KG-WDRO framework brings together a broad range of desirable capabilities within a single, unified approach to transfer learning.

\section{Numerical Results}
\label{sec:numerical}
\begin{figure*}[ht]
    \centering
    \includegraphics[width=0.90\linewidth]{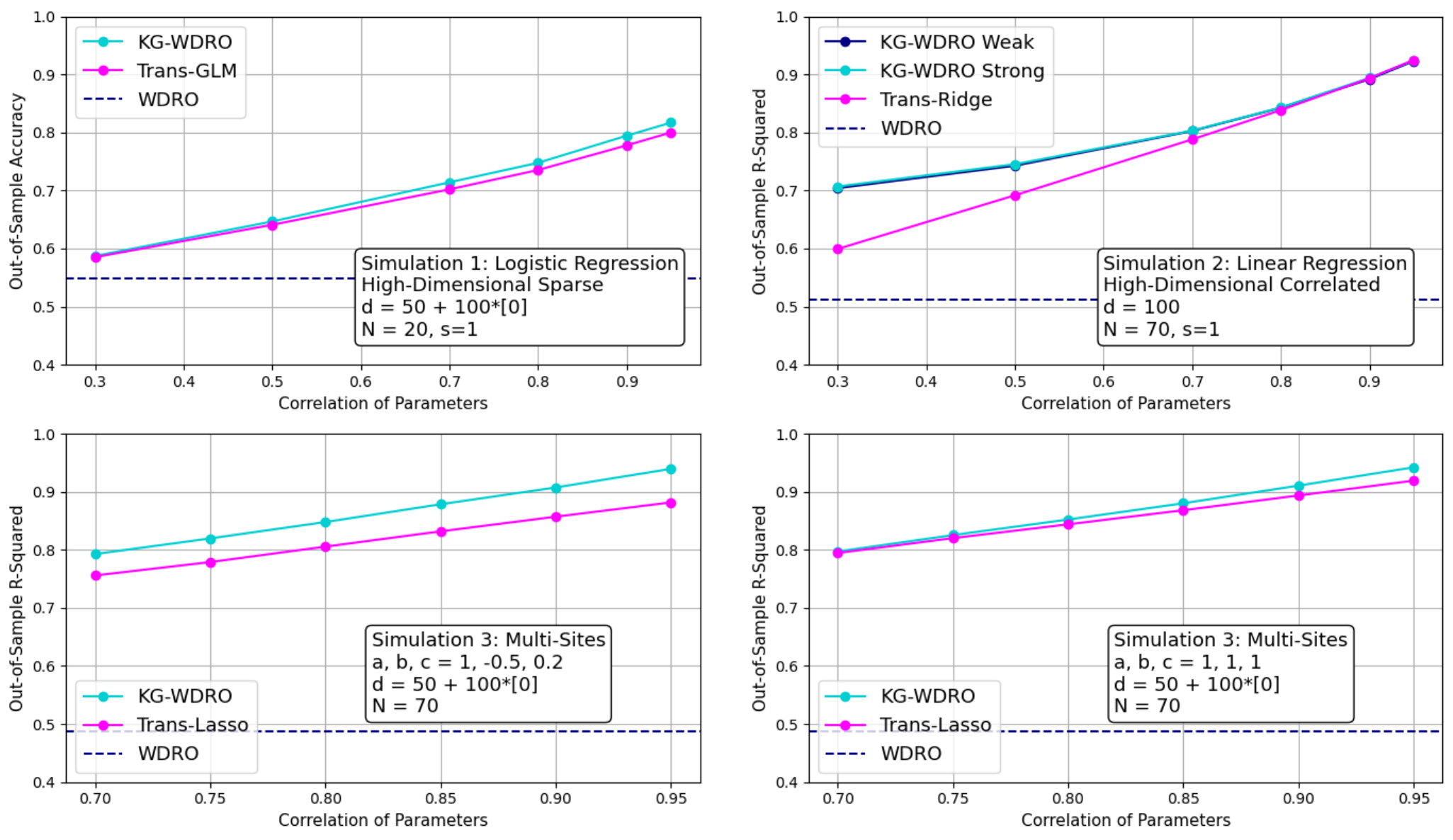}
    \caption{Out-of-sample performance plot of the proposed KG-WDRO method for high-dimensional regression tasks, compared against benchmark methods. The plot shows performance variations as $\rho$, representing the correlation between true and prior coefficient pairs, increases. Results are displayed for four specific settings across three experimental groups.}
    \label{fig:sim_main_text}
\end{figure*}

In this section, we present numerical simulations to validate the effectiveness of the proposed KG-WDRO method. We compare learners across different settings, including high-dimensional sparse models, correlated covariates, and multi-source prior knowledge, for either linear regression or binary classification tasks. Performance is evaluated using out-of-sample classification error for binary classifiers and out-of-sample \(R^2\) for linear regressors.

For the single-source experiments, target-source coefficient pairs \((\beta, \theta)\) are generated from a multivariate normal distribution:
\[
\label{sim:scheme}
(\beta_j, \theta_j) \sim N\left( 
\begin{pmatrix} 
0 \\ 
0 
\end{pmatrix}, 
\begin{pmatrix} 
\sigma^2 & \rho\sigma^2 \\ 
\rho\sigma^2 & \sigma^2 
\end{pmatrix}
\right), \tag{3}
\]
where \(\rho\) is the correlation between \(\beta\) and \(\theta\), and the expected length of \(\theta\) is approximately \(\sigma \sqrt{d - 0.5}\). We scale \(\beta\) as \(\beta \gets s\beta\) with \(s \in (0, 1]\) to study the stabilizing effects of strong prior knowledge in small-sample settings. The dimension-to-sample ratio \(d/N\) is varied by fixing \(d\) and increasing \(N\). Performance is averaged over 100 simulations. Each dataset consists of three parts: \texttt{data = (train, val, test)}. The \texttt{(train, val)} pair shares the same size, and hyperparameters are selected based on validation performance. The source data contains 800 samples, with source truth \(\theta\) estimated accordingly. Out-of-sample performance is measured on the \texttt{test} set of 5000 data points.

\subsection{Simulation 1: Logistic with \texorpdfstring{$\ell_1$}{l1}-Strong Transferring}

\label{subsec:sim1}
In the first experiment, we compare two learners for binary classification tasks with high-dimensional sparse coefficients against our proposed KG-WDRO learner, $\beta_{\rm KG}$, derived using Theorem \ref{thm:logistic} with $p=1$. The competing learners are the target-only vanilla WDRO learner $\beta_{\rm WDRO}$ \citep[Theorem 2]{blanchet2019rwpi} and $\beta_{\rm TransG}$, obtained via the $\mathcal{A}$-Trans-GLM algorithm \citep[Algorithm 1]{tian2023transglm}. The target-source pair $(\beta, \theta)$ is generated using \eqref{sim:scheme} with a dimension of 50 and augmented with 100 zeros for sparsity, resulting in a total dimension of 150. We test six settings, varying the sample size $N \in \{20, 50, 80\}$, signal strength $s \in \{0.5, 1\}$, and truth-prior correlation $\rho \in \{0.3, 0.5, 0.7, 0.8, 0.9, 0.95\}$.

The comparison between $\beta_{\rm KG}$ and $\beta_{\rm TransG}$ is highly competitive, with $\beta_{\rm KG}$ consistently outperforming $\beta_{\rm TransG}$ by up to $2\%$ in accuracy when the sample size is small ($N = 20$) across all values of $\rho$, as shown in the upper-left plot of Figure \ref{fig:sim_main_text}. In larger sample size scenarios, both learners perform similarly (see Table \ref{tab:sim1} for detailed results). Both transfer learning methods, $\beta_{\rm KG}$ and $\beta_{\rm TransG}$, significantly outperform the target-only learner, $\beta_{\rm WDRO}$.

\subsection{Simulation 2: Linear Regression with \texorpdfstring{$\ell_2$}{l2}-Weak Transferring}

\label{subsec:sim2}
In this simulation, we compare two learners on high-dimensional linear regression with correlated covariates against our proposed learners, $\beta_{\rm KGweak}$ (Theorem \ref{thm:weak_l2}) and $\beta_{\rm KGstrong}$ (Theorem \ref{thm:linear_l2}), both using $p=2$. There is no sparsity in the regression coefficients. The competing learners are the target-only vanilla WDRO learner $\beta_{\rm WDRO}$ \citep[Theorem 1]{blanchet2019rwpi} and the Trans-Ridge algorithm adapted from \citep[Algorithm 1]{li2021translasso}, denoted as $\beta_{\rm TransR}$. The covariates are fixed at dimension 100, with a pairwise correlation of 0.3. The experiment is conducted across six settings, varying the sample size $N \in \{50, 70, 90\}$, signal strength $s \in \{0.8, 1\}$, and truth-prior correlation $\rho \in \{0.3, 0.5, 0.7, 0.8, 0.9, 0.95\}$.

As shown in the upper-right plot of Figure \ref{fig:sim_main_text}, the performance of $\beta_{\rm TransR}$ lags significantly behind both $\beta_{\rm KGstrong}$ and $\beta_{\rm KGweak}$ until the correlation $\rho$ becomes sufficiently high. Across all settings, $\beta_{\rm KGstrong}$ and $\beta_{\rm KGweak}$ consistently outperform $\beta_{\rm TransR}$ when $\rho$ is moderate or low, as documented in Table \ref{tab:sim2}. Furthermore, all three transfer learning methods demonstrate superior performance compared to the target-only learner, $\beta_{\rm WDRO}$. 

\subsection{Simulation 3: Transfer Learning with Multiple Sites}
\label{subsec:sim3}
In the final set of experiments, we validate our methods in a multi-source transfer learning setting with high-dimensional sparse linear regression. The significant components of the three source coefficients are generated using \eqref{sim:scheme} with correlation $\varrho$ and dimension 50, denoted as $\{\theta_1, \theta_2, \theta_3\}$. We construct a linear combination, $\theta_S = a\theta_1 + b\theta_2 + c\theta_3$, and generate $\beta = \rho \theta_S + \varepsilon$, where $\varepsilon \sim N(0, (1-\rho^2)\mathrm{Var}(\theta_S))$, ensuring $\mathrm{Corr}(\beta, \theta_S) = \rho$. $\beta$ is then scaled to match the magnitude of $\theta_S$, and all vectors are augmented with 100 zeros, yielding a total dimension of 150. Our proposed method, $\beta_{\rm KG}$ (Theorem \ref{thm:linear_l2}, $p=1$), is compared against the oracle Trans-Lasso algorithm \citep[Algorithm 1]{li2021translasso} ($\beta_{\rm TransL}$) and the vanilla WDRO learner $\beta_{\rm WDRO}$. The experiment spans six settings: $[a, b, c] = [1, -0.5, 0.2]$ and $[1, 1, 1]$, with $\varrho = 0.9$ and $0.6$, respectively. Sample sizes vary in $N \in \{50, 60, 70\}$. The truth-prior correlation ranges in $\rho \in \{0.7, 0.75, 0.8, 0.85, 0.9, 0.95\}$.

When $[a, b, c] = [1, -0.5, 0.2]$, the contributions of the $\theta$'s to the generation of $\theta_S$ are unequal. In this case, it is not surprising that $\beta_{\rm KG}$ outperforms $\beta_{\rm TransL}$, as shown in the bottom-left plot of Figure \ref{fig:sim_main_text}. When $\theta_S$ is an equal-weighted average of the $\theta$'s ($[a, b, c] = [1, 1, 1]$), the performance of $\beta_{\rm KG}$ and $\beta_{\rm TransL}$ becomes similar. However, $\beta_{\rm KG}$ still demonstrates superior performance in larger sample sizes and higher correlations, as documented in Table \ref{tab:sim3}.

\section{Conclusion}
We propose the knowledge-guided Wasserstein
distributionally robust optimization (KG-WDRO) framework, which utilizes prior knowledge of predictors to mitigate the over-conservativeness of  conventional DRO methods. We establish tractable reformulations and demonstrate  their superior performance compared to other methods. For future work, we aim to provide statistical guarantees of our proposed estimators. Furthermore, based on these statistical properties, we plan to develop a principled approach for selecting hyperparameters such as  $\delta$ and $\lambda$.

\bibliographystyle{apalike}
\bibliography{ref.bib}

\setcounter{lemma}{0}
\renewcommand{\thelemma}{A\arabic{lemma}}
\setcounter{equation}{0}
\setcounter{algorithm}{0}
\setcounter{table}{0}
\setcounter{assumption}{0}
\renewcommand{\theequation}{A\arabic{equation}}
\renewcommand{\thealgorithm}{A\arabic{algorithm}}
\renewcommand{\thetable}{A\arabic{table}}
\renewcommand{\thedefinition}{A\arabic{definition}}
\renewcommand{\theprop}{A\arabic{prop}}
\renewcommand{\thetheorem}{A\arabic{theorem}}
\renewcommand{\theexample}{A\arabic{example}}
\renewcommand{\theassumption}{A\arabic{assumption}}

\clearpage
\newpage
\setcounter{page}{1}
\appendix

\section{Additional Details in Numerical Results}
This section provides  details to supplement Section \ref{sec:numerical}. We outline the data-generating distributions for all three sets of experiments, the hyperparameter grids, and the learners used to identify prior knowledge. We present the exact numerical results for all three sets of experiments. Recall that the notation \( s \in (0,1] \) represents the signal strength of the true parameter \( \beta \), which works by rescaling the magnitude of \( \beta \) such that \( \beta \gets s\beta \). The notation $d$ is the dimension of the covariates, and $N$ is the sample size. Finally, the symbol \( \rho \) represents the correlation between the true \( \beta \) and the prior \( \theta \).
\subsection{Simulation Results}
\subsubsection{Simulation 1: Logistic Regression}

\begin{table}[H]
\begin{center}
\begin{small}
\setlength{\tabcolsep}{3pt} 
\renewcommand{\arraystretch}{1.1} 
\begin{sc}
\begin{tabular}{l|c|cccccc|cr}
\toprule
Setting & & $\rho = 0.3$ & $\rho = 0.5$ & $\rho = 0.7$ & $\rho = 0.8$ & $\rho = 0.9$ & $\rho = 0.95$ & WDRO\\
\midrule
$s = 1$ & KG-WDRO  & \textbf{0.587} & \textbf{0.647} & \textbf{0.714} & \textbf{0.748} & \textbf{0.794} & \textbf{0.817} & 0.565\\
$N=20$ & Trans-GLM & 0.585 & 0.641 & 0.702 & 0.735 & 0.778 & 0.800 & - \\
\midrule
$s = 1$ & KG-WDRO  & {0.586} & \textbf{0.647} & \textbf{0.713} & 0.751 & \textbf{0.797} & {0.823} & 0.619\\
$N=50$ & Trans-GLM & \textbf{0.586} & 0.645 & 0.710 & \textbf{0.752} & 0.792 & \textbf{0.823} & - \\
\midrule
$s = 1$ & KG-WDRO  & 0.583 & \textbf{0.646} & {0.713} & 0.751 & {0.798} & 0.823 & 0.654 \\
$N=80$ & Trans-GLM  & \textbf{0.584} & {0.646} & \textbf{0.714} & \textbf{0.755} & \textbf{0.800} & \textbf{0.826} & - \\
\midrule
$s = 0.5$ & KG-WDRO     & \textbf{0.581} & \textbf{0.634} & \textbf{0.690} & \textbf{0.721} & \textbf{0.762} & \textbf{0.787} & 0.549\\
$N=20$ & Trans-GLM   & 0.579 & 0.626 & 0.674 & 0.708 & 0.748 & 0.760 & -\\
\midrule
$s = 0.5$ & KG-WDRO     & \textbf{0.580} & \textbf{0.635} & 0.689 & \textbf{0.728} & \textbf{0.768} & \textbf{0.794} & 0.588\\
$N=50$ & Trans-GLM   & 0.579 & 0.633 & \textbf{0.693} & 0.723 & {0.769} & 0.789 & -\\
\midrule
$s = 0.5$ & KG-WDRO     & \textbf{0.581} & 0.637 & {0.700} & {0.732} & 0.775 & 0.790 & 0.617\\
$N=80$ & Trans-GLM   & 0.581 & \textbf{0.638} & \textbf{0.702} & \textbf{0.737} & \textbf{0.779} & \textbf{0.799} & -\\
\bottomrule
\end{tabular}
\end{sc}
\end{small}
\end{center}

\caption{Out-of-sample classification accuracies for Simulation \ref{subsec:sim1}, comparing KG-WDRO, Trans-GLM, and WDRO across six settings with varying values of $\rho$.}
\label{tab:sim1}
\end{table}

\subsubsection{Simulation 2: Linear Regression}
\begin{table}[H]
\begin{center}
\setlength{\tabcolsep}{4pt} 
\renewcommand{\arraystretch}{1.1} 
\begin{small}
\begin{sc}
\begin{tabular}{l|c|cccccc|r}
\toprule
Setting & & $\rho = 0.3$ & $\rho = 0.5$ & $\rho = 0.7$ & $\rho = 0.8$ & $\rho = 0.9$ & $\rho = 0.95$ & WDRO\\
\midrule
$s = 1$ & KG-WDRO (Strong) & \textbf{0.585} & 0.645 & 0.740 & \textbf{0.801} & 0.870 & 0.912 & 0.108\\
$N=50$  & KG-WDRO (Weak) & 0.583 & \textbf{0.646} & \textbf{0.741} & 0.800 & \textbf{0.871} & 0.910 & -\\
        & Trans-Ridge & 0.391 & 0.548 & 0.706 & 0.786 & 0.870 & \textbf{0.915} & -\\
\midrule
$s = 1$ & KG-WDRO (Strong) & \textbf{0.707} & \textbf{0.745} & \textbf{0.803} & \textbf{0.843} & \textbf{0.894} & {0.924} & 0.513\\
$N=70$  & KG-WDRO (Weak) & 0.704 & 0.743 & 0.803 & 0.842 & 0.892 & 0.923 & -\\
        & Trans-Ridge & 0.599 & 0.692 & 0.788 & 0.838 & 0.893 & \textbf{0.925} & -\\
\midrule
$s = 1$ & KG-WDRO (Strong) & \textbf{0.806} & \textbf{0.827} & {0.859} & \textbf{0.881} & \textbf{0.911} & {0.932} & 0.758\\
$N=90$    & KG-WDRO (Weak) & 0.804 & 0.825 & 0.857 & 0.880 & 0.910 & 0.930 & -\\
          & Trans-Ridge & 0.762 & 0.802 & 0.849 & 0.877 & {0.910} & \textbf{0.932} & -\\
\midrule
$s = 0.8$ & KG-WDRO (Strong) & \textbf{0.563} & \textbf{0.621} & \textbf{0.716} & \textbf{0.777} & \textbf{0.850} & \textbf{0.894} & 0.030\\
$N=50$    & KG-WDRO (Weak) & 0.561 & 0.622 & 0.716 & 0.777 & 0.849 & 0.892 & -\\
          & Trans-Ridge & 0.213 & 0.405 & 0.600 & 0.700 & 0.803 & 0.858 & -\\
\midrule
$s = 0.8$ & KG-WDRO (Strong) & \textbf{0.673} & \textbf{0.713} & \textbf{0.774} & \textbf{0.818} & \textbf{0.872} & \textbf{0.905} & 0.361\\
$N=70$    & KG-WDRO (Weak) & 0.670 & 0.710 & 0.774 & 0.816 & 0.869 & 0.903 & -\\
          & Trans-Ridge & 0.470 & 0.585 & 0.704 & 0.768 & 0.837 & 0.875 & -\\
\midrule
$s = 0.8$ & KG-WDRO (Strong) & \textbf{0.768} & \textbf{0.791} & \textbf{0.826} & \textbf{0.851} & \textbf{0.886} & \textbf{0.911} & 0.703\\
$N=90$    & KG-WDRO (Weak) & 0.765 & 0.788 & 0.825 & 0.851 & 0.885 & 0.909 & -\\
          & Trans-Ridge & 0.671 & 0.724 & 0.785 & 0.821 & 0.863 & 0.890 & -\\
\bottomrule
\end{tabular}
\end{sc}
\end{small}
\end{center}
\caption{Out-of-sample $R^2$ for Simulation \ref{subsec:sim2}, comparing KG-WDRO (Strong), KG-WDRO (Weak), Trans-Ridge, and WDRO across six settings with varying values of $\rho$.}
\label{tab:sim2}
\end{table}
\newpage
\subsubsection{Simulation 3: Multi-Sites}
Here, recall that the notation $\varrho$ denote the correlation of generating the three prior knowledge under the scheme \eqref{sim:scheme}.
\begin{table}[H]
\begin{center}
\setlength{\tabcolsep}{4pt} 
\renewcommand{\arraystretch}{1.1} 
\begin{small}
\begin{sc}
\begin{tabular}{l|c|cccccc|r}
\toprule
Setting & & $\rho = 0.7$ & $\rho = 0.75$ & $\rho = 0.8$ & $\rho = 0.85$ & $\rho = 0.9$ & $\rho = 0.95$ & WDRO\\
\midrule
$[1,-0.5,0.2]$ & KG-WDRO & 0.560 & \textbf{0.640} & \textbf{0.713} & \textbf{0.783} & \textbf{0.850} & \textbf{0.916} & -0.584\\
$\varrho = 0.9,N=50$        & Trans-Lasso & \textbf{0.578} & 0.625 & 0.673 & 0.723 & 0.767 & 0.815 & - \\
\midrule
$[1,-0.5,0.2]$ & KG-WDRO & \textbf{0.674} & \textbf{0.728} & \textbf{0.776} & \textbf{0.825} & \textbf{0.875} & \textbf{0.926} & 0.027\\
$\varrho = 0.9,N=60$    & Trans-Lasso & 0.666 & 0.697 & 0.732 & 0.770 & 0.808 & 0.850 & -\\
\midrule
$[1,-0.5,0.2]$ & KG-WDRO & \textbf{0.793} & \textbf{0.820} & \textbf{0.848} & \textbf{0.878} & \textbf{0.907} & \textbf{0.939} & 0.375\\
$\varrho = 0.9,N=70$    & Trans-Lasso & 0.756 & 0.779 & 0.805 & 0.832 & 0.857 & 0.882 & -\\
\midrule
$[1,1,1]$ & KG-WDRO & 0.565 & 0.642 & 0.715 & 0.785 & \textbf{0.852} & \textbf{0.916} & -2.837\\
$\varrho = 0.6, N=50$    & Trans-Lasso & \textbf{0.628} & \textbf{0.680} & \textbf{0.735} & \textbf{0.790} & {0.838} & {0.889} & - \\
\midrule
$[1,1,1]$ & KG-WDRO & 0.673 & 0.729 & 0.778 & \textbf{0.829} & \textbf{0.877} & \textbf{0.928} & -0.015\\
$\varrho = 0.6, N=60$    & Trans-Lasso & \textbf{0.708} & \textbf{0.744} & \textbf{0.786} & {0.826} & {0.863} & {0.902} & - \\
\midrule
$[1,1,1]$ & KG-WDRO & \textbf{0.797} & \textbf{0.825} & \textbf{0.852} & \textbf{0.880} & \textbf{0.911} & \textbf{0.942} & 0.354\\
$\varrho = 0.6, N=70$    & Trans-Lasso & {0.794} & {0.820} & {0.844} & {0.868} & {0.894} & {0.919} & - \\
\bottomrule
\end{tabular}
\end{sc}
\end{small}
\end{center}
\caption{Out-of-sample $R^2$ for Simulation \ref{subsec:sim3}, comparing KG-WDRO, Trans-Lasso, and WDRO across six settings with varying values of $\rho$.}
\label{tab:sim3}
\end{table}

\subsection{Simulation Setup}
Let ${\rm Ber}(p)$ denote a bernoulli distribution with probability parameter $p$, $\mathcal{U}[a,b]$ denote a uniform distribution supported on $[a,b]$, and $\mathcal{N}(\mu,\sigma^2)$ denote a univariate normal distribution with mean $\mu$ and variance $\sigma^2$.

\subsubsection{Simulation 1: Logistic Regression}  
In this simulation, the coefficients are generated in a high-dimensional sparse setting. The dimension of the nonzero components is set to 50, which is then augmented with 100 zero components to introduce sparsity. The nonzero components of the true coefficient-prior pair $(\beta, \theta)$ are generated using the multivariate normal scheme in \eqref{sim:scheme}, with component variance $\sigma^2 = 0.4$ and $\rho \in \{0.3, 0.5, 0.7, 0.8, 0.9, 0.95\}$. The target labels are generated as $Y_{\rm target} \sim {\rm Ber}\left( 1/(1+\exp{(\shortminus\beta\trans X)} \right)$, and the source labels are generated as $Y_{\rm source} \sim  {\rm Ber}\left( 1/(1+\exp{(\shortminus \theta\trans X)} \right)$, where $X \sim \mathcal{U}[-2, 2]^{150}$. The sample size $N$ for $(X_{\rm target}, Y_{\rm target})$ is varied across $\{20, 50, 80\}$, while the sample size for the source data $(X_{\rm source}, Y_{\rm source})$ is fixed at 800. Each dataset is paired with a validation set of the same size for hyperparameter selection.

Let $\rm grid_1$ denote a hyperparameter grid ranging from $0.0001$ to $1$ with $10$ log-spaced values, and let $\rm grid_2$ denote a hyperparameter grid ranging from $0.0001$ to $2$ with $20$ log-spaced values. The $\beta_{\rm WDRO}$ estimator is learned by selecting the best-performing hyperparameter on $\rm grid_1$ using validation data. For the $\mathcal{A}$-Trans-GLM learner \citep[Algorithm 1]{tian2023transglm}, the transferring step is optimized using $\rm grid_1$, and the debiasing step is optimized using $\rm grid_2$. For the KG-WDRO learner $\beta_{\rm KG}$ proposed in Theorem \ref{thm:logistic} with $p=1$, the prior $\theta$ is first learned from the source data using the vanilla WDRO method on $\rm grid_1$, followed by learning $\beta_{\rm KG}$ on $\rm grid_2$ with the learned $\theta_{\rm WDRO}$ as input.

The simulations are conducted on the parameter grid $$N \in \{20, 50, 80\} \times \rho \in \{0.3, 0.5, 0.7, 0.8, 0.9, 0.95\} \times s \in \{0.5, 1\},$$ with each configuration repeated $100$ times. The average results are reported.

\subsubsection{Simulation 2: Linear Regression}
In this simulation, the coefficients are generated in a high-dimensional correlated setting. The dimension of the coefficients is set to 100 and the components of the true coefficient-prior pair $(\beta, \theta)$ are generated using the multivariate normal scheme in \eqref{sim:scheme}, with component variance $\sigma^2 = 0.1$ and $\rho \in \{0.3, 0.5, 0.7, 0.8, 0.9, 0.95\}$. The target labels are generated as $Y_{\rm target} \sim \mathcal{N}(\beta\trans X , \sqrt{0.5})$, and the source labels are generated as $Y_{\rm source} \sim  \mathcal{N}(\theta\trans X , \sqrt{0.5})$, where $X\sim \mathcal{N}(\mathbf{0}, \Sigma)$ with \[
\Sigma_{i,j} = 
\begin{cases}
1 & \text{if } i = j, \\
0.3 & \text{if } i \neq j,
\end{cases}
\quad \text{for all } i, j = 1, 2, \dots, 100.
\] The sample size $N$ for $(X_{\rm target}, Y_{\rm target})$ is varied across $\{50, 70, 90\}$, while the sample size for the source data $(X_{\rm source}, Y_{\rm source})$ is fixed at 800. Each dataset is paired with a validation set of the same size for hyperparameter selection.

Let $\rm grid_1$ denote a hyperparameter grid ranging from $0.0001$ to $1$ with $10$ log-spaced values, and let $\rm grid_2$ denote a hyperparameter grid ranging from $0.0001$ to $1.5$ with $20$ log-spaced values. The $\beta_{\rm WDRO}$ estimator is learned by selecting the best-performing hyperparameter on $\rm grid_1$ using validation data. For the Trans-Ridge learner adapted from \citep[Algorithm 1]{li2021translasso}, the transferring step is optimized using $\rm grid_1$, and the debiasing step is optimized using $\rm grid_2$. For the KG-WDRO learner $\beta_{\rm KGstrong}$ proposed in Theorem \ref{thm:linear_l2} with $p=2$, and the $\beta_{\rm KGweak}$ learner proposed in Theorem \ref{thm:weak_l2}, the prior $\theta$ is first learned from the source data using the vanilla WDRO method on $\rm grid_1$, followed by learning $\beta_{\rm KGstrong}$ on $\rm grid_2$ with the learned $\theta_{\rm WDRO}$ as input. The $\lambda^{-1}$ grid for $\beta_{\rm KGweak}$ is 0.0001 to 8 with 20 log-spaced values.

The simulations are conducted on the parameter grid $$N \in \{50, 70, 90\} \times \rho \in \{0.3, 0.5, 0.7, 0.8, 0.9, 0.95\} \times s \in \{0.8, 1\},$$ with each configuration repeated $100$ times. The average results are reported.

\subsubsection{Simulation 3: Multiple Sites}
In this simulation, the coefficients are generated in a high-dimensional sparse setting. The dimension of the nonzero components is set to 50, which is then augmented with 100 zero components to introduce sparsity. The number of external source is 3, we generate the their coefficients $\theta_1,\theta_2,\theta_3$ using the scheme \eqref{sim:scheme}. We construct a linear combination, $\theta_S = a\theta_1 + b\theta_2 + c\theta_3$, and generate the target coefficient $\beta = \rho \theta_S + \varepsilon$, where $\varepsilon \sim N(0, (1-\rho^2)\mathrm{Var}(\theta_S))$, ensuring $\mathrm{Corr}(\beta, \theta_S) = \rho$. The target coefficient $\beta$ is then scaled to match the magnitude of $\theta_S$.

The target labels are generated as $Y_{\rm target} \sim \mathcal{N}(\beta\trans X , \sqrt{0.5})$, and the source labels are generated as $Y_{{\rm source},m} \sim  \mathcal{N}(\theta_m\trans X , \sqrt{0.5})$ for $m\in [3]$, where $X\sim \mathcal{N}(\mathbf{0}, \Sigma)$ with \[
\Sigma_{i,j} = 
\begin{cases}
1 & \text{if } i = j, \\
0.1 & \text{if } i \neq j,
\end{cases}
\quad \text{for all } i, j = 1, 2, \dots, 150.
\] The sample size for the target data ranges in $\{50,60,70\}$.

Let $\rm grid_1$ denote a hyperparameter grid ranging from $0.0001$ to $1$ with $15$ log-spaced values, and let $\rm grid_2$ denote a hyperparameter grid ranging from $0.0001$ to $3$ with $20$ log-spaced values. The $\beta_{\rm WDRO}$ estimator is learned by selecting the best-performing hyperparameter on $\rm grid_1$ using validation data. For the oracle Trans-Lasso learner \citep[Algorithm 1]{li2021translasso}, the transferring step is optimized using $\rm grid_1$, and the debiasing step is optimized using $\rm grid_2$ using all three source data. For the KG-WDRO learner $\beta_{\rm KG}$ proposed in Theorem \ref{thm:linear_l2} with $p=1$, the priors $\theta_1,\theta_2,\theta_3$ are first learned from the three source data using the vanilla WDRO method on $\rm grid_1$, followed by learning $\beta_{\rm KG}$ on $\rm grid_2$ with the learned $\theta_{1,\rm WDRO},\theta_{2,\rm WDRO},\theta_{3,\rm WDRO}$ as input.

The simulations are conducted on the parameter grid $$N \in \{50, 60, 70\} \times \rho \in \{0.3, 0.5, 0.7, 0.8, 0.9, 0.95\} \times [a,b,c] \in \{[1,-0.5,0.2],[1,1,1]\},$$ with each configuration repeated $100$ times. The average results are reported.

\section{Proof of Results in Regression.}
\begin{lemma}
\label{lem:1}
    Let $f_\beta:\mathbb{R}^d \to \mathbb{R}$ be defined as $\Delta\in\mathbb{R}^d \mapsto (\beta\trans \Delta)^2 - 2r(\beta)\beta\trans \Delta$ depending on some $\beta \in \mathbb{R}^d$ and let $r(\beta)$ be a non-negative real-valued function in $\beta$. Then the convex conjugate $f_\beta^*(\Delta^*):\mathbb{R}^d\to\mathbb{R}$ is given by \[
    f_\beta^*(\Delta^*) = \begin{cases}
        \dfrac{(\beta\trans \Delta^* + 2r(\beta)\lVert \beta \rVert_2^2)^2}{4\lVert \beta\rVert_2^4} & \text{if } \Delta^* \in \spn{\beta},\\
        +\infty & \text{otherwise}.
    \end{cases}
    \]Therefore the biconjugate $f_\beta^{**}(\Delta):\mathbb{R}^d\to \mathbb{R}$ of $f_\beta(\Delta)$ has representation:
    \[
    f_\beta^{**}(\Delta) = \sup_{\alpha\in \mathbb{R}} \left(\alpha (\beta\trans\Delta) - \dfrac{(\alpha+2r(\beta))^2}{4}\right).
    \]
\end{lemma}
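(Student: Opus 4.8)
The plan is to compute the convex conjugate $f_\beta^*$ directly from its definition and then obtain the biconjugate by substituting this conjugate into a second conjugation. Throughout, $\beta$ and hence the scalar $r(\beta)$ are held fixed, so all suprema are taken over the argument $\Delta$ (respectively $\Delta^*$) in $\mathbb{R}^d$. The crucial structural observation is that $f_\beta(\Delta) = (\beta\trans\Delta)^2 - 2r(\beta)\,\beta\trans\Delta$ depends on $\Delta$ only through the one-dimensional projection $t \coloneqq \beta\trans\Delta$; it is therefore degenerate along the $(d-1)$-dimensional subspace orthogonal to $\beta$. This degeneracy is exactly what forces the $+\infty$/$\spn\{\beta\}$ dichotomy, and handling it correctly is the main point of the argument.

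First I would write $f_\beta^*(\Delta^*) = \sup_{\Delta}\{(\Delta^*)\trans\Delta - (\beta\trans\Delta)^2 + 2r(\beta)\beta\trans\Delta\}$ and decompose $\Delta = (t/\Vert\beta\Vert_2^2)\beta + \Delta^\perp$, where $t = \beta\trans\Delta \in \mathbb{R}$ and $\Delta^\perp \perp \beta$. The objective splits into $(\Delta^*)\trans\Delta^\perp$ plus a function of $t$ alone, and the two pieces can be maximized independently. Maximizing over $\Delta^\perp$ in $\beta^\perp$ gives $+\infty$ unless $\Delta^*$ has no orthogonal component, i.e. unless $\Delta^* \in \spn\{\beta\}$; in that case the term vanishes. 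On the event $\Delta^* \in \spn\{\beta\}$ the residual problem is the one-dimensional concave quadratic $\sup_{t}\{t\,(\beta\trans\Delta^*/\Vert\beta\Vert_2^2) - t^2 + 2r(\beta)t\}$, whose maximum is $\tfrac14(\beta\trans\Delta^*/\Vert\beta\Vert_2^2 + 2r(\beta))^2$; clearing the denominators yields the stated closed form $(\beta\trans\Delta^* + 2r(\beta)\Vert\beta\Vert_2^2)^2/(4\Vert\beta\Vert_2^4)$.

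For the biconjugate, I would apply the definition $f_\beta^{**}(\Delta) = \sup_{\Delta^*}\{(\Delta^*)\trans\Delta - f_\beta^*(\Delta^*)\}$. Because $f_\beta^*$ is $+\infty$ off $\spn\{\beta\}$, the supremum is effectively restricted to $\Delta^* = \alpha\beta$ with $\alpha \in \mathbb{R}$. Substituting this parametrization gives $\beta\trans\Delta^* = \alpha\Vert\beta\Vert_2^2$ and $(\Delta^*)\trans\Delta = \alpha\,\beta\trans\Delta$, while the conjugate term simplifies to $f_\beta^*(\alpha\beta) = (\alpha + 2r(\beta))^2/4$ after the $\Vert\beta\Vert_2^4$ factors cancel. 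Renaming the scalar, this is exactly the claimed representation $f_\beta^{**}(\Delta) = \sup_{\alpha}\{\alpha(\beta\trans\Delta) - (\alpha+2r(\beta))^2/4\}$.

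The only genuine obstacle is the bookkeeping around the degenerate direction: one must verify that the orthogonal supremum is $+\infty$ precisely when $\Delta^*\notin\spn\{\beta\}$ and is otherwise inert, so that the effective domain of $f_\beta^*$ collapses to the line $\spn\{\beta\}$. Once this is established the remaining steps are elementary single-variable quadratic maximizations. One may optionally remark that since $f_\beta$ is a convex, continuous function of $\Delta$, Fenchel--Moreau guarantees $f_\beta^{**} = f_\beta$, so the representation also furnishes a variational linearization of the quadratic $(\beta\trans\Delta)^2$ through the auxiliary scalar $\alpha$, which is presumably how the lemma will be used downstream.
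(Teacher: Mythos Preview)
Your proposal is correct and follows essentially the same approach as the paper: orthogonally decompose $\Delta$ along $\beta$, observe that the supremum over the orthogonal component is $+\infty$ unless $\Delta^*\in\spn\{\beta\}$, solve the remaining one-dimensional quadratic, and then parametrize $\Delta^*=\alpha\beta$ to obtain the biconjugate representation. The only cosmetic difference is that the paper writes $\Delta=a\beta+\omega$ (so the scalar variable is the coefficient $a$) while you use $t=\beta\trans\Delta$ directly; these are related by $a=t/\lVert\beta\rVert_2^2$ and yield the same computation, and both arguments close by invoking Fenchel--Moreau.
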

\begin{proof}
    The convex conjugate $f_\beta^*(\Delta^*)$ is defined as \[
    f_\beta^*(\Delta^*) \coloneqq \sup_{\Delta\in\mathbb{R}^d} \big( \Delta^*{\trans}\Delta - (\beta\trans\Delta)^2 + 2r(\beta) (\beta\trans\Delta) \big),
    \]where $\Delta^*,\beta \in\mathbb{R}^d$ and $r(\beta)\in\mathbb{R}$ are taken as fixed values. Orthogonalize $\Delta = a\beta + \omega$ in the direction of $\beta$ with $a\in\mathbb{R}$, and $\omega\in\mathbb{R}^d$ such that $\beta\trans \omega = 0$. Then , we have $\Delta^*{\trans}\Delta = a\Delta^*{\trans}\beta + \Delta^*{\trans}\omega$, and the convex conjugate becomes \begin{align*}
         f^*(\Delta^*) &= \sup_{a,\omega}\big(a (\Delta^*{\trans}\beta) + \Delta^*{\trans}\omega - a^2\Vert \beta\Vert_2^4 + 2ar(\beta)\Vert\beta\Vert_2^2 \big)\\
         \rm{s.t}&\quad \beta\trans\omega = 0.
    \end{align*}
    Fixing $\omega$, the objective is a negative quadratic function in $a$, hence the objective in $a$ is bounded from above by a finite value. Now, if $\Delta^*$ is not orthogonal to $\omega$, the term $\sup_\omega \Delta^*{\trans}\omega$ is unbounded, and the convex conjugate $f^*(\Delta^*) = +\infty$. If $\Delta^*$ is orthogonal to $\omega$, then the convex conjugate attains finite value. Note that $\Delta^*{\trans} \omega = 0 \iff \Delta^* \in \spn{\beta}$. Hence condition on $\{\Delta^* = \alpha \beta\,;\alpha\in\mathbb{R}\}$, we have \begin{align*}
        f^*(\Delta^*) &= \sup_a \big(a(\Delta^*{\trans}\beta) - a^2 \Vert \beta\Vert_2^4 + 2r(\beta) a\Vert\beta\Vert_2^2 \big)\\
        &=\dfrac{\big(\Delta^*{\trans}\beta + 2r(\beta) \Vert \beta\Vert_2^2\big)^2}{4\Vert \beta\Vert_2^4},
    \end{align*} where the optimal solution is $a^* = \dfrac{\alpha + 2r(\beta)}{2\Vert \beta\Vert_2^2}$, and the coefficient $\alpha$ is given by the projection scalar $\alpha = \dfrac{\Delta^*{\trans}\beta}{\Vert\beta\Vert_2^2}$.

    The biconjuagte \[f^{**}(\Delta) = \sup_{\Delta^*} \big(\Delta\trans\Delta^* - f^*(\Delta^*)\big),\]
    is therefore bounded from below if and only if $\Delta^* \in \spn{\beta}$. Let $\Delta^* = \alpha \beta$ for some $\alpha \in \mathbb{R}$, then substituting we get the representation,
    \begin{align*}
        f^{**}(\Delta) &= \sup_{\alpha} \left( \Delta\trans (\alpha\beta) - \dfrac{\big(\beta\trans(\alpha \beta) + 2r(\beta) \Vert \beta\Vert_2^2\big)^2}{4\Vert\beta\Vert_2^4}\right)\\
        &= \sup_{\alpha} \left(\alpha (\Delta\trans\beta) - \dfrac{(\alpha + 2r(\beta))^2}{4} \right).
    \end{align*}
    It can be readily verified that $f^{**}(\Delta) = f(\Delta)$ as promised by the \textit{Fenchel-Moreau Theorem} (Theorem \ref{thm:fenchel-moreau}).
\end{proof}

\begin{lemma}
\label{lem:2}
    Let $g_\theta(\Delta):\mathbb{R}^d\to \mathbb{R}$ be defined as $\Delta\in\mathbb{R}^d \mapsto |\theta\trans \Delta|$ for some $\theta\in \mathbb{R}^d$. Then the convex conjugate $g_\theta^*(\Delta^*)$ is given by \[
    g_\theta^*(\Delta^*) = \begin{cases}
     0 &\text{if } \Delta^* = \alpha \theta \text{ and } |\alpha| \leq 1,\\
     +\infty &\text{otherwise.}
    \end{cases}
    \]Therefore the convex conjugate of the function $g(\Delta) \coloneqq \gamma \sum_{m=1}^Mg_{\theta_m}(\Delta)$ for some $\gamma > 0$ is given by \[
    g^*(\Delta^*) = \begin{cases}
        0 &\text{if } \Delta^* = \sum_{m=1}^M \alpha_m \theta_m \text{ and }, |\alpha_m| \leq \gamma \text{ for each }m, \\
        +\infty & \text{otherwise.}
    \end{cases}
    \]
\end{lemma}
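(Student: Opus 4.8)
The plan is to compute both conjugates directly from the Fenchel definition, following the orthogonal-decomposition strategy already used in Lemma~\ref{lem:1}. For the single-source conjugate I would write $g_\theta^*(\Delta^*) = \sup_{\Delta}\big(\Delta^*{\trans}\Delta - |\theta\trans\Delta|\big)$ and decompose $\Delta = a\theta + \omega$ with $\theta\trans\omega = 0$, so that $\theta\trans\Delta = a\Vert\theta\Vert_2^2$ while the objective acquires an unconstrained linear term $\Delta^*{\trans}\omega$. The first observation is that this term is unbounded above in $\omega$ unless $\Delta^*$ has no component orthogonal to $\theta$, i.e. unless $\Delta^* = \alpha\theta$; this forces $g_\theta^* = +\infty$ off $\spn\{\theta\}$. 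On the line $\Delta^* = \alpha\theta$ the problem collapses to the scalar supremum $\Vert\theta\Vert_2^2\sup_{a}\big(\alpha a - |a|\big)$, which equals $0$ when $|\alpha|\leq 1$ (attained at $a=0$) and $+\infty$ when $|\alpha|>1$ (send $a\to\pm\infty$). This yields the stated indicator form for $g_\theta^*$.

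For the aggregated function $g = \gamma\sum_{m} g_{\theta_m}$, I would first note that $g$ is sublinear (convex and positively homogeneous of degree one), so $\Delta^*{\trans}\Delta - g(\Delta)$ is positively homogeneous of degree one in $\Delta$; hence its supremum is either $0$ or $+\infty$, and $g^*(\Delta^*) = 0$ precisely when $\Delta^*{\trans}\Delta \leq g(\Delta)$ for every $\Delta$. The key step is to identify the feasible set of this family of inequalities as the zonotope $K \coloneqq \{\sum_{m}\alpha_m\theta_m : |\alpha_m|\leq\gamma \text{ for each } m\}$, namely the Minkowski sum of the segments $[-\gamma\theta_m,\gamma\theta_m]$. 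The easy inclusion, that every element of $K$ satisfies the inequality, follows from the triangle inequality, since $\Delta^*{\trans}\Delta = \sum_{m}\alpha_m\theta_m\trans\Delta \leq \gamma\sum_{m}|\theta_m\trans\Delta| = g(\Delta)$ whenever $|\alpha_m|\leq\gamma$.

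The main obstacle is the reverse inclusion: showing that any $\Delta^*$ satisfying $\Delta^*{\trans}\Delta\leq g(\Delta)$ for all $\Delta$ must lie in $K$. I would establish this by recognizing $g$ as the support function of $K$, that is $\sigma_K(\Delta) = \sup_{\Delta^*\in K}\Delta^*{\trans}\Delta = \sum_{m}\gamma|\theta_m\trans\Delta| = g(\Delta)$, and then invoking the separating-hyperplane theorem: since $K$ is compact and convex, any $\Delta^*\notin K$ can be strictly separated from it, producing a direction $\Delta$ with $\Delta^*{\trans}\Delta > \sigma_K(\Delta) = g(\Delta)$, a contradiction. An alternative route that sidesteps the separation argument uses the scaling identity $(\gamma g_{\theta_m})^*(\cdot) = \gamma g_{\theta_m}^*(\cdot/\gamma)$ together with the rule that the conjugate of a finite sum of proper convex functions equals the infimal convolution of their conjugates, which holds here without any constraint qualification because each $g_{\theta_m}$ is finite on all of $\mathbb{R}^d$; then $g^*$ reduces to the infimal convolution of the indicators of the segments $[-\gamma\theta_m,\gamma\theta_m]$, which is exactly the indicator of their Minkowski sum $K$. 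Either way one arrives at $g^* = \iota_K$, the claimed form.
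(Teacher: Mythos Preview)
Your proof is correct. For the single-source conjugate your orthogonal decomposition is exactly the paper's argument (the paper makes the further change of variable $u=\theta\trans\Delta$ to reduce to the one-dimensional conjugate of $|\cdot|$, but this is cosmetic). For the aggregated conjugate the paper goes directly to your \emph{alternative} route: it applies the infimal convolution identity $(g_{\theta_1}+\cdots+g_{\theta_M})^*=g_{\theta_1}^*\,\square\cdots\square\,g_{\theta_M}^*$ together with the scaling law $(\gamma\bar g)^*(\cdot)=\gamma\,\bar g^*(\cdot/\gamma)$, yielding the indicator of the Minkowski sum of the segments $[-\gamma\theta_m,\gamma\theta_m]$ in one stroke. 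Your primary route---recognizing $g$ as the support function $\sigma_K$ of the zonotope $K$ and invoking separation to conclude $g^*=\iota_K$---is a genuinely different, more geometric argument; it is self-contained (no infimal-convolution theorem needed) and makes the set $K$ appear naturally, whereas the paper's approach is shorter once the convex-analysis machinery is in hand. Both land on the same indicator, and since you also sketch the paper's route, there is no gap.
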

\begin{proof}
    The convex conjugate is defined as \[
    g_\theta^*(\Delta^*) = \sup_{\Delta} \big(\Delta^*{\trans}\Delta - |\theta\trans\Delta| \big),
    \]again, orthogonalize $\Delta = a \theta + \omega$, where $a = \dfrac{\theta\trans \Delta}{\Vert \theta\Vert_2^2}$ and $\theta\trans\omega = 0$. Now by the change of variable $u \coloneqq \theta\trans\Delta$, the convex conjugate is now \begin{align*}
        g^*_\theta(\Delta^*) &= \sup_{u,\omega}\left(\dfrac{u}{\Vert \theta\Vert_2^2}(\Delta^*{\trans}\theta) + \Delta^*{\trans}\omega - |u| \right)\\
        \text{s.t }\quad &\theta\trans \omega = 0.
    \end{align*}Thus the convex conjugate $g_\theta^*(\Delta^*) = + \infty$ if $\Delta^* \not\in \spn{\theta}$. If $\Delta^* = \alpha \theta$ for some $\alpha\in\mathbb{R}$, then \begin{align*}
        g^*_\theta(\Delta^*) = g^*_\theta(\alpha\theta) &= \sup_{u} \left(\dfrac{u}{\Vert \theta\Vert_2^2 }\alpha \Vert \theta\Vert_2^2 - |u| \right)\\
        &= \sup_{u} \big( \alpha u - |u|\big)\\
        &= \begin{cases}
            0 &\text{if } |\alpha|\leq 1,\\
            +\infty &\text{otherwise,}
         \end{cases}
    \end{align*}
    where the last equality holds by noting that $\sup_u \alpha u - |u| = |\cdot|^*(\alpha)$ is the convex conjugate of the absolute value function (Proposition \ref{prop:examples_conjugates}). This proofs the convex conjugate of $g_\theta^*(\Delta^*)$. Now $g(\Delta) = \gamma \sum_{m=1}^M g_{\theta_m}(\Delta)=\gamma \bar{g}(\Delta)$, the convex conjugate of $\bar{g}(\Delta)$ is \begin{align*}
        \bar{g}^*(\Delta^*) &= (g_{\theta_1}+\ldots +g_{\theta_M})^*(\Delta^*) \\
        &=\inf_{\Delta^*} \big(g^*_{\theta_1}(\Delta^*_1) + \ldots + g_{\theta_M}^*(\Delta_M^*) \big) \,\,\, \text{ s.t}\,\,\, \Delta_1^* + \ldots +\Delta_M^* = \Delta^*,
    \end{align*}
    where the second line follows from the \textit{infimal convolution property} of sum of convex conjugates (Theorem \ref{thm:inf_conv}). We know that $\bar{g}^*$ is finite if and only if $g^*_{\theta_m}(\Delta_m^*) = 0$ for all $m\in[M]$, that is $\Delta_m^* = \alpha_m \theta_m$ for some $\alpha_m \in [-1,1]$ for all $m\in[M]$. Hence $\bar{g}^*(\Delta^*) = 0$ if and only if $\Delta^* = \sum_{m=1}^M\alpha_m \theta_m$ and $\alpha_m \in [-1,1]$ for all $m\in [M]$. Finally we can calculate the convex conjugate $g^*(\Delta^*) = (\gamma \bar{g})^*(\Delta^*) = 
 \gamma \bar{g}^*\left(\dfrac{\Delta^*}{\gamma} \right)$ by the scaling law of convex conjugates (Proposition \ref{prop:scaling_laws}) given $\gamma > 0$. This concludes the proof.
\end{proof}

We now give the proof to Theorem \ref{thm:linear_l2}.
\begin{proof}[\textbf{Proof of Theorem \ref{thm:linear_l2}}]
    Let $r(\beta) \coloneqq y - \beta\trans x$. Then first consider the cost function\[
    c_2\big((x,y),(u,v)\big) \coloneqq \Vert x-u\Vert_q^2 + \infty\cdot|y-v| +  d(\theta_1\trans x - \theta_1\trans u) + \ldots +  d(\theta_M\trans x - \theta_M\trans u).
    \]where we replaced the transferring strength from $+\infty$ to a finite-valued distance function $d(x):\mathbb{R}\to\mathbb{R}$ that is a monotone function in $|x|$, with $d(0)=0$. We will then let $d(x) \to \infty$ except at $x=0$. Then the supremum function \begin{align*}
        \phi_\gamma(x,y;\beta) = 
        \sup_{(u,v)\in \mathbb{R}^{d+1}} \big\{ \ell(u,v;\beta) - \gamma c\big((u,v),(x,y) \big)\big\},
    \end{align*}is finite if and only if $v = y$. Then, we have\begin{align*}
        &l\left( u,v;\beta \right) -\gamma c\left( \left( u,v\right) ,(x,y\right)
) \\
=&\left(y-\beta\trans u\right) ^{2}-\gamma \left\Vert x-u\right\Vert
_{q}^{2}-\gamma d\left( \theta_1\trans x-\theta_1\trans u\right)  - \ldots - \gamma d(\theta_M\trans x - \theta_M\trans u).
    \end{align*}Denote by $\Delta \coloneqq u-x$, we get
\begin{align*}
    &l\left( u,v;\beta \right) -\gamma c\left( \left( u,v\right) ,(x,y\right)) \\
    =&r(\beta)^2 + \big\{(\beta\trans \Delta)^2 - 2r(\beta)\beta\trans\Delta - \gamma\Vert\Delta\Vert_q^2 - \gamma d(\theta_1\trans\Delta) - \ldots-\gamma d(\theta_M\trans \Delta) \big\}.
\end{align*}
Consider the objective in $\Delta$ \begin{align*}
    &\sup_{\Delta} \big\{(\beta\trans \Delta)^2 - 2r(\beta)\beta\trans\Delta - \gamma\Vert\Delta\Vert_q^2 - \gamma d(\theta_1\trans\Delta) - \ldots-\gamma d(\theta_M\trans \Delta) \big\}\\
    \coloneqq &\sup_{\Delta} \big\{ f_\beta(\Delta) - g(\Delta)\big\},
\end{align*}
where we let $f_\beta(\Delta)\coloneqq (\beta\trans \Delta)^2 - 2r(\beta)\beta\trans\Delta$ and $g(\Delta) \coloneqq\gamma\Vert\Delta\Vert_q^2 + \gamma d(\theta_1\trans\Delta) + \ldots +\gamma d(\theta_M\trans \Delta)$. This is a convex + concave optimization, we express the convex part of $f_\beta(\Delta)$ as a supremum of infinitely many affine functions. Then by Lemma \ref{lem:1}, we have $f_\beta(\Delta) = f_\beta^{**}(\Delta) = \sup_{\alpha\in \mathbb{R}} \left(\alpha (\beta\trans\Delta) - \dfrac{(\alpha+2r(\beta))^2}{4}\right)$, then we may write 
\begin{align*}
\label{eqn:duality}
    &\sup_{\Delta} \big\{f_\beta(\Delta) -g(\Delta) \big\}\\
    =&\sup_{\Delta} \left\{ \sup_{\alpha\in \mathbb{R}} \left(\alpha (\beta\trans\Delta) - \dfrac{(\alpha+2r(\beta))^2}{4}\right) - g(\Delta)\right\}\\
    =&\sup_\alpha \left\{ \sup_\Delta \big(\Delta \trans(\alpha \beta)-g(\Delta)\big) - \dfrac{\big(\alpha + 2r(\beta)\big)^2}{4} \right\}\\
    =&\sup_\alpha \left\{ g^*(\alpha\beta) - \dfrac{\big(\alpha + 2r(\beta)\big)^2}{4}\right\}, \tag{Toland's Duality}
\end{align*}where $g^*$ is the convex conjugate of $g$. Let $g(\Delta) \coloneqq g_1(\Delta) +g_\theta(\Delta)$, with $g_1(\Delta) = \gamma \Vert \Delta\Vert_q^2$ and $g_\theta(\Delta) \coloneqq \gamma\sum_{m=1}^M d(\theta_m\trans \Delta)$. Then we can compute the convex conjugate of $g$ using the \textit{infimal convolution property} (Theorem \ref{thm:inf_conv}). Then \begin{align*}
    g^*(\Delta^*) &=\inf_{\Delta^*_1+\Delta^*_2 = \Delta^*} \big( g_1^*(\Delta_1^*) + g_\theta^*(\Delta_2^*)\big). 
\end{align*}
We know that $g_1^*(\Delta_1^*) = \dfrac{1}{4\gamma}\Vert\Delta_1^*\Vert_p^2$, where $p^{-1}+q^{-1}=1$ (Proposition \ref{prop:examples_conjugates}). Now suppose $d(x) = \lambda|x|$ for some $\lambda>0$, by Lemma \ref{lem:2}, we have,\[
    g_\theta^*(\Delta_2^*) = \begin{cases}
        0 &\text{if } \Delta_2^* = \sum_{m=1}^M \alpha_m \theta_m \text{ and }, |\alpha_m| \leq \gamma\lambda \text{ for each }m, \\
        +\infty & \text{otherwise.}
    \end{cases}
    \]
Then the convex conjugate $g^*(\Delta^*)$ is \begin{align*}
    g^*(\Delta^*) &= \inf_{\Delta_2^*} g_1^*(\Delta^*-\Delta_2^*),\\
    \text{s.t }\,\Delta_2^* = \sum_{m=1}^M \alpha_m &\theta_m \text{ and }, |\alpha_m| \leq \gamma\lambda \text{ for each }m,
\end{align*}which is equivalently,
\begin{align*}
    g^*(\Delta^*) &= \dfrac{1}{4\gamma}\inf_{\boldsymbol{\alpha}}\left\Vert \Delta^* - \sum_{m=1}^M\alpha_m\theta_m\right\Vert_p^2,\\
    &\text{s.t }\,|\alpha_m| \leq \gamma\lambda \text{ for each }m.
\end{align*}Letting $\lambda \to\infty$, we recover the cost function $c_{2,\infty}$, and when $\lambda \to \infty$, each $\alpha_m$ is now free in $\mathbb{R}$. Then we have $g^*(\Delta^*) = \dfrac{1}{4\gamma} \inf_{\vartheta \in \Theta} \Vert\Delta^* -\vartheta\Vert_p^2$, with $\Theta \coloneqq \spn{\{\theta_1,\ldots,\theta_M\}}$, the validity of this tactic follows from \citep[Theorem 1, Section 13.1]{luenberger2008linear}. Then we have $g^*(\alpha\beta) = \dfrac{1}{4\gamma}\inf_{\vartheta\in\Theta} \Vert \alpha \beta - \vartheta\Vert_p^2.$ Suppose $\alpha \neq 0,$ then dividing by $\alpha$, we get \[
g^*(\alpha\beta) = \dfrac{\alpha^2}{4\gamma}\inf_{\vartheta\in\Theta}\Vert \beta - \vartheta\Vert_p^2.
\]If $\alpha = 0$, then $g^*(\alpha\beta) = g^*(\boldsymbol{0}) = \dfrac{1}{4\gamma}\inf_\vartheta \Vert\vartheta\Vert_p^2 = 0$, so the representation $g^*(\alpha\beta) = \dfrac{\alpha^2}{4\gamma}\inf_{\vartheta\in\Theta}\Vert \beta - \vartheta\Vert_p^2,$ is valid for all $\alpha\in\mathbb{R}$. Therefore following the proof of \citep[Theorem 1]{blanchet2019rwpi}, \begin{align*}
    \phi_\gamma(x,y;\beta) &= r(\beta)^2 + \dfrac{1}{4}\sup_{\alpha} \left\{\dfrac{\alpha^2}{\gamma}\inf_{\vartheta\in\Theta} \Vert \beta - \vartheta\Vert_p^2-\big(\alpha +  2r(\beta)\big)^2\right\}\\
    &=\dfrac{1}{4}\sup_{\alpha}\left\{\left(\dfrac{\inf_\vartheta\Vert \beta - \vartheta\Vert_p^2}{\gamma}-1\right)\alpha^2 - 4r(\beta)\alpha\right\}\\
    &= \begin{cases}
        \dfrac{r(\beta)^2\gamma}{\gamma-\inf_\vartheta\Vert\beta-\vartheta\Vert_p^2} &\text{if }\inf_\vartheta\Vert\beta-\vartheta\Vert_p^2 \leq \gamma,\\
        +\infty&\text{otherwise.}
    \end{cases}
\end{align*}
Then the minimization objective can be simplified as \begin{align*}
  &\inf_{\beta \in \mathbb{R}^d} \min_{\gamma \geq 0} 
\left\{
\gamma \delta + \dfrac{1}{n} \sum_{i=1}^N \phi_\gamma(x_i, y_i; \beta)
\right\}\\
= &\inf_{\beta} \inf_{\gamma \geq \inf_\vartheta \lVert \beta - \vartheta \rVert_p^2} 
\left\{
\gamma \delta + \dfrac{1}{n} \sum_{i=1}^N \dfrac{r_i(\beta)^2 \gamma}{\gamma - \inf_\vartheta \lVert \beta - \vartheta \rVert_p^2}
\right\}\\
=&\inf_\beta \inf_{\gamma \geq \inf_\vartheta \lVert \beta - \vartheta \rVert_p^2} 
\left\{
\gamma \delta + \text{MSE}(\beta) \dfrac{ \gamma}{\gamma - \inf_\vartheta \lVert \beta - \vartheta \rVert_p^2}
\right\}\\
=&\inf_\beta \left(\sqrt{\text{MSE}(\beta)} + \sqrt{\delta}\inf_\vartheta \Vert \beta-\vartheta\Vert_p\right)^2,
\end{align*}where the last equality follows because $\gamma \delta + \dfrac{1}{n} \text{MSE}(\beta) \dfrac{ \gamma}{\gamma - \inf_\vartheta \lVert \beta - \vartheta \rVert_p^2}$ is a convex function in $\gamma$ that tends to $+\infty$ approaching the boundaries $\inf_\vartheta \Vert \beta-\vartheta\Vert_p^2$ and $+\infty$, so the optimization over $\gamma$ can be solved using first-order condition. Then by Proposition \ref{prop:duality}, strong duality holds and, \[\inf_{\beta} \sup_{\mathbb{P}:\mathcal{D}_{c_2}(\mathbb{P},\mathbb{P}_n)\leq \delta} \mathbb{E}_{\mathbb{P}}\left[(Y-\beta\trans X)^2\right] = \inf_{\beta,\vartheta}\left(\sqrt{\text{MSE}(\beta)} + \sqrt{\delta} \Vert \beta-\vartheta\Vert_p\right)^2.\]
This reduces the infinite-dimensional optimization to a finite-dimensional problem, where we interchanged $\inf_\vartheta$ and the quadratic function, since the quadratic function is monotone increasing on the positive reals.
\end{proof}

The next proof is to Theorem \ref{thm:weak_l2} with the weak transferring cost function $c_{2,\lambda}\big((x,y),(u,v) \big) = \Vert x-u\Vert_2^2 + \lambda (\theta\trans x - \theta\trans u)^2 + \infty \cdot |y-v|$ with some $\lambda>0$. The statements generalizes to multi-sites by first considering orthogonalizing the prior knowledge $\{\theta_1,\ldots,\theta_M\}$.

\begin{proof}[\textbf{Proof of Theorem \ref{thm:weak_l2}}]
    Following the proof of Theorem \ref{thm:linear_l2}, we solve the optimization problem \[
    \sup_{\Delta\in \mathbb{R}^d} \big((\beta\trans \Delta)^2-2r(\beta)\beta\trans \Delta - \gamma \Vert \Delta\Vert_2^2 - \gamma\lambda (\theta\trans\Delta)^2\big),
    \]where we recall that $\gamma$ is the dual-variable in statement of Proposition \ref{prop:duality}, $\lambda>0$ is the transferring strength, $\theta\in\mathbb{R}^d$ is the prior knowledge, and $r(\beta) = (y-\beta\trans x)^2$ is the residual in $\beta$. 

    Then let $\mathbb{O}$ be an orthogonal matrix, whose first column is $\theta/\Vert \theta\Vert_2$, then use $\widetilde{\Delta} \coloneqq \mathbb{O}^{-1}\Delta.$ The objective function now becomes \[
    (\beta\trans \mathbb{O}\widetilde{\Delta})^2 - 2r(\beta) \beta\trans \mathbb{O}\widetilde{\Delta} -\gamma\Vert \widetilde{\Delta}\Vert_2^2-\gamma \lambda\Vert\theta\Vert_2^2\widetilde{\Delta}_1^2,
    \]where the last term follows because $\theta\trans \mathbb{O} = (\Vert\theta\Vert_2,0,\ldots,0)$, and $\widetilde{\Delta}_1$ denotes the first component of $\widetilde{\Delta}$. Now define \[
    D = \diag{\left\{\sqrt{\lambda\Vert \theta\Vert_2^2+1},1,\ldots,1\right\}},
    \]and consider the change of variable $\bar{\Delta} = D\widetilde{\Delta}$, then the last two terms become
    \begin{align*}
    \Vert \widetilde{\Delta}\Vert_2^2 + \lambda\Vert\theta\Vert_2^2\widetilde{\Delta}_1^2=
    \Vert D^{-1}\bar{\Delta}\Vert_2^2 + \lambda \Vert\theta\Vert_2\dfrac{\bar{\Delta}_1^2}{{\lambda\Vert\theta\Vert_2^2+1}}
    =\sum_{i=1}^d \bar{\Delta}_d^2 = \Vert\bar{\Delta}\Vert_2^2.
    \end{align*}
    Therefore, the objective becomes \begin{align*}
        &\sup_{\bar{\Delta}} \big((\beta\trans \mathbb{O}D^{-1}\bar{\Delta})^2 - 2r(\beta)\beta\trans\mathbb{O}D^{-1}\bar{\Delta} - \gamma \Vert\bar{\Delta}\Vert_2^2\big) \\
        =&\sup_{\bar{\Delta}}\big(\Vert \beta\trans \mathbb{O}D^{-1}\Vert_2^2\Vert\bar{\Delta}\Vert_2^2 -2r(\beta)\Vert \beta\trans \mathbb{O}D^{-1}\Vert_2 \Vert\bar{\Delta}\Vert_2 - \gamma\Vert\bar{\Delta}\Vert_2^2)\\
        =&\sup_{\bar{\Delta}}\big( (\Vert \beta\Vert_{\Psi_\lambda} - \gamma)\Vert\bar{\Delta}\Vert_2^2 - 2r(\beta) \Vert\beta\Vert_{\Psi_\lambda} \Vert\bar{\Delta}\Vert_2\big)
    \end{align*}
    which has finite optimal value $\dfrac{r(\beta)^2\Vert \beta\Vert_{\Psi_\lambda}^2}{\gamma - \Vert \beta \Vert_{\Psi_\lambda}^2}$ whenever $\gamma \geq \Vert \beta \Vert_{\Psi_\lambda}$, with $\Psi_\lambda$ denoting the positive-definite symmetric matrix, \[
    \Psi_\lambda = I_d - \dfrac{1}{ \Vert \theta\Vert_2^2 + \lambda^{-1}}\theta \theta\trans,
    \]that is independent of the choice of $\mathbb{O}$. The first equality follows because we applied Cauchy-Schwarz inequality and since $\bar{\Delta}\in\mathbb{R}^d$ is free, there is some $\bar{\Delta}$ that achieves equality. The rest of the proof follows exactly along the proof of Theorem \ref{thm:linear_l2} by completing the optimization over the dual problem using Proposition \ref{prop:duality}.
\end{proof}

\section{Proof of Results in Classification.}
\begin{lemma}
\label{lem:3}
    Consider the convex function $h_\beta(x):\mathbb{R}^d\to\mathbb{R}$ by $x\in\mathbb{R}^d \mapsto \log{(1+\exp{(\shortminus\beta\trans x)})}$, for some $q>0$ and $x'\in \mathbb{R}$. Then for every $\gamma > 0$, the constraint optimization problem $H_\beta(x')$ defined as,
    \begin{align*}
        \sup_{x\in\mathbb{R}^p} \quad &h_\beta(x) - \gamma \lVert x'-x\rVert_q,\\
        \rm s.t \quad &\theta\trans (x'-x) = 0,
    \end{align*}
    has optimal objective value, \[
    H_\beta(x') = \begin{cases}
        h_\beta(x') & \text{if } \inf_{\kappa\in\mathbb{R}} \lVert \beta - \kappa \theta\rVert_p \leq \gamma,\\
        +\infty & \text{otherwise,}
    \end{cases}
    \]where $p,q\in[1,\infty)$ with $p^{-1} + q^{-1} = 1$.
\end{lemma}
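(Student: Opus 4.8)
The plan is to pass to the perturbation variable $\Delta \coloneqq x - x'$, so that the constraint $\theta\trans(x'-x)=0$ becomes $\theta\trans\Delta = 0$ and, writing $\ell(t)\coloneqq\log(1+e^{-t})$ for the univariate logistic link, the objective becomes
\[
H_\beta(x') = \sup_{\Delta:\,\theta\trans\Delta=0}\big\{\ell(\beta\trans x' + \beta\trans\Delta) - \gamma\Vert\Delta\Vert_q\big\}.
\]
Taking $\Delta = 0$ immediately gives the lower bound $H_\beta(x')\ge \ell(\beta\trans x') = h_\beta(x')$, which already matches the claimed value in the finite case. Two elementary properties of $\ell$ will do all the analytic work: it is $1$-Lipschitz, since $\ell'(t) = -1/(1+e^{t})\in(-1,0)$, and it obeys the clean linear lower bound $\ell(t)\ge -t$ for all $t$.

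The one genuinely non-routine ingredient is a constrained dual-norm identity, which I would establish first:
\[
\sup_{\Vert\Delta\Vert_q\le 1,\;\theta\trans\Delta=0}\big(-\beta\trans\Delta\big) = \inf_{\kappa\in\mathbb{R}}\Vert\beta-\kappa\theta\Vert_p.
\]
This follows from Lagrangian (minimax) duality applied to the single linear equality constraint: dualizing $\theta\trans\Delta=0$ with multiplier $\kappa$ gives $\inf_\kappa\sup_{\Vert\Delta\Vert_q\le1}(-\beta+\kappa\theta)\trans\Delta = \inf_\kappa\Vert\beta-\kappa\theta\Vert_p$, using that the support function of the unit $q$-ball is the dual $p$-norm. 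Strong duality is justified by Slater's condition, since $\Delta=0$ lies in the interior of the $q$-ball and satisfies the constraint; moreover the feasible set is compact, so the supremum is attained at some $\Delta_0$.

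For the finite case $\inf_\kappa\Vert\beta-\kappa\theta\Vert_p\le\gamma$, only the easy Hölder direction is needed. For any $\Delta$ with $\theta\trans\Delta=0$ and any $\kappa$, one has $|\beta\trans\Delta| = |(\beta-\kappa\theta)\trans\Delta|\le\Vert\beta-\kappa\theta\Vert_p\Vert\Delta\Vert_q$; minimizing over $\kappa$ yields $|\beta\trans\Delta|\le\gamma\Vert\Delta\Vert_q$. Combining this with the $1$-Lipschitz bound $\ell(\beta\trans x'+\beta\trans\Delta)-\ell(\beta\trans x')\le|\beta\trans\Delta|$ shows the objective never exceeds $h_\beta(x')$, which together with the trivial lower bound gives $H_\beta(x')=h_\beta(x')$.

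For the blow-up case $\inf_\kappa\Vert\beta-\kappa\theta\Vert_p > \gamma$, the duality identity supplies a direction $\Delta_0$ with $\Vert\Delta_0\Vert_q=1$, $\theta\trans\Delta_0=0$, and $-\beta\trans\Delta_0 = c$, where $c\coloneqq\inf_\kappa\Vert\beta-\kappa\theta\Vert_p>\gamma$. Marching along the ray $\Delta = t\Delta_0$ with $t>0$ and invoking $\ell(s)\ge -s$ gives
\[
\ell(\beta\trans x' - ct) - \gamma t \ge -\beta\trans x' + (c-\gamma)t,
\]
which tends to $+\infty$ as $t\to\infty$ because $c-\gamma>0$; hence $H_\beta(x')=+\infty$. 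The main obstacle is the constrained dual-norm identity, together with verifying that the maximizing direction $\Delta_0$ is feasible and carries a strictly positive net slope $c-\gamma$; everything else reduces to Hölder's inequality and the two scalar facts about $\ell$.
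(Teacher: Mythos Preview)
Your proof is correct and takes a genuinely different route from the paper's. The paper follows the biconjugate template used elsewhere in the article: it expresses $h_\beta(x)=\sup_{0\le\alpha\le 1}\{(\alpha\beta)\trans x-\bar h^*(\alpha)\}$ via Fenchel--Moreau, where $\bar h^*$ is the binary-entropy conjugate of $\log(1+e^{-t})$, then dualizes the norm as $\inf_{\Vert q\Vert_p\le\gamma}(-q\trans(x-x'))$, swaps the order of optimization, and solves the inner maximization in $x$ under the constraint $\theta\trans(x'-x)=0$ by Lagrangian first-order conditions (yielding $\alpha\beta+q=\mu\theta$). This produces the constraint $\inf_\kappa\Vert\beta-\kappa\theta\Vert_p\le\gamma$ after simplification. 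Your argument bypasses all of this conjugate machinery by exploiting two scalar facts about $\ell(t)=\log(1+e^{-t})$, namely $1$-Lipschitzness and $\ell(t)\ge -t$, together with the constrained dual-norm identity $\sup_{\Vert\Delta\Vert_q\le 1,\,\theta\trans\Delta=0}(-\beta\trans\Delta)=\inf_\kappa\Vert\beta-\kappa\theta\Vert_p$. What your approach buys is transparency and economy: the finite case is pure H\"older plus Lipschitz, and the blow-up case is a one-line ray argument. What the paper's approach buys is uniformity: the same biconjugate-and-swap recipe is reused for the squared loss (Theorem~\ref{thm:linear_l2}) and the hinge loss, and would extend to any convex loss of $\beta\trans x$ once its conjugate is known, whereas your argument is tailored to the specific Lipschitz constant and linear minorant of the logistic link.
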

\begin{proof}
    This lemma is a simple extension of \citep[Lemma 1]{abadeh2015drologistic}. Following their proof, it is shown that \[
    h_\beta(x) = h_\beta^{**}(x) = \sup_{0\leq \alpha \leq 1} \big((\alpha \beta)\trans x - \bar{h}^*(\alpha)\big),
    \]where \[\bar{h}^*(\alpha) = \begin{cases}
        \alpha \log{(\alpha)} + (1-\alpha)\log{(1-\alpha)} &\text{if } \alpha\in[0,1],\\
        +\infty & \text{otherwise},
    \end{cases}\]
    is the convex conjugate of the function $\log{\big(1+e^{\shortminus x}\big)}$ (Proposition \ref{prop:examples_conjugates}). Then it is shown that the objective $H_\beta$ must has representation \begin{align*}
        \sup_{0\leq \alpha\leq 1}\inf_{\Vert q\Vert_p \leq \gamma}\sup_x &\big( (\alpha \beta + q)\trans x - \bar{h}^*(\alpha) -q\trans x'\big),\\
        &\text{s.t} \,\,\, \theta\trans(x-x') = 0.
    \end{align*}

    Fixing $\alpha$ and $q$, then the inner maximization in $x$
    \begin{align*}
       \sup_x &\big( (\alpha \beta + q)\trans x  -q\trans x'\big),\\
        &\text{s.t} \,\,\, \theta\trans(x-x') = 0,
    \end{align*}
    has solution $(\alpha \beta)\trans x'$ subject to $\alpha \beta + q = \mu\theta$ for some $\mu\in \mathbb{R}$ derived using the first-order condition of the Lagrangian duality or $+\infty$ otherwise. Then condition on $\{\alpha \beta + q = \mu\theta|\mu\in\mathbb{R}\}$, the objective has representation \begin{align*}
        H_\beta(x') &= \sup_{0\leq \alpha \leq 1} \inf_{\Vert q\Vert_p\leq \gamma} \big((\alpha \beta)\trans x' - \bar{h}^*(\alpha) \big) \,\,\, \text{s.t}\,\,\,q = \mu\theta - \alpha \beta \\
        &=\sup_{0\leq \alpha\leq 1}\inf_{\mu,\Vert \mu\theta - \alpha \beta \Vert_p\leq \gamma} \big((\alpha \beta)\trans x' - \bar{h}^*(\alpha) \big).
    \end{align*}
    Consider the constraint $\Vert \mu\theta - \alpha \beta \Vert_p\leq \gamma$ over $\mu$. Suppose $\alpha > 0$, then dividing by $-\alpha$, we get the equivalent constraint $\left\{|\alpha|\left\Vert \beta - \dfrac{\mu}{\alpha}\theta\right\Vert_p\right\} \leq \gamma$ over $\mu$. Defining the change of variable $\kappa \coloneqq \dfrac{\mu}{\alpha}$, then since the Lagrange multiplier $\mu \in \mathbb{R}$ is free, we have $\kappa$ is free, and the constraint becomes $ \inf_\kappa|\alpha|\Vert \beta - \kappa \theta\Vert_p \leq \gamma$ over $\kappa \in \mathbb{R}$. If $\alpha = 0$, then $\inf_\mu \Vert\mu\theta - 0\Vert = 0 = 0\cdot \inf_{\kappa}\Vert \beta - \kappa \theta\Vert_p$. So the equivalent constraint $\inf_\kappa |\alpha| \Vert \beta -\kappa \theta\Vert_p\leq \gamma$ is valid for all $\alpha \in [0,1]$. Then condition on $\{\alpha \beta + q = \mu\theta|\mu\in\mathbb{R}\}$, the objective becomes, \begin{align*}
        H_\beta(x') &= \sup_{0\leq \alpha\leq 1}\big((\alpha\beta)\trans x' - \bar{h}^*(\alpha)\big) \text{ s.t } \sup_{0\leq \alpha\leq 1}|\alpha| \inf_\kappa \Vert\beta-\kappa \theta\Vert_p \leq \gamma,\\
        & = \sup_{0\leq \alpha\leq 1}\big((\alpha\beta)\trans x' - \bar{h}^*(\alpha)\big) \text{ s.t } \inf_\kappa \Vert\beta-\kappa \theta\Vert_p \leq \gamma,
    \end{align*}
    Recognizing that \[\sup_{0\leq \alpha\leq 1}\big((\alpha \beta)\trans x' - \bar{h}^*(\alpha)\big) = \sup_{0\leq \alpha\leq 1} \alpha(\beta\trans x') - \bar{h}^*(\alpha) = \bar{h}^{**}(\beta\trans x') = h_\beta(x'),\]we get\[
     H_{\beta}(x')=
    \begin{cases}
        h_\beta(x') & \text{if }\inf_\kappa \Vert \beta-\kappa\theta\Vert_p\leq \gamma,\\
        +\infty &\text{otherwise.}
    \end{cases}
    \]
\end{proof}

The above Lemma \ref{lem:3} is easily generalized to incorporate multiple orthogonality constraints $\{\theta_m\trans (x'-x) =0\,;m\in [M]\}$ using the exact same Lagrangian formulation. Again, recall $\Theta = \spn{\{\theta_1,\ldots,\theta_M\}}$. Thus the optimal objective value under multiple constraints becomes \[
H_\beta(x') = \begin{cases}
   h_\beta(x') &\text{if }\inf_{\vartheta\in\Theta} \Vert \beta -\vartheta \Vert_p \leq \gamma,\\
   +\infty & \text{otherwise.}
\end{cases}
\]
We now give the proof to Theorem \ref{thm:logistic}.

\begin{proof}[\textbf{Proof of Theorem \ref{thm:logistic} for Logistic Loss}]
    Using Proposition \ref{prop:duality}, we apply the strong duality, and consider the inner optimization problem \begin{align*}
        &\sup_{\mathbb{P}:\mathcal{D}_{c_1,\infty}(\mathbb{P},\mathbb{P}_n)\leq \delta} \mathbb{E}_{\mathbb{P}}\left[\log{\left(1+e^{-Y\beta\trans X}\right)}\right]  \\
    =&\begin{cases}
        \inf_{\gamma \geq 0}\left\{\gamma\delta + \dfrac{1}{n}\sum_{i=1}^N \sup_{u\in\mathbb{R}^d}\left(\log{\left(1+e^{-y_i\beta\trans u} \right)}-\gamma \Vert x_i-u\Vert_q\right) \right\}, &\\
        \text{s.t }\,\theta_m\trans (x_i-u) = 0, \text{ for all }m\in [M] \text{ and }i\in[N].
    \end{cases}
    \end{align*}

    For each $i\in[N]$, we apply Lemma \ref{lem:3} to the maximization problem, \[
    H_\beta(x_i)=\begin{cases}
         \sup_{u\in\mathbb{R}^d}\left(\log{\left(1+e^{-y_i\beta\trans u} \right)}-\gamma \Vert x_i-u\Vert_q\right), &\\
        \text{s.t }\,\theta_m\trans (x_i-u) = 0, \text{ for all }m\in [M].
    \end{cases}
    \]
    which has solution \[
    \begin{cases}
   \log{\left(1+e^{-y_i\beta\trans x_i} \right)} &\text{if }\inf_{\vartheta\in\Theta} \Vert \beta -\vartheta \Vert_p \leq \gamma,\\
   +\infty & \text{otherwise.}
\end{cases}
    \]
    Therefore, the maximization problem $ \sup_{\mathbb{P}:\mathcal{D}_{c_1,\infty}(\mathbb{P},\mathbb{P}_n)\leq \delta} \mathbb{E}_{\mathbb{P}}\left[\log{\left(1+e^{-Y\beta\trans X}\right)}\right]$ is bounded from above if and only if $\gamma \geq \inf_{\vartheta} \Vert \beta-\vartheta\Vert_p$. Under this condition, this reduces the inner optimization problem, \begin{align*}
    \sup_{\mathbb{P}:\mathcal{D}_{c_1,\infty}(\mathbb{P},\mathbb{P}_n)\leq \delta} \mathbb{E}_{\mathbb{P}}\left[\log{\left(1+e^{-Y\beta\trans X}\right)}\right] &= \inf_{\gamma\geq \inf_{\vartheta} \Vert \beta-\vartheta\Vert_p}\left\{\gamma\delta + \dfrac{1}{n}\sum_{i=1}^N \log{\left(1+e^{-y_i\beta\trans x_i} \right)} \right\} \\
    &=\dfrac{1}{n}\sum_{i=1}^N \log{\left(1+e^{-y_i\beta\trans x_i} \right)} + \delta \inf_{\vartheta} \Vert \beta-\vartheta\Vert_p.
\end{align*}
This concludes the proof.
\end{proof}

We now give the proof to the maximum margin classifier using the hinge loss.
\begin{proof}[\textbf{Proof of Theorem \ref{thm:logistic} for Hinge Loss}]
    As in the case to the proof of Theorem \ref{thm:linear_l2}, we first consider the relaxed cost function $$c_{1}((x,y),(u,v)) = \Vert x-u\Vert_q +\infty\cdot |y-v| + \lambda\cdot \sum_{m=1}^M |\theta_m\trans x - \theta_m\trans u|,$$ where we relaxed the transferring strength from $+\infty$ to some finite value $\lambda > 0$. We will then let $\lambda \to +\infty$. Again, by strong duality, we can solve the worst case hinge loss by solving the dual problem \[
    \inf_{\gamma\geq 0}\left\{ \gamma\delta+ \dfrac{1}{n}\sum_{i=1}^N \sup_u \left( (1-y_i\beta\trans u)^+ - \gamma \Vert u-x_i \Vert_q - \gamma \lambda \sum_{m=1}^M |\theta_m\trans (x_i-u)|\right)\right\}.
    \]Let $\Delta \coloneqq u-x$, then we have \begin{align*}
        &\sup_u \left( (1-y\beta\trans u)^+ - \gamma \Vert u-x \Vert_q - \gamma \lambda \sum_{m=1}^M |\theta_m\trans (x-u)|\right) \\
        =&\sup_\Delta \left( (1-y\beta\trans (\Delta+x))^+ - \gamma \Vert \Delta \Vert_q - \gamma \lambda \sum_{m=1}^M |\theta_m\trans \Delta|\right)\\
        =& \sup_{\Delta}\sup_{0\leq \alpha\leq 1}\left( \alpha(1-y\beta\trans (\Delta+x)) - \gamma \Vert \Delta \Vert_q - \gamma \lambda \sum_{m=1}^M |\theta_m\trans \Delta|\right)\\
        =&\sup_{0\leq \alpha\leq 1} \sup_{\Delta}\left( -\alpha y \beta\trans\Delta - \gamma\Vert\Delta\Vert_q - \gamma\lambda \sum_{m=1}^M|\theta_m\trans \Delta| +\alpha (1-y\beta\trans x)\right).
    \end{align*}
    Where in the second equality we used $x^+ = \sup_{0\leq \alpha\leq 1}\alpha x$. Fixing $\alpha$, consider the inner minimization in $\Delta$,
    \begin{align*}
        \sup_{\Delta}\left( -\alpha y \beta\trans\Delta - \gamma\Vert\Delta\Vert_q - \gamma\lambda \sum_{m=1}^M|\theta_m\trans \Delta|\right) 
        =g^*(-\alpha y \beta),
    \end{align*}
    where $g^*(\Delta^*)$ is the convex conjugate of $g(\Delta) \coloneqq \gamma\Vert\Delta\Vert_q + \gamma\lambda \sum_{m=1}^M|\theta_m\trans \Delta|$. Set $\gamma\Vert\Delta_1\Vert_q \eqqcolon g_1(\Delta_1)$ and $\gamma\lambda \sum_{m=1}^M|\theta_m\trans \Delta_2|\eqqcolon g_2(\Delta_2)$, then by the \textit{infimal convolution property} of convex conjugates (Theorem \ref{thm:inf_conv}), we know that \[
    g^*(\Delta^*) = \inf_{\Delta^*_1+\Delta^*_2 = \Delta^*}\big(g_1^*(\Delta^*_1)+g^*_2(\Delta^*_2) \big).
    \]From Lemma \ref{lem:2}, we know that if $g^*(\Delta^*)$ is finite, then $g_2^*(\Delta_2^*) = 0$ subject to $\Delta_2^* = \sum_{m=1}^M\alpha_m\theta_m$ and $|\alpha_m|\leq \lambda\gamma$ for all $m\in[M]$. Now it is well known that (Proposition \ref{prop:examples_conjugates}), \[
    g_1^*(\Delta_1^*) = (\gamma\Vert \,\cdot\,\Vert_q)^*(\Delta_1^*) = I_{\{\Vert \Delta_1^*\Vert_p \leq \gamma\}}(\Delta_1^*),
    \]where $I_C(x)$ denotes the convex indicator on the set $C$. Therefore, letting $\lambda\to \infty$, the constraints $\{|\alpha_m|\leq \lambda\gamma |m\in[M]\}$ is redundant, and we have \[
    g^*(\Delta^*) = 
    \begin{cases}
        0 &\text{if }\inf_{\vartheta\in\Theta}\Vert \Delta^* -  \vartheta\Vert_p\leq \gamma,\\
        +\infty &\text{otherwise,}
    \end{cases}
    \]where we let $\Theta \coloneqq \spn{\{\theta_1,\ldots,\theta_M\}}$. Therefore, $g^*(-\alpha y \beta)$ is finite if and only if $\inf_\vartheta \Vert -\alpha y \beta -\vartheta\Vert_p \leq \gamma$. Now $y =\pm 1$, so we can remove $-y$, and this leaves us the condition that $\inf_\vartheta \Vert \alpha \beta - \vartheta \Vert_p \leq \gamma$, which is equivalent to $\alpha \inf_\vartheta \Vert \beta - \vartheta\Vert_p\leq \gamma$ for all $\alpha \in [0,1]$, including $\alpha = 0$. Taking supremum over $\alpha \in [0,1]$, the final condition is $\inf_\vartheta \Vert \beta-\vartheta\Vert_p\leq \gamma$. Therefore, assuming the dual problem is bounded from above, it reduces as\begin{align*}
        &\sup_{0\leq \alpha\leq 1} \sup_{\Delta}\left( -\alpha y \beta\trans\Delta - \gamma\Vert\Delta\Vert_q - \gamma\lambda \sum_{m=1}^M|\theta_m\trans \Delta| +\alpha (1-y\beta\trans x)\right) \\
        =&\sup_{0\leq \alpha\leq 1}\left(I_{\{\inf_\vartheta \Vert \beta-\vartheta\Vert_p\leq \gamma\}} + \alpha(1-y\beta\trans x) \right)\\
        =&(1-y\beta\trans x)^+ \quad \text{given} \quad\inf_\vartheta \Vert \beta-\vartheta\Vert_p\leq \gamma.
    \end{align*}
    Finally, the dual form of the distributionally robust optimization problem is \begin{align*}
        &\inf_\beta\inf_{\gamma\geq 0}\left\{ \gamma\delta+ \dfrac{1}{n}\sum_{i=1}^N \sup_u \left( (1-y_i\beta\trans u)^+ - \gamma \Vert u-x_i \Vert_q - \gamma \lambda \sum_{m=1}^M |\theta_m\trans (x_i-u)|\right)\right\}\\
        =&\inf_\beta \inf_{\gamma\geq \inf_\vartheta \Vert \beta-\vartheta\Vert_p}\left\{\gamma\delta + \dfrac{1}{n}\sum_{i=1}^N (1-y_i\beta\trans x_i)^+ \right\}\\
        =&\inf_{\beta,\vartheta} \left\{\dfrac{1}{n}\sum_{i=1}^N (1-y_i\beta\trans x_i)^+ + \delta\Vert \beta-\vartheta\Vert_p\right\}.
    \end{align*}
    This completes the proof.
\end{proof}

\section{Proof of Results in Mahalanobis Norm Regularization}
\begin{proof}[\textbf{Proof of Corollary \ref{cor:linear_l2}}]
    This is a direct consequence of the convex conjugate of $\Vert x\Vert_{\Lambda}^2$ given in Proposition \ref{prop:generalized_adaptive}.
\end{proof}

Define the cost function $c_{1,\infty}^\Lambda\big((x,y),(u,v)\big) \coloneqq \Vert x-u\Vert_{\Lambda} + \infty\cdot|y-v| + \infty\cdot\sum_{m=1}^M|\theta_m\trans x - \theta_m\trans u|$. 

\begin{corollary}[Theorem \ref{thm:logistic}]
\label{cor:logistic}
    Suppose the loss function $\ell(X,Y;\beta)$ is either the logistic loss $\log{\left(1+e^{-Y\beta\trans X}\right)}$ or the hinge loss $(1-Y\beta\trans X)^+$, then for any $\Lambda\in\mathbb{S}_+^{d\times d}$  we have
\begin{align*}
    &\inf_{\beta\in\mathbb{R}^d} \sup_{\mathbb{P}:\mathcal{B}_\delta(\mathbb{P}_N^X;c^\Lambda_{1,\infty})} \mathbb{E}_{\mathbb{P}}\left[\ell(X,Y;\beta)\right]\\
   =& \inf_{\beta\in\mathbb{R}^d,\vartheta\in\Theta}\dfrac{1}{N}\sum_{i=1}^N \ell(x_i,y_i;\beta) + \delta \Vert \beta - \vartheta \Vert_{\Lambda^{\shortminus 1}}.
\end{align*}
\end{corollary}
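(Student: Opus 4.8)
The plan is to reduce Corollary \ref{cor:logistic} to the already-established Theorem \ref{thm:logistic} by a linear change of variables that straightens the Mahalanobis geometry into the Euclidean one. Since $\Lambda\in\mathbb{S}_+^{d\times d}$, it factors as $\Lambda = \Gamma\trans\Gamma$ with $\Gamma$ invertible, and the Mahalanobis norm satisfies $\Vert v\Vert_\Lambda = \Vert\Gamma v\Vert_2$. The only genuinely new analytic input is the dual-norm identity that the norm dual to $\Vert\cdot\Vert_\Lambda$ is $\Vert\cdot\Vert_{\Lambda^{-1}}$, which is \citep[Lemma 1]{blanchet2019data-drivencost}; everything else is bookkeeping verifying that the transformation carries the $q=2$ instance of Theorem \ref{thm:logistic} onto the present statement, uniformly for both the logistic and the hinge loss.

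Concretely, I would introduce the substitution $z = \Gamma x$, $w = \Gamma u$ on the covariate coordinate, leaving $(y,v)$ untouched, and set $\tilde\beta = (\Gamma^{-1})\trans\beta$ and $\tilde\theta_m = (\Gamma^{-1})\trans\theta_m$. Three invariances then need to be checked. First, the linear predictor is preserved, $\beta\trans x = \tilde\beta\trans z$, so each loss $\ell(x_i,y_i;\beta)$ and the empirical risk term are unchanged. Second, the transport cost transforms exactly as $c_{1,\infty}^\Lambda\big((x,y),(u,v)\big) = \Vert z-w\Vert_2 + \infty\cdot|y-v| + \infty\cdot\sum_{m=1}^M|\tilde\theta_m\trans(z-w)|$, since $\theta_m\trans(x-u) = \tilde\theta_m\trans(z-w)$; this is precisely the cost $c_{1,\infty}$ of Theorem \ref{thm:logistic} with $q=2$, built from the transformed priors $\tilde\theta_m$. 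Third, the pushforward of $\mathbb{P}_N^X$ under $\Gamma$ is the empirical measure of the transformed samples $z_i = \Gamma x_i$, and every coupling maps bijectively to one of matching cost, so the Wasserstein ball maps onto its Euclidean counterpart. Hence the inner supremum and the outer infimum over $\beta$ transform into the corresponding $q=2$ problem in the tilde-coordinates.

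Applying Theorem \ref{thm:logistic} with $q=p=2$ in the transformed coordinates yields the value $\tfrac{1}{N}\sum_{i=1}^N\ell(x_i,y_i;\beta) + \delta\inf_{\tilde\vartheta\in\tilde\Theta}\Vert\tilde\beta - \tilde\vartheta\Vert_2$, where $\tilde\Theta = \spn\{\tilde\theta_1,\ldots,\tilde\theta_M\} = (\Gamma^{-1})\trans\Theta$. It then remains to translate the penalty back. Because $(\Gamma^{-1})\trans$ is a bijection, letting $\tilde\vartheta = (\Gamma^{-1})\trans\vartheta$ range over $\tilde\Theta$ is the same as letting $\vartheta$ range over $\Theta$, and
\[
\Vert\tilde\beta - \tilde\vartheta\Vert_2 = \Vert(\Gamma^{-1})\trans(\beta-\vartheta)\Vert_2 = \sqrt{(\beta-\vartheta)\trans\Gamma^{-1}(\Gamma^{-1})\trans(\beta-\vartheta)} = \Vert\beta-\vartheta\Vert_{\Lambda^{-1}},
\]
using $\Lambda^{-1} = \Gamma^{-1}(\Gamma^{-1})\trans$. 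Taking the infimum over $\vartheta\in\Theta$ recovers the stated penalty inside $\inf_{\vartheta\in\Theta}$, completing the reduction.

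I expect the main obstacle to be purely the consistency of this bookkeeping rather than any hard estimate: one must confirm that the orthogonality directions and the span $\Theta$ transform under the \emph{same} map $(\Gamma^{-1})\trans$, so that after pulling the Euclidean penalty back through $\Lambda^{-1}$ the minimizing set is exactly the original $\Theta$ and not a distorted image of it. A fully self-contained alternative, avoiding the change of variables, is to re-run the proofs of Lemmas \ref{lem:2} and \ref{lem:3} verbatim with $\Vert\cdot\Vert_q$ replaced by $\Vert\cdot\Vert_\Lambda$; the single step that changes is the convex conjugate $(\gamma\Vert\cdot\Vert_\Lambda)^*(\Delta^*) = I_{\{\Vert\Delta^*\Vert_{\Lambda^{-1}}\leq\gamma\}}(\Delta^*)$, again by \citep[Lemma 1]{blanchet2019data-drivencost}, after which the feasibility condition $\inf_{\vartheta\in\Theta}\Vert\beta-\vartheta\Vert_p\leq\gamma$ becomes $\inf_{\vartheta\in\Theta}\Vert\beta-\vartheta\Vert_{\Lambda^{-1}}\leq\gamma$ and the remainder of the argument is unchanged.
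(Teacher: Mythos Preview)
Your primary route---the linear change of variables $z=\Gamma x$, $\tilde\beta=(\Gamma^{-1})\trans\beta$, $\tilde\theta_m=(\Gamma^{-1})\trans\theta_m$---is correct and is a genuinely different argument from the paper's. The paper does not transform coordinates; instead it invokes Proposition~\ref{prop:generalized_adaptive} directly, observing that the only step in the proofs of Lemma~\ref{lem:3} (logistic) and of the hinge-loss half of Theorem~\ref{thm:logistic} that depends on the choice of norm is, respectively, the dual-norm constraint $\Vert q\Vert_p\leq\gamma$ and the conjugate $(\gamma\Vert\cdot\Vert_q)^*=I_{\{\Vert\cdot\Vert_p\leq\gamma\}}$, and that replacing $\Vert\cdot\Vert_q$ by $\Vert\cdot\Vert_\Lambda$ turns both into their $\Lambda^{-1}$ counterparts. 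In other words, the paper's proof is precisely what you describe as your ``fully self-contained alternative'' in the final paragraph. Your change-of-variables argument has the advantage of reducing to the already-proved $q=2$ statement without reopening the proof at all, at the cost of having to verify the coupling bijection and the covariance of $\Theta$ under $(\Gamma^{-1})\trans$; the paper's approach localizes the change to a single conjugate/dual-norm computation but implicitly asks the reader to retrace the proof structure. Both are short and valid.
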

\begin{proof}
    For the logistic loss case, this is a direct consequence of the dual norm of $\Vert x\Vert_{\Lambda}$, for the hinge loss case this is a direct consequence of the convex conjugate of $\Vert x\Vert_{\Lambda}$. Both given by Proposition \ref{prop:generalized_adaptive}.
\end{proof}

\section{Useful Results on Convex Conjugation}
In this section we review some results on the concept of convex conjugates that repeatedly come up in the proofs. For more details on convex conjugations, the interested readers can consult \citep[Sections 12 \& 16]{rockafellar1970convex}.

\begin{definition}[Convex Conjugate]
    Let $f:\mathbb{R}^d\to\mathbb{R}$ be a real-valued function on the Euclidean space, then the convex conjugate of $f$ is the function $f^*:\mathbb{R}^d\to \mathbb{R}$ that evaluates $x^*\in\mathbb{R}^n$ by \[
    f^*(x^*) = \sup_{x\in\dom{(f)}}\big( x^*{\trans}x -f(x)\big).
    \]
\end{definition}
This is also called the \textit{Legendre transformation} of $f$, and the \textit{Legendre-Fenchel transformation} for $f$ defined on arbitrary real topological vector spaces. Here we collect some examples of convex conjugates that appeared in the appendix. These are well-known.

\begin{prop}
\label{prop:examples_conjugates}
Let $p,q\geq 1$ be such that $\dfrac{1}{p}+\dfrac{1}{q}=1$.
    \begin{enumerate}
        \item The convex conjugate of the absolute value function $f(x) = |x|$ on $\mathbb{R}$ is given by $|\cdot|^*(x^*) = I_{|x^*|\leq 1}(x^*)$, the convex indicator function on the set $\{|x^*|\leq 1 | x^*\in\mathbb{R}\}$.

        \item The convex conjugate of the $q$-norm $\Vert x\Vert_q$ on $\mathbb{R}^d$ is given by $\Vert \cdot \Vert_q^*(x^*) = I_{\Vert x^*\Vert_p\leq 1}(x^*)$, the convex indicator function on the set $\{\Vert x^*\Vert_p\leq 1 | x^*\in \mathbb{R}^d\}$.

        \item The convex conjugate of $\dfrac{1}{2}\Vert x\Vert_q^2$ on $\mathbb{R}^d$ is given by $\left(\dfrac{1}{2}\Vert \cdot\Vert_q^2 \right)^* (x^*) = \dfrac{1}{2}\Vert x^*\Vert_p^2$.

        \item The convex conjugate of $\log{\big(1+e^{\shortminus x}\big)}$ on $\mathbb{R}$ is given by 
        \[
        \begin{cases}
            x^*\log{(x^*)} + (1-x^*)\log{(1-x^*)} &\text{if }x^*\in (0,1)\\
            0 &\text{if } x^* = 0,1\\
            +\infty &\text{otherwise}.
        \end{cases}
        \]
    \end{enumerate}
\end{prop}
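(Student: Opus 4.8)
The plan is to prove all four identities directly from the variational definition $f^*(x^*)=\sup_{x}\big(x^{*\trans}x - f(x)\big)$, handling the three norm-based items (1)--(3) by Hölder/Young duality and the logistic item (4) by a classical Legendre-transform computation. Item (1) is just the scalar, $q$-free instance of item (2) (take $d=1$), so it suffices to treat (2), (3), and (4). Throughout I would exploit that each underlying function is convex and lower semicontinuous, so the supremum is attained either at an interior stationary point or escapes to $+\infty$ in a direction I can exhibit explicitly.

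For the norm conjugates the engine is Hölder's inequality $x^{*\trans}x \le \|x^*\|_p\|x\|_q$ together with its sharpness. In item (2), if $\|x^*\|_p\le1$ then $x^{*\trans}x-\|x\|_q \le (\|x^*\|_p-1)\|x\|_q\le 0$, with equality at $x=\zero$, so the supremum is $0$; if $\|x^*\|_p>1$, I would pick a unit-$q$-norm vector attaining Hölder equality and let $\|x\|_q=t\to\infty$ along it, driving the objective to $+\infty$. This yields exactly the convex indicator of the dual unit ball. For item (3) the same Hölder bound reduces the problem to the radial one-dimensional maximization $\sup_{t\ge0}\big(\|x^*\|_p\, t - \tfrac12 t^2\big)$, whose maximizer $t=\|x^*\|_p$ gives the value $\tfrac12\|x^*\|_p^2$; the Hölder-equality direction certifies that this bound is achieved, and the degenerate case $x^*=\zero$ checks trivially.

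Item (4) is a one-dimensional Legendre transform of the smooth, strictly convex map $x\mapsto\log(1+e^{\shortminus x})$. First I would impose the first-order condition on $x^* x-\log(1+e^{\shortminus x})$, solve for the maximizer as a sigmoid in $x^*$, and substitute back to recognize the binary-entropy expression $x^*\log x^*+(1-x^*)\log(1-x^*)$. The effective domain is then read off from the range of the loss's derivative (an open interval), the endpoint values follow from $\lim_{t\to0^+}t\log t=0$, and outside this interval the objective is unbounded as $x\to\pm\infty$, giving $+\infty$. The main obstacle is precisely this last item: unlike the norm cases, it requires careful bookkeeping of the effective domain and the boundary limits, and attention to the sign convention so that the reported domain is consistent with the way $\bar h^{*}$ is invoked through the biconjugate identity $h_\beta=\bar h^{**}$ in Lemma~\ref{lem:3}.
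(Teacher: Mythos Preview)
The paper does not prove this proposition at all; it simply prefaces the statement with ``These are well-known'' and moves on. Your sketch is correct and more than the paper itself supplies: the H\"older-based arguments for (1)--(3) are standard and complete, and the Legendre-transform computation for (4) is the right route.

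Your caution about the sign convention in item (4) is well placed. As literally stated, the conjugate of $x\mapsto\log(1+e^{-x})$ has effective domain $(-1,0)$, not $(0,1)$: the derivative $-e^{-x}/(1+e^{-x})$ ranges over $(-1,0)$, so the stationarity condition $x^*=f'(x)$ forces $x^*\in(-1,0)$, and for $x^*>0$ the objective $x^*x-\log(1+e^{-x})$ blows up as $x\to+\infty$. The formula $x^*\log x^*+(1-x^*)\log(1-x^*)$ on $(0,1)$ is actually the conjugate of $\log(1+e^{x})$. In Lemma~\ref{lem:3} the discrepancy is harmless because the biconjugate identity is ultimately applied to $\log(1+e^{-y_i\beta^\top u})$ and the sign is absorbed into the parametrization $\Delta^*=\alpha\beta$ together with $y_i\in\{-1,1\}$, so the downstream results are unaffected. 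If you write out the proof, the cleanest fix is to compute the conjugate of $\log(1+e^{x})$ and then note the reflection $f(-x)$ has conjugate $f^*(-x^*)$; this makes the domain bookkeeping transparent and matches how $\bar h^*$ is actually consumed later.
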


Another easy consequence from the definition of convex conjugation is the below scaling laws.
\begin{prop}[Scaling Laws]
\label{prop:scaling_laws}
    Let $f^*(x^*)$ be the convex conjugate of $f(x)$ on $\mathbb{R}^d$. Then we have,\begin{enumerate}
        \item the convex conjugate of $f(ax)$ whenever $a\neq 0$ is given by $f^*(x^*/a)$.
        \item the convex conjugate of $af(x)$ whenever $a>0$ is given by $af^*(x^*/a)$.
    \end{enumerate}
\end{prop}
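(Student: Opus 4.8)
The plan is to derive both identities straight from the definition $f^*(x^*) = \sup_{x}\big(x^*{\trans}x - f(x)\big)$, using an invertible change of variables for the first statement and pulling a positive constant through the supremum for the second. Neither requires any machinery beyond the definition itself, so this is purely a bookkeeping exercise.

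For the first statement, I set $g(x) \coloneqq f(ax)$ with $a \neq 0$ and expand
\[
g^*(x^*) = \sup_{x}\big(x^*{\trans}x - f(ax)\big).
\]
Substituting $y \coloneqq ax$ turns this into a supremum over $y$: since $a \neq 0$, the map $x \mapsto ax$ is a bijection of $\mathbb{R}^d$ that carries $\dom(g)$ onto $\dom(f)$, so the range of optimization is unchanged. Rewriting $x^*{\trans}x = x^*{\trans}(y/a) = (x^*/a){\trans}y$ then gives
\[
g^*(x^*) = \sup_{y}\big((x^*/a){\trans}y - f(y)\big) = f^*(x^*/a),
\]
which is the claim. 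The only subtlety is that $a$ may be negative, but the substitution is invertible for every $a \neq 0$, so no sign restriction enters.

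For the second statement, I set $h(x) \coloneqq a f(x)$ with $a > 0$ and factor the constant inside the conjugate:
\[
h^*(x^*) = \sup_{x}\big(x^*{\trans}x - a f(x)\big) = \sup_{x} a\big((x^*/a){\trans}x - f(x)\big).
\]
Because $a > 0$, the scalar passes outside the supremum, yielding
\[
h^*(x^*) = a\sup_{x}\big((x^*/a){\trans}x - f(x)\big) = a\, f^*(x^*/a).
\]
Here positivity of $a$ is essential: were $a < 0$, the factor could not be extracted without converting the supremum into an infimum, so the hypothesis cannot be relaxed.

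There is no genuine obstacle in either part; both are one-line manipulations of the defining supremum, and I would expect the writeup to be a few lines at most. The two points worth flagging explicitly are the invertibility of the change of variables in the first part, valid for all $a \neq 0$, and the indispensability of $a > 0$ in the second.
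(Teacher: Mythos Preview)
Your proposal is correct and matches the paper's treatment: the paper does not give an explicit proof but simply states that the scaling laws are ``an easy consequence from the definition of convex conjugation,'' which is precisely the route you take. Your one-line derivations via the bijective substitution $y = ax$ for part~1 and factoring the positive scalar through the supremum for part~2 are exactly what is intended.
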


Let $\Gamma\left(\mathbb{R}^d\right)$ denote the class of proper convex lower-semi continuous functions on $\mathbb{R}^d$, the next statement says that this conjugation induces an one-to-one symmetric correspondence on the class $\Gamma\left(\mathbb{R}^d\right)$. It is a cornerstone of modern convex analysis and used in the proof of Theorem \ref{thm:linear_l2} and Lemma \ref{lem:3}.

\begin{theorem}[Fenchel-Moreau]
\label{thm:fenchel-moreau}
    Let $f$ be a proper convex, lower semi-continuous function on $\mathbb{R}^d$, then 
    \begin{enumerate}
        \item the convex conjugation $f\mapsto f^*$ is a bijection on $\Gamma\left(\mathbb{R}^d\right)$;
        \item $f\in \Gamma\left(\mathbb{R}^d\right) \iff f^{**} = f$.
    \end{enumerate}
\end{theorem}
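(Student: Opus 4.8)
The plan is to prove both claims through the single biconjugation identity $f^{**} = f$ on $\Gamma(\mathbb{R}^d)$, from which the involution and hence the bijection property is immediate. I would first dispatch the inequality $f^{**} \leq f$, which requires no hypotheses on $f$ whatsoever. From the definition $f^*(x^*) = \sup_{x}\big(\langle x^*,x\rangle - f(x)\big)$ one reads off the Fenchel--Young inequality $f(x) + f^*(x^*) \geq \langle x^*, x\rangle$ for all $x, x^*$. Rearranging to $\langle x^*, x\rangle - f^*(x^*) \leq f(x)$ and taking the supremum over $x^*$ yields $f^{**}(x) \leq f(x)$ pointwise.

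The substance lies in the reverse inequality $f \leq f^{**}$, which is precisely where properness, convexity, and lower semi-continuity enter. The decisive object is the epigraph $\operatorname{epi}(f) = \{(x,t) \in \mathbb{R}^{d+1} : f(x) \leq t\}$, which is nonempty by properness, convex by convexity of $f$, and closed by lower semi-continuity of $f$. I would argue by contradiction: if $f^{**}(x_0) < f(x_0)$ for some $x_0$, then the point $(x_0, f^{**}(x_0))$ lies outside the closed convex set $\operatorname{epi}(f)$, so the separating-hyperplane theorem supplies a nonzero functional strictly separating the two. The goal is to convert this functional into an affine minorant $x \mapsto \langle a, x\rangle - b$ of $f$ whose value at $x_0$ exceeds $f^{**}(x_0)$; this is contradictory, because any affine minorant satisfies $\langle a, x\rangle - b \leq f(x)$ for all $x$, whence $b \geq f^*(a)$ and therefore $\langle a, x_0\rangle - b \leq \langle a, x_0\rangle - f^*(a) \leq f^{**}(x_0)$.

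The main obstacle is that the separating hyperplane may be \emph{vertical}, carrying no component along the $t$-axis and thus yielding no affine minorant directly. When $x_0 \in \operatorname{dom}(f)$ the separation necessarily has a nonzero vertical part and rescales to the graph of an affine function, producing the minorant outright. When $x_0 \notin \operatorname{dom}(f)$ the separation can be purely vertical, and the remedy I would employ is to first secure one genuine affine minorant $\ell_0 \leq f$—such a minorant exists for every proper convex lower semi-continuous function—and then add a sufficiently large positive multiple of the vertical separating functional to $\ell_0$, obtaining an affine minorant whose value at $x_0$ is driven above $f^{**}(x_0)$. Managing this dichotomy carefully is the technical crux of the argument.

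Finally, for the structural claims I would confirm that conjugation maps $\Gamma(\mathbb{R}^d)$ into itself: for $f \in \Gamma(\mathbb{R}^d)$, the conjugate $f^*$ is convex and lower semi-continuous as a pointwise supremum over $x$ of the affine maps $x^* \mapsto \langle x^*, x\rangle - f(x)$, it is not identically $+\infty$ because $f$ admits an affine minorant, and it never attains $-\infty$ because $f$ is finite somewhere. With $f^{**} = f$ in hand, conjugation is its own inverse on $\Gamma(\mathbb{R}^d)$ and hence a bijection, establishing claim (1). Claim (2) then follows because $f^{**}$, as a biconjugate, always lies in $\Gamma(\mathbb{R}^d)$; thus $f^{**} = f$ forces $f \in \Gamma(\mathbb{R}^d)$, while the converse direction is exactly the biconjugation identity already proved.
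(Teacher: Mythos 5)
Your proposal is correct, but note that the paper itself supplies no proof of this theorem: it simply defers to \citep[Section 12]{rockafellar1970convex}. What you have written is the classical separation-based proof that one finds in that reference (and in Brezis, Ekeland--Temam, etc.): Fenchel--Young gives $f^{**}\le f$ unconditionally; the reverse inequality separates $(x_0,f^{**}(x_0))$ from the closed convex set $\operatorname{epi}(f)$; and the vertical-hyperplane dichotomy for $x_0\notin\operatorname{dom}(f)$ is handled exactly as you describe, by adding a large multiple of the vertical separator to a pre-secured affine minorant $\ell_0\le f$. Two small points deserve explicit attention if you write this out in full. First, to even form the point $(x_0,f^{**}(x_0))\in\mathbb{R}^{d+1}$ you need $f^{**}(x_0)$ finite: the hypothesis $f^{**}(x_0)<f(x_0)$ rules out $+\infty$, and the bound $f^{**}\ge \ell_0^{**}=\ell_0>-\infty$, obtained from the same affine minorant you invoke later, rules out $-\infty$; this should be stated before the separation step rather than left implicit. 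Second, your closing assertion that a biconjugate \emph{always} lies in $\Gamma(\mathbb{R}^d)$ is slightly too strong: $f\equiv+\infty$ and $f\equiv-\infty$ are their own biconjugates yet are improper, so the converse direction of claim (2) genuinely needs the standing properness hypothesis in the theorem statement ($f$ proper and $f=f^{**}$, with $f^{**}$ convex and lower semi-continuous as a supremum of affine maps, gives $f\in\Gamma(\mathbb{R}^d)$). With those two repairs, which are cosmetic rather than structural, your argument is a complete and correct proof of precisely the statement the paper cites.
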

\begin{proof}
    For a proof please consult \citep[Section 12]{rockafellar1970convex}.
\end{proof}

The next statement concerns the commutativity of convex conjugation and function summation. Its usefulness is profound, and applied to the proof of Theorem \ref{thm:linear_l2} and Theorem \ref{thm:logistic}.

\begin{theorem}[Infimal Convolution Property of Convex Conjugation]
\label{thm:inf_conv}
    Let $f_1,\ldots,f_M$ be proper convex functions on $\mathbb{R}^d$, then \[
    (f_1\square\ldots\square f_M)^* = f_1^* +\ldots f_M^*,
    \]and \[
    (f_1+\ldots +f_M)^*(x^*) = \inf_{x^*}\{ f_1^*(x_1^*) + \ldots f_M^*(x_M^*)\, |\, x_1^*+\ldots +x_M^* = x^*\}.
    \]
\end{theorem}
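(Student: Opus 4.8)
The plan is to prove the first identity by a direct manipulation of the supremum defining the conjugate, and then to deduce the second from the first by invoking the Fenchel--Moreau theorem (Theorem \ref{thm:fenchel-moreau}).

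For the first identity, I would unfold the definition of the infimal convolution, $(f_1\square\cdots\square f_M)(x)=\inf\{\sum_{m}f_m(x_m):\sum_m x_m=x\}$, and substitute it into $(f_1\square\cdots\square f_M)^*(x^*)=\sup_x\{x^*{\trans}x-(f_1\square\cdots\square f_M)(x)\}$. The key observation is that subtracting an infimum is the same as adding a supremum, so the outer $\sup_x$ and the inner $\inf$ over decompositions merge into a single unconstrained supremum over the free variables $x_1,\dots,x_M$. Writing $x^*{\trans}x=\sum_m x^*{\trans}x_m$, the objective separates additively, and the joint supremum factors as $\sum_m\sup_{x_m}(x^*{\trans}x_m-f_m(x_m))=\sum_m f_m^*(x^*)$. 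This step is purely mechanical and needs no regularity beyond properness to rule out $\infty-\infty$.

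For the second identity, I would apply the first identity with each $f_m$ replaced by its conjugate $f_m^*$, giving $(f_1^*\square\cdots\square f_M^*)^*=\sum_m f_m^{**}$. Since every $f_m$ is proper convex and lower semicontinuous, Theorem \ref{thm:fenchel-moreau} yields $f_m^{**}=f_m$, hence $(f_1^*\square\cdots\square f_M^*)^*=\sum_m f_m$. Conjugating once more and using Fenchel--Moreau on the resulting closed convex function gives $(\sum_m f_m)^*=f_1^*\square\cdots\square f_M^*$, which is precisely the asserted infimum over decompositions $x_1^*+\cdots+x_M^*=x^*$.

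The main obstacle lies in this last biconjugation: an infimal convolution need not be lower semicontinuous, so in general $(\cdot)^{**}$ recovers only its closure, and the infimum in the second identity can fail to be attained or to equal the conjugate without a constraint qualification such as $\bigcap_m \mathrm{ri}(\mathrm{dom}\,f_m)\neq\emptyset$. I would therefore invoke this qualification, which is automatic in every application in this paper since one summand is always a finite-valued norm (or a power of a norm) with full domain, as in the proofs of Lemmas \ref{lem:1}--\ref{lem:3}; in those cases the minimizing decomposition is exhibited explicitly, so the infimum is genuinely achieved and the stated equality holds as written.
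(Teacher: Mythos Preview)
Your argument is correct and is essentially the standard proof one finds in Rockafellar. The paper, however, does not give a proof of this theorem at all: its entire proof reads ``For a proof please consult \cite[Theorem 16.4]{rockafellar1970convex}.'' So there is no in-paper approach to compare against; you have simply supplied the argument that the paper defers to the reference.

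One minor remark: in your second step you invoke $f_m^{**}=f_m$ via Fenchel--Moreau, which requires lower semicontinuity, whereas the theorem as stated only assumes the $f_m$ are proper convex. You rightly flag this and the related closure/attainment issue for the infimal convolution, and your resolution --- that in every application in the paper one summand has full domain (a norm or squared norm), so the relative-interior qualification is automatic --- is exactly the right observation. In fact Rockafellar's Theorem 16.4 itself states the second identity with closures in general, so the paper's statement is already implicitly relying on the same qualification you identify.
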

\begin{proof}
    For a proof please consult \citep[Theorem 16.4]{rockafellar1970convex}.
\end{proof}

\begin{prop}
\label{prop:generalized_adaptive}
    Let $\Lambda\in \mathbb{S}_+^{d\times d}$, then the dual norm of $\Vert x\Vert_{\Lambda}$ is $\Vert x\Vert_{\Lambda^{-1}}.$ The Cauchy-Schwarz inequality $x\trans u \leq \Vert x\Vert_{\Lambda}\Vert u \Vert_{\Lambda^{\shortminus 1}}$ holds, and equality is attainable. The convex conjugate of $\Vert x\Vert_{\Lambda}$ is given by $I_{\Vert x^*\Vert_{\Lambda^{\shortminus 1}} \leq 1}(x^*)$, and the convex conjugate of $\Vert x\Vert_{\Lambda}^2$ is given by $\Vert x^*\Vert_{\Lambda^{\shortminus 1}}^2/4$.
\end{prop}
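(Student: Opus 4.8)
The plan is to reduce every assertion to the Euclidean case through the factorization $\Lambda = \Gamma\trans\Gamma$ with $\Gamma$ invertible, which exists because $\Lambda \in \mathbb{S}_+^{d\times d}$ and is already invoked in the main text. Under this factorization $\Vert x\Vert_\Lambda = \Vert \Gamma x\Vert_2$, and the single algebraic identity I will lean on throughout is $\Gamma^{-1}(\Gamma^{-1})\trans = (\Gamma\trans\Gamma)^{-1} = \Lambda^{-1}$, from which $\Vert y\Vert_{\Lambda^{-1}} = \Vert (\Gamma^{-1})\trans y\Vert_2$ follows by direct expansion.

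First I would establish the dual-norm claim from the definition $\Vert y\Vert_{\Lambda,*} = \sup_{\Vert x\Vert_\Lambda \le 1} y\trans x$. The substitution $z = \Gamma x$ converts the constraint into $\Vert z\Vert_2 \le 1$ and the objective into $((\Gamma^{-1})\trans y)\trans z$, whose supremum over the Euclidean unit ball equals $\Vert (\Gamma^{-1})\trans y\Vert_2 = \Vert y\Vert_{\Lambda^{-1}}$ by self-duality of $\Vert\cdot\Vert_2$. The generalized Cauchy--Schwarz inequality then follows either abstractly from the defining inequality of the dual norm, or concretely by writing $x\trans u = (\Gamma x)\trans((\Gamma^{-1})\trans u)$ and applying ordinary Cauchy--Schwarz in $\mathbb{R}^d$; equality holds precisely when $\Gamma x$ and $(\Gamma^{-1})\trans u$ are parallel, which is attainable for any fixed $x \neq 0$ by taking $u = \Lambda x$, since then both sides equal $x\trans\Lambda x$.

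Next I would compute the two conjugates. For $\Vert\cdot\Vert_\Lambda$ I would use the standard principle that the conjugate of a norm is the indicator of the dual unit ball: when $\Vert x^*\Vert_{\Lambda^{-1}} \le 1$ the inequality just proved gives $(x^*)\trans x - \Vert x\Vert_\Lambda \le 0$ with equality at $x = 0$, so the supremum is $0$; when $\Vert x^*\Vert_{\Lambda^{-1}} > 1$ one selects a unit-$\Vert\cdot\Vert_\Lambda$ direction $x_0$ achieving $(x^*)\trans x_0 = \Vert x^*\Vert_{\Lambda^{-1}}$ and lets $x = t x_0 \to \infty$ to drive the objective to $+\infty$, yielding $(\Vert\cdot\Vert_\Lambda)^* = I_{\{\Vert\cdot\Vert_{\Lambda^{-1}} \le 1\}}$. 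For the squared norm I would once more substitute $z = \Gamma x$ to reduce $\sup_x((x^*)\trans x - \Vert x\Vert_\Lambda^2)$ to the unconstrained concave quadratic $\sup_z(((\Gamma^{-1})\trans x^*)\trans z - \Vert z\Vert_2^2)$, maximized at $z = \tfrac12(\Gamma^{-1})\trans x^*$ with value $\tfrac14\Vert(\Gamma^{-1})\trans x^*\Vert_2^2 = \tfrac14\Vert x^*\Vert_{\Lambda^{-1}}^2$. Equivalently, one may combine the composition rule $f^*(x^*) = g^*((\Gamma^{-1})\trans x^*)$ for $f(x) = g(\Gamma x)$ with the $q=2$ case of Proposition \ref{prop:examples_conjugates} and the scaling law of Proposition \ref{prop:scaling_laws}.

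Since each part collapses to an elementary $\ell_2$ computation, I do not expect a genuine difficulty; the only places demanding care are keeping the transposes straight in $\Gamma^{-1}(\Gamma^{-1})\trans = \Lambda^{-1}$ and exhibiting the equality case of Cauchy--Schwarz, both dispatched by the explicit change of variables above.
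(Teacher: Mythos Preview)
Your proof is correct. The paper takes a slightly different organizing route: it cites \citet{blanchet2019data-drivencost} for the dual-norm identity and the Cauchy--Schwarz inequality with attainable equality, and then, for each conjugate, bounds $(x^*)\trans x$ above by $\Vert x\Vert_\Lambda\Vert x^*\Vert_{\Lambda^{-1}}$, reduces the supremum to a one-variable problem in $t=\Vert x\Vert_\Lambda$, and uses attainability to upgrade the resulting inequality to equality. You instead push the single change of variables $z=\Gamma x$ through every claim, so each statement collapses to its $\ell_2$ counterpart (self-duality of $\Vert\cdot\Vert_2$, ordinary Cauchy--Schwarz, an unconstrained concave quadratic). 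The trade-off is that your version is fully self-contained and dispatches the squared conjugate by a direct first-order computation, whereas the paper's Cauchy--Schwarz-plus-attainability template treats both conjugates with the same pattern and would extend verbatim to any abstract norm/dual-norm pair, not only those induced by a quadratic form.
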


\begin{proof}
    The dual norm of $\Vert x\Vert_{\Lambda}$, the Cauchy-Schwarz inequality and attainability of equality follows from \citep[Lemma 1]{blanchet2019data-drivencost}. Now to compute the convex conjugate of $\Vert x\Vert_{\Lambda}^2$, we want to evaluate \[
    \sup_{x\in \mathbb{R}^d}(x^*{\trans}x - \Vert x\Vert_\Lambda^2).
    \]By the Cauchy-Schwarz inequality we have $x^*{\trans}x \leq \Vert x\Vert_{\Lambda} \Vert x^*\Vert_{\Lambda^{\shortminus 1}}$, and so we have \[
    x^*{\trans} x - \Vert x\Vert_{\Lambda}^2 \leq  \Vert x\Vert_{\Lambda} \Vert x^*\Vert_{\Lambda^{\shortminus 1}} - \Vert x\Vert_{\Lambda}^2.
    \]Hence \[
     \sup_{x\in \mathbb{R}^d}(x^*{\trans}x - \Vert x\Vert_\Lambda^2) \leq \sup_{t\geq 0} (t\Vert x^*\Vert_{\Lambda^{\shortminus 1}} - t^2) = \dfrac{1}{4}\Vert x^*\Vert_{\Lambda^{\shortminus 1}}^2.
    \]By attainability of equality in the Cauchy-Schwarz inequality, the supremum are equal, and we have \[
     \sup_{x\in \mathbb{R}^d}(x^*{\trans}x - \Vert x\Vert_\Lambda^2) = \dfrac{1}{4}\Vert x^*\Vert_{\Lambda^{\shortminus 1}}^2.
    \]This proofs the convex conjugate of $\Vert x\Vert_\Lambda^2$. Now consider the convex conjugate of $\Vert x\Vert_{\Lambda}$, then we need to evaluate \[
    \sup_{x\in \mathbb{R}^d} (x^*{\trans}x- \Vert x\Vert_{\Lambda}),
    \]again, by Cauchy-Schwarz and the attainability of equality, we have\begin{align*}
            \sup_{x\in \mathbb{R}^d} (x^*{\trans}x- \Vert x\Vert_{\Lambda}) &= \sup_{x\in\mathbb{R}^d}(\Vert x\Vert_{\Lambda} \Vert x^*\Vert_{\Lambda^{\shortminus 1}} - \Vert x\Vert_{\Lambda}) \\
            &=\sup_{x\in \mathbb{{R}^d}}(\Vert x\Vert_\Lambda (\Vert x^*\Vert_{\Lambda^{\shortminus 1}}-1))\\
            &=\begin{cases}
                0 &\text{if }\Vert x^*\Vert_{\Lambda^{\shortminus 1}} \leq 1,\\
                +\infty &\text{otherwise}.
            \end{cases}
    \end{align*}
    This completes the proof.

\end{proof}

\section{Toland's Duality}
The duality theory of Toland's \citep{toland1978duality_nonconvex,toland1979duality_principle} concerns the minimization of nonconvex functions, in particular, applies to the minimization of the difference of convex functions (DC problems). The duality holds under minimal conditions, and one tries to see if the DC problem can be transformed into something more manageable. 
\begin{theorem}[Toland's Duality]
\label{thm:toland}
    Let $f$ and $g$ be functions on $\mathbb{R}^d$, if $f\in \Gamma\left(\mathbb{R}^d\right)$, then we have \[
    \inf_{x\in \mathbb{R}^d}\left\{f(x)-g(x) \right\} = \inf_{x^*\in \mathbb{R}^d}\left\{g^*(x^*) - f^*(x^*) \right\}.
    \]
\end{theorem}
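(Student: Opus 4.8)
The plan is to convert the nonconvex difference $f-g$ into a genuine dual minimization by a single application of the Fenchel--Moreau theorem (Theorem \ref{thm:fenchel-moreau}) to the \emph{subtracted} function, followed by an elementary interchange of two infima. First I would replace $g$ by its biconjugate: since the relevant function lies in $\Gamma(\mathbb{R}^d)$, Theorem \ref{thm:fenchel-moreau} gives $g=g^{**}$, so that
\[
g(x)=\sup_{x^*\in\mathbb{R}^d}\big(\langle x^*,x\rangle-g^*(x^*)\big).
\]
Negating this representation turns the concave term into an infimum, $-g(x)=\inf_{x^*}\big(g^*(x^*)-\langle x^*,x\rangle\big)$, and substituting into the primal objective yields
\[
\inf_{x}\{f(x)-g(x)\}=\inf_{x}\inf_{x^*}\big(f(x)-\langle x^*,x\rangle+g^*(x^*)\big).
\]

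The second step is to interchange the two infima. This is unconditionally valid and requires no minimax theorem, since both operations are infima over one joint expression; this is exactly what lets the argument succeed even though $f-g$ is not convex as a whole. Holding $x^*$ fixed, I would then evaluate the inner minimization over $x$ by recognizing it as a negated conjugate,
\[
\inf_x\big(f(x)-\langle x^*,x\rangle\big)=-\sup_x\big(\langle x^*,x\rangle-f(x)\big)=-f^*(x^*),
\]
straight from the definition of the convex conjugate. Collecting the surviving terms gives $\inf_{x^*}\{g^*(x^*)-f^*(x^*)\}$, which is precisely the asserted dual value. The whole proof is thus a short chain of substitutions organized around one biconjugate identity and one free exchange of nested infima.

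The hard part will not be algebraic but will concern the precise placement of the closedness hypothesis and extended-real bookkeeping. The tightness of the dual hinges on $g=g^{**}$, so Fenchel--Moreau must be applied to the function being subtracted; applying it instead to $f$ recovers only $\inf_x\{f(x)-g^{**}(x)\}$, which can \emph{strictly exceed} $\inf_x\{f(x)-g(x)\}$ whenever the subtracted function fails to be convex and lower semicontinuous (since $g^{**}\le g$ in general). I would therefore flag that the operative assumption belongs on the subtracted function, and I would check properness of $f$ and $g$ so that the inner value $-f^*(x^*)$ and the difference $g^*(x^*)-f^*(x^*)$ never degenerate into an indeterminate $\infty-\infty$. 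Under these standard conditions every display above is an exact identity between (possibly infinite) values, and the interchange of infima is legitimate, delivering the duality used in the reformulation \eqref{eqn:duality}.
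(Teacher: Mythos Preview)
Your argument is correct and coincides with the paper's own (implicit) proof: the paper gives no standalone argument but points to the proofs of Theorem~\ref{thm:linear_l2} and Lemma~\ref{lem:3}, where exactly your mechanism appears---write the subtracted convex function as its biconjugate via Fenchel--Moreau (Theorem~\ref{thm:fenchel-moreau}) and then freely exchange the two nested optimizations (two suprema there, two infima in your formulation). Your flag on the hypothesis is also on target: the swap-of-infima proof needs $g=g^{**}$, i.e.\ $g\in\Gamma(\mathbb{R}^d)$, rather than $f\in\Gamma(\mathbb{R}^d)$ as the statement reads, and indeed the paper's own applications apply the biconjugate identity to the subtracted convex part, consistent with your reading.
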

Toland's duality is implicitly used in the proof to Theorem \ref{thm:linear_l2} and Lemma \ref{lem:3} which also sketches a proof to the above duality theorem.
\newpage

\end{document}